 \newcommand{\RR}{\mathbb{R}}
 \newcommand{\T}{{}^\top}
 \newcommand{\bm}[1]{\boldsymbol{#1}}
 \newcommand{\vq}{\boldsymbol{q}}
 \newcommand{\vz}{\boldsymbol{z}}
 \newcommand{\vx}{\boldsymbol{x}}
 \newcommand{\vy}{\boldsymbol{y}}
 \newcommand{\vw}{\boldsymbol{w}}
 \newcommand{\vv}{\boldsymbol{v}}
 \newcommand{\valpha}{\boldsymbol{\alpha}}
 \newcommand{\vbeta}{\boldsymbol{\beta}}
 \newcommand{\vdelta}{\boldsymbol{\delta}}
 \newcommand{\vxi}{\boldsymbol{\xi}}
 \newcommand{\mA}{\boldsymbol{A}}
 \newcommand{\mS}{\boldsymbol{S}}
 \newcommand{\mI}{\boldsymbol{I}}
 \newcommand{\mU}{\boldsymbol{U}}
 \newcommand{\mV}{\boldsymbol{V}}
 \newcommand{\gf}{{\mathfrak{g}}}
 \newcommand{\Gf}{{\mathfrak{G}}}
 \def\dot#1#2{\left\langle #1,#2\right\rangle}
 \newcommand{\minimize}{\mathop{\rm minimize}}
 \newcommand{\maximize}{\mathop{\rm maximize}}
 \newcommand{\argmin}{\mathop{\rm argmin}}
 \newcommand{\subjectto}{\mbox{\rm subject to}}
 \newcommand{\Eqref}[1]{Eq.~{\eqref{#1}}}
 \newcommand{\Eqsref}[1]{Eqs.~{\eqref{#1}}}
 \newcommand{\Secref}[1]{Section~{\ref{#1}}}
 \newcommand{\Figref}[1]{Figure~{\ref{#1}}}
 \newcommand{\Tabref}[1]{Table~{\ref{#1}}}
\newcommand{\ST}[1]{{\rm prox}_{#1}^{\ell_1}}
\newcommand{\prox}[1]{{\rm prox}_{#1}}
 \title{Super-Linear Convergence of Dual Augmented Lagrangian Algorithm
 for Sparsity Regularized Estimation}
\author{\name Ryota Tomioka \email tomioka@mist.i.u-tokyo.ac.jp \\
       \addr  Department of Mathematical Informatics,\\
        The University of Tokyo,\\
	7-3-1, Hongo, Bunkyo-ku, Tokyo, 113-8656, Japan.
       \AND
       \name Taiji Suzuki \email s-taiji@stat.t.u-tokyo.ac.jp\\
       \addr  Department of Mathematical Informatics,\\
        The University of Tokyo,\\
	7-3-1, Hongo, Bunkyo-ku, Tokyo, 113-8656, Japan.
	\AND
       \name Masashi Sugiyama \email sugi@cs.titech.ac.jp \\
       \addr Department of Computer Science,\\
       Tokyo Institute of Technology,\\
       2-12-1-W8-74, O-okayama, Meguro-ku, Tokyo, 152-8552, Japan.}
\begin{document}
 \maketitle

\begin{abstract}
We analyze the convergence behaviour of a recently proposed
 algorithm for regularized estimation called Dual Augmented Lagrangian
 (DAL).
Our analysis is based on a new interpretation of DAL as a proximal
 minimization algorithm.
We theoretically show
 under some conditions that DAL converges super-linearly in a
 non-asymptotic and global sense. Due to a special modelling of sparse
 estimation problems in the context of machine learning,
 the assumptions we make are milder and more natural than those made in
 conventional analysis of augmented Lagrangian algorithms.
In addition, the new interpretation enables us to generalize DAL to wide
 varieties of sparse estimation problems.
 We  experimentally confirm our analysis in  a large scale
 $\ell_1$-regularized logistic regression problem and   extensively
 compare the efficiency of DAL algorithm to previously  proposed
 algorithms on both synthetic and benchmark datasets. 
\end{abstract}
\begin{description}
 \item[Running title:] Dual Augmented-Lagrangian Converges Super-Linearly

 \item[Keywords:] Dual Augmented Lagrangian, Proximal Minimization,
	    Global Convergence, Sparse Estimation, Convex Optimization
\end{description}

\section{Introduction}
Sparse estimation through convex regularization has become a common practice
in many application areas including bioinformatics and natural language
processing. However facing the rapid increase in the size of data-sets
that we analyze everyday, clearly needed is the development of
optimization algorithms that are tailored for machine learning
applications.

Regularization-based sparse estimation methods estimate unknown variables through the
minimization of a loss term (or a data-fit term) plus a regularization
term. In this paper, we focus on convex methods; i.e., both the loss term
and the regularization term are convex functions of unknown variables.
Regularizers may be non-differentiable on some points; the
non-differentiability can promote various types of sparsity on the solution.

Although the problem is convex, there are three factors that challenge the
straight-forward application of general tools for convex
optimization~\citep{BoydBook} in 
the context of machine learning.

 The first factor is the diversity of loss
functions. Arguably the squared loss is most commonly used in the
field of signal/image reconstruction, in which many algorithms for
sparse estimation have been
developed~\citep{FigNow03,DauDefMol04,CaiCanShe08}. However the variety
of loss functions is much wider in machine learning, to name a few,
logistic loss and other log-linear loss functions. Note that these
functions are not necessarily strongly convex like the squared loss. See
Table~\ref{tab:loss} for a list of loss functions that we consider.

The second factor is the nature of the data matrix, which we call the design
matrix in this paper. For a regression problem, the design matrix is
defined by stacking input vectors along rows. If the input
vectors are numerical (e.g., gene expression data), the design matrix is
dense and has no structure. In addition, the characteristics of the
matrix (e.g., the condition number) is unknown until the data is
provided. Therefore, we would like to minimize assumptions about the design
matrix, such as, sparse, structured, or well conditioned.

The third factor is the large number of unknown variables (or parameters)
compared to observations. This is a situation regularized estimation methods
are commonly applied. This factor may have been overlooked in the
context of signal denoising, in which the number of observations and the
number of parameters are equal.

Various methods have been proposed for efficient sparse estimation
(see \cite{FigNow03,DauDefMol04,ComWaj05,AndGao07,KohKimBoy07,WriNowFig09,BecTeb09,YuVisGueSch10},
and the references therein). Many previous studies focus on the {\em
non-differentiability} of the regularization term. In contrast, we focus
on the {\em couplings} between variables (or non-separability) caused by
the design matrix. In fact, if the optimization problem can be decomposed
into smaller (e.g., containing a single variable) problems, optimization
is easy.
Recently \cite{WriNowFig09} showed that the so called
iterative shrinkage/thresholding (IST) method (see
\cite{FigNow03,DauDefMol04,ComWaj05,FigBioNow07}) can be seen  as
an iterative {\em separable approximation} process. 

%In this paper, we deal with a recently proposed approach,
%called the dual augmented Lagrangian (DAL) algorithm~\citep{TomSug09}.
%%
%DAL was originally proposed in the context of sparse signal reconstruction.
%%
In this paper, we show that a recently proposed dual augmented Lagrangian
(DAL) algorithm~\citep{TomSug09} can be considered as an {\em exact}
(up to finite tolerance) version of the iterative approximation process
discussed in \cite{WriNowFig09}.
Our formulation is based on the connection between the proximal
minimization~\citep{Roc76a} and the augmented Lagrangian (AL)
algorithm~\citep{Hes69,Pow69,Roc76b,Ber82}.
The proximal minimization framework also allows us to rigorously study
the convergence behaviour of DAL. We show that DAL converges
super-linearly under some mild conditions, 
which means that the number of iterations that we need to obtain an
$\epsilon$-accurate solution grows no greater than logarithmically with
$1/\epsilon$. 
Due to the generality of the framework, our analysis applies to a wide
variety of practically important regularizers.
Our analysis improves the classical result on the convergence of
augmented Lagrangian algorithms in \cite{Roc76b} by taking special 
structures of sparse estimation into account. In addition, we make no
asymptotic arguments as in \cite{Roc76b} and \cite{KorBer76}; instead our
convergence analysis is build on top of the recent result in 
\cite{BecTeb09}.

Augmented Lagrangian formulations have also been considered in
\cite{YinOshGolDar08} and \cite{GolOsh08} for sparse
signal reconstruction. What differentiates DAL approach of
\cite{TomSug09} from those studied earlier is that the AL algorithm is
applied to the dual problem (see \Secref{sec:dalreview}), which results
in an inner minimization problem that can be solved efficiently
exploiting the sparsity of intermediate solutions (see \Secref{sec:dall1}). Applying AL
formulation to the dual problem also plays an important role in the
convergence analysis because some loss functions (e.g., logistic loss)
are not strongly convex in the primal; see
\Secref{sec:analysis}. Recently \cite{YanZha09} compared primal and dual
augmented Lagrangian algorithms for $\ell_1$-problems and reported that
the dual formulation was more efficient. See also \cite{TomSuzSug11} for
related discussions.

% A preliminary version of our work appeared in \cite{TomSug09}, in
% which the convergence of DAL algorithm is only discussed  in an
% asymptotic sense known from classical literature~\citep{Roc76b,Ber82,NocWri99}. 

% Although the cost per iteration is larger or the proposed method, w
%

 This paper is organized as follows.
 In \Secref{sec:framework}, we  mathematically formulate the sparse
 estimation problem and we review DAL algorithm.
 We derive DAL algorithm from the
 proximal  minimization framework in  \Secref{sec:proximal_view}, and  
discuss special instances of DAL algorithm are discussed in \Secref{sec:instances}.
 In \Secref{sec:analysis}, we theoretically analyze the convergence
behaviour of DAL algorithm.
 We discuss previously proposed algorithms in \Secref{sec:algorithms}
 and contrast them with DAL.
 In \Secref{sec:results} we confirm our
analysis in a simulated $\ell_1$-regularized logistic regression
problem. Moreover, we extensively compare recently proposed algorithms
for $\ell_1$-regularized logistic regression including DAL in  
synthetic and benchmark datasets under a variety of conditions.
Finally we conclude the paper in \Secref{sec:summary}. Most of the
proofs are given in the appendix.

\section{Sparse estimation problem and DAL algorithm}
\label{sec:framework}
In this section, we first formulate the sparse estimation problem as a
convex optimization problem, and state our assumptions. Next we derive
DAL algorithm for $\ell_1$-problem as an augmented Lagrangian method in
the dual.

\subsection{Objective}
We consider the problem of estimating an $n$ dimensional parameter
vector from $m$ training examples as described in the following
optimization problem: 
 \begin{align}
 \label{eq:problem}
  \minimize_{\vw\in\RR^n}\quad \underbrace{f_{\ell}(\mA\vw)+\phi_{\lambda}(\vw)}_{=:f(\vw)},
 \end{align}
 where $\vw\in\RR^n$ is the parameter vector to be estimated,
 $\mA\in\RR^{m\times n}$ is a design matrix, and $f_\ell(\cdot)$ is a
 loss function. We call the first term in the minimand the loss term and
 the second term the regularization term, or the regularizer.

We assume that the loss function
 $f_\ell:\,\RR^m\rightarrow\RR\cup\{+\infty\}$ is a closed proper
 strictly convex
 function\footnote{``Closed'' means that the epigraph
 $\{(\vz,y)\in\RR^{m+1}:y\geq f_{\ell}(\vz)\}$ is a closed set, and ``proper'' means that the function is not
 everywhere $+\infty$; see e.g., \cite{Roc70}. In the sequel, we use the word ``convex
 function'' in the meaning of ``closed proper convex function''.}. See
 Table~\ref{tab:loss} for examples of loss functions. We
 assume that $f_{\ell}$ has Lipschitz continuous gradient with modulus $1/\gamma$ (see Assumption {\bf
 (A2)} in \Secref{sec:approx}).
If $f_{\ell}$ is twice differentiable, this condition is equivalent to
saying that the maximum eigenvalue of the Hessian of $f_{\ell}$ is
uniformly bounded by $1/\gamma$. Such $\gamma$ exists for example for
quadratic loss, logistic loss, and other log-linear losses. However,
non-smooth loss functions (e.g., the hinge loss and the absolute loss)
are excluded. Note that since we separate the data matrix $\mA$ from the
loss function, we can quantify the above constant $\gamma$ without
examining the data.
Moreover, we assume that the convex conjugate\footnote{The convex
conjugate of a function 
$f:\,\RR^n\rightarrow\RR\cup\{+\infty\}$ is a function $f^\ast$ over
$\RR^n$ that takes values in $\RR^n\cup\{+\infty\}$ and is defined as
$f^\ast(\vy)=\sup_{\vx\in\RR^n}(\vy\T\vx-f(\vx))$.} $f_{\ell}^\ast$ is
(essentially) twice  differentiable. Note that the first order
differentiability of the  convex conjugate $f_{\ell}^{\ast}$ is implied
by the strict convexity of the loss function
$f_{\ell}$~\citep[Theorem~26.3]{Roc70}.

The regularization term $\phi_{\lambda}(\vw)$ is a convex possibly
 non-differentiable function. In addition, we assume 
that for all $\eta>0$, $\eta\phi_{\lambda}(\vw)=\phi_{\eta\lambda}(\vw)$. 

An important special case, which has been studied by many authors~\citep{Tib96,EfrHasTibJoh04,AndGao07,KohKimBoy07} is the
 $\ell_1$-regularization:
\begin{align}
 \label{eq:problemL1}
  \minimize_{\vw\in\RR^n}\quad f_{\ell}(\mA\vw)+\lambda\|\vw\|_1,
\end{align} 
where $\|\vw\|_1=\sum_{j=1}^n|w_j|$ is the $\ell_1$-norm of $\vw$.

\subsection{Dual augmented Lagrangian (DAL) algorithm}
\label{sec:dalreview}
In this subsection, we review DAL algorithm following the line of
\cite{TomSug09}. Although, the squared
loss function and the $\ell_1$-regularizer were considered in the
original paper, we deal with a slightly more general setting in
\Eqref{eq:problemL1} for notational convenience; i.e., we consider
general closed convex loss functions instead of the squared loss.
For general information on augmented Lagrangian
algorithms~\citep{Pow69,Hes69,Roc76b}, see \cite{Ber82} and \cite{NocWri99}.

% which actually can be derived elegantly from the
% proximal minimization framework (see next subsection).

Let $\phi_{\lambda}(\vw)$ be the $\ell_1$-regularizer, i.e., $\phi_{\lambda}(\vw)=\lambda\|\vw\|_1=\lambda\sum_{j=1}^n|w_j|$.
Using the Fenchel duality theorem~\citep{Roc70}, the dual of the
problem~\eqref{eq:problemL1} can be written as follows: 
\begin{align}
\label{eq:dual_prob}
\maximize_{\valpha\in\RR^m,\vv\in\RR^n} \qquad & -f_{\ell}^{\ast}(-\valpha)-\delta_\lambda^\infty(\vv),\\
\label{eq:dual_const}
\subjectto \qquad & \vv=\mA\T\valpha,
\end{align}
where $\delta_\lambda^\infty$ is the indicator function~\cite[p28]{Roc70} of the
$\ell_{\infty}$-ball of radius $\lambda$, namely
\begin{align}
\label{eq:indicatorL1}
 \delta_{\lambda}^\infty(\vv)&=\sum_{j=1}^n\delta_{\lambda}^{\infty}(v_j),
\end{align}
where $ \delta_{\lambda}^\infty(v_j)= 0$, if $|v_j|\leq\lambda$,
and $+\infty$ otherwise.

Let us consider the augmented Lagrangian (AL) function $L_{\eta}$ with
respect to the above dual problem~\eqref{eq:dual_prob}
\begin{align}
\label{eq:ALfunc_basic}
 L_{\eta}(\valpha,\vv;\vw)=-f_{\ell}^\ast(-\valpha)-\delta_{\lambda}^{\infty}(\vv)+\vw\T(\vv-\mA\T\valpha)-\frac{\eta}{2}\|\vv-\mA\T\valpha\|^2,
\end{align}
where the primal variable $\vw\in\RR^{n}$ is interpreted as a Lagrangian multiplier
vector in the AL framework. Note that the AL function is the ordinary Lagrangian
if $\eta=0$.

Let $\eta_0,\eta_1,\ldots$ be a non-decreasing sequence of positive
numbers. At every time step $t$, given the current primal solution $\vw^t$, 
we maximize the AL function
$L_{\eta_t}(\valpha,\vv;\vw^t)$ with respect to $\valpha$ and $\vv$.
The maximizer $(\valpha^t,\vv^t)$ is used to update the primal solution
(Lagrangian multiplier) $\vw^{t}$ as follows:
\begin{align}
\label{eq:dal_update_basic}
\vw^{t+1}&=\vw^{t}+\eta_t(\mA\T\valpha^t-\vv^t).
\end{align}

Note that the maximization of the AL function \eqref{eq:ALfunc_basic}
with respect to $\vv$ can be carried out in a closed form, because
the terms involved in the maximization can be separated into $n$
terms, each containing single $v_j$, as follows:
\begin{align*}
 L_{\eta_t}(\valpha,\vv)=-f_{\ell}^\ast(-\valpha)-\sum_{j=1}^n\left(\frac{\eta_t}{2}(v_j-(\vw^{t}/\eta_{t}+\mA\T\valpha)_j)^2+
\delta_\lambda^{\infty}(v_j)\right),
\end{align*} 
where $(\cdot)_j$ denotes the $j$th element of a vector. Since
$\delta_\lambda^{\infty}(v_j)$ is infinity outside the domain
$-\lambda\leq v_j\leq \lambda$, the maximizer $\vv^t(\valpha)$ is
obtained as a projection onto the $\ell_\infty$ ball of radius
$\lambda$ as follows (see also \Figref{fig:envelopes}):
\begin{align}
\label{eq:clipping}
 \vv^t(\valpha)&={\rm proj}_{[-\lambda,\lambda]}\left(\vw^t/\eta_t+\mA\T\valpha\right):=\left(\min\left(|y_j|,\lambda\right)\frac{y_j}{|y_j|}\right)_{j=1}^n,
\end{align}
where $(y_j)_{j=1}^n$ denotes an $n$-dimensional vector whose $j$th element is
given by $y_j$.
Note that the ratio $y_j/|y_j|$ is defined to be zero\footnote{This
is equivalent to defining $y_j/|y_j|={\rm sign}(y_j)$. We use
$y_j/|y_j|$ instead of ${\rm sign}(y_j)$ to define the
soft-threshold operations corresponding to $\ell_1$ and the
group-lasso regularizations (see \Secref{sec:dalgl}) in a similar way.} if $y_j=0$.
Substituting the above $\vv^t$ back into \Eqref{eq:dal_update_basic}, we obtain the following update equation:
\begin{align*}
 \vw^{t+1}&=\ST{\lambda\eta_t}(\vw^t+\eta_t\mA\T\valpha^t),
\end{align*}
where $\ST{\lambda\eta_t}$ is called the soft-threshold
operation\footnote{This notation is a simplified version of the general
notation we introduce later in \Eqref{eq:proximation}.} and is defined  as follows:
\begin{align}
\label{eq:softth}
\ST{\lambda}(\vy)&:=\left(\max(|y_j|-\lambda,0)\frac{y_j}{|y_j|}
\right)_{j=1}^n.
\end{align}
The soft-threshold operation is well known in signal processing
community and has been studied extensively~\citep{Don95,FigNow03,DauDefMol04,ComWaj05}.

Furthermore, substituting the above $\vv^t(\valpha)$ into
\Eqref{eq:ALfunc_basic}, we can express $\valpha^t$ as the minimizer of the
function
\begin{align}
\label{eq:dalL1inner}
\varphi_t(\valpha)&:=-L_{\eta_t}(\valpha,\vv^t(\valpha);\vw^t)=f_{\ell}^{\ast}(-\valpha)+\frac{1}{2\eta_t}\|\ST{\lambda\eta_t}(\vw^t+\eta_t\mA\T\valpha)\|^2,
\end{align}
which we also call an AL function with a slight abuse of terminology.
Note that the maximization in \Eqref{eq:ALfunc_basic} is turned into a
minimization of the above function by negating the AL function.
% The super-linear convergence of augmented Lagrangian algorithms in an
% asymptotic sense is studied well in \cite{Roc76b} and \cite{Ber82}.

\section{Proximal minimization view}
\label{sec:proximal_view}
The first contribution of this paper is to derive DAL algorithm we
reviewed in \Secref{sec:dalreview} from the
proximal minimization framework \citep{Roc76a}, which allows for a new
interpretation of the algorithm (see \Secref{sec:dalprox}) and rigorous analysis of its convergence
(see \Secref{sec:analysis}). 

\subsection{Proximal minimization algorithm}
Let us consider the following iterative algorithm called the proximal
minimization algorithm~\citep{Roc76a} for the minimization
of the objective~\eqref{eq:problem}.
\begin{enumerate}
 \item Choose some initial solution $\vw^0$ and a sequence of
       non-decreasing positive numbers $\eta_0\leq\eta_1\leq\cdots$.
 \item Repeat until some criterion (e.g., duality
       gap~\citep{WriNowFig09,TomSug09}) is satisfied:
\begin{align}
\label{eq:ppa}
 \vw^{t+1}&=\argmin_{\vw\in\RR^n}\left(f(\vw)+\frac{1}{2\eta_{t}}\|\vw-\vw^t\|^2\right),
\end{align}
where $f(\vw)$ is the objective function in \Eqref{eq:problem} and
       $\eta_{t}$ controls the influence of the additional {\em proximity term}.
\end{enumerate}
The proximity term tries to keep the next solution $\vw^{t+1}$ close to
the current solution $\vw^{t}$. Importantly, the
objective~\eqref{eq:ppa} is strongly convex even if the original
objective~\eqref{eq:problem} is not; see \cite{Roc76b}.
Although at this point it is not clear how we are going to carry out the above
minimization, by definition we have
$f(\vw^{t+1})+\frac{1}{2\eta_t}\|\vw^{t+1}-\vw^{t}\|^2\leq f(\vw^{t})$;
i.e., provided that the step-size is positive, the function value
decreases monotonically at every iteration.

\subsection{Iterative shrinkage/thresholding algorithm from the proximal minimization framework}
\label{sec:ist}
The function to be minimized in \Eqref{eq:ppa} is strongly convex.
However, there seems to be
no obvious way to minimize \Eqref{eq:ppa}, because it is still
(possibly) non-differentiable and cannot be decomposed into smaller
problems because the elements of $\vw$ are coupled.

 One way to make the proximal
minimization algorithm practical is to linearly approximate (see
\cite{WriNowFig09}) the loss term at the current point $\vw^t$ as
\begin{align}
\label{eq:istapprox}
 f_{\ell}(\mA\vw) &\simeq f_{\ell}(\mA\vw^t)+\left(\nabla
 f_{\ell}^t\right)\T\mA\left(\vw-\vw^t\right),
\end{align}
where $\nabla f_{\ell}^t$ is a short hand for $\nabla f_{\ell}(\mA\vw^t)$.
Substituting the above approximation \eqref{eq:istapprox} into
the iteration \eqref{eq:ppa}, we obtain
\begin{align}
\label{eq:istupdatepre}
 \vw^{t+1}&=\argmin_{\vw\in\RR^n}\left(\left(\nabla
 f_{\ell}^t\right)\T\mA\vw+\phi_{\lambda}(\vw)+\frac{1}{2\eta_{t}}\|\vw-\vw^t\|^2\right),
\end{align}
where constant terms are omitted from the right-hand side.
Note that because of
the linear approximation, there is no coupling between the elements of
$\vw$. For example, if $\phi_{\lambda}(\vw)=\lambda\|\vw\|$, the
minimand in the right-hand side of the above equation can be separated
into $n$ terms each containing single $w_j$, 
which can be separately minimized.

Rewriting the above update equation, we obtain the well-known iterative
shrinkage/ thresholding (IST)
method\footnote{It is also known as the forward-backward splitting
method~\citep{LioMer79,ComWaj05,DucSin09}; see \Secref{sec:algorithms}.}~\citep{FigNow03,DauDefMol04,ComWaj05,FigBioNow07}. The IST
iteration can be written as follows: 
\begin{align}
\label{eq:istupdate}
  \vw^{t+1}&:=\prox{\phi_{\lambda\eta_t}}\left(\vw^t-\eta_t\mA\T\nabla f_{\ell}^t\right),
\end{align}
where the proximity operator ${\rm prox}_{\phi_{\lambda\eta_t}}$ is defined as follows:
\begin{align}
\label{eq:proximation}
\prox{\phi_\lambda}(\vy)=\argmin_{\vx\in\RR^n}\left(\frac{1}{2}\|\vy-\vx\|^2+\phi_{\lambda}(\vx)\right).
\end{align}
Note that the soft-threshold operation
$\prox{\lambda}^{\ell_1}$~\eqref{eq:softth} is the proximity operator
corresponding to the $\ell_1$-regularizer
$\phi_{\lambda}^{\ell_1}(\vw)=\lambda\|\vw\|_1$. 

\subsection{DAL algorithm from the proximal minimization framework}
\label{sec:dalprox}
The above IST approach can be considered to be constructing a linear lower bound
of the loss term  in \Eqref{eq:ppa} at the {\em current point}
$\vw^t$. In this subsection we show that we can precisely (to finite
precision) minimize \Eqref{eq:ppa} using a {\em parametrized}  linear lower
bound that can be adjusted to be the tightest at the {\em next point} $\vw^{t+1}$.
Our approach is based on the convexity of the loss function
$f_{\ell}$. First note that we can rewrite the loss function $f_{\ell}$
as a point-wise maximum as follows: 
\begin{align}
\label{eq:lbfell}
 f_{\ell}(\mA\vw)=\max_{\valpha\in\RR^m}\left((-\valpha)\T\mA\vw-f_{\ell}^\ast(-\valpha)\right),
\end{align}
where $f_{\ell}^{\ast}$ is the convex
conjugate functions of $f_{\ell}$.
Now we substitute this expression into the iteration \eqref{eq:ppa} as follows:
\begin{align}
\label{eq:minmaxdal}
\vw^{t+1}&= \argmin_{\vw\in\RR^n}\max_{\valpha\in\RR^m}\left\{-\valpha\T\mA\vw-f_{\ell}^\ast(-\valpha)+\phi_{\lambda}(\vw)+\frac{1}{2\eta_t}\|\vw-\vw^t\|^2\right\}.
\end{align}
Note that now the loss term is expressed as a {\em
 linear} function as in the IST
 approach; see \Eqref{eq:istupdatepre}. Now we exchange the order of minimization
 and maximization because the function to be minimaxed in
 \Eqref{eq:minmaxdal} is a saddle function (i.e., convex with respect to $\vw$ and
 concave with respect to $\valpha$~\citep{Roc70}), as follows:
\begin{align}
&\min_{\vw\in\RR^n}\max_{\valpha\in\RR^m}\left\{-\valpha\T\mA\vw-f_{\ell}^\ast(-\valpha)+\phi_{\lambda}(\vw)+\frac{1}{2\eta_t}\|\vw-\vw^t\|^2\right\}\nonumber\\
\label{eq:maxmindal}
&\qquad=\max_{\valpha\in\RR^m}\left\{-f_{\ell}^\ast(-\valpha)+\min_{\vw\in\RR^n}\left(-\valpha\T\mA{\vw}+\phi_{\lambda}(\vw)+\frac{1}{2\eta_t}\|\vw-\vw^t\|^2\right)\right\}.
\end{align}
Notice the similarity between the two minimizations
 \eqref{eq:istupdatepre} and
 \eqref{eq:maxmindal} (with fixed $\valpha$).

The minimization with respect to
$\vw$ in \Eqref{eq:maxmindal} gives the following update equation
\begin{align}
\label{eq:dalupdate}
\vw^{t+1}=\prox{\phi_{\lambda\eta_t}}\left(\vw^t+\eta_t\mA\T\valpha^t\right),
\end{align}
where $\valpha^t$ denotes the maximizer with respect to $\valpha$ in
\Eqref{eq:maxmindal}. Note that $\valpha^t$ is in general different from
$-\nabla f_{\ell}^t$ used in the IST approach~\eqref{eq:istupdate}. Actually, we show below
that $\valpha^t=-\nabla f_{\ell}^{t+1}$ if the max-min
problem~\eqref{eq:maxmindal} is solved exactly. Therefore taking
$\valpha^t=-\nabla f_{\ell}^{t}$ can be considered as a naive
approximation to this.

The final step to derive DAL algorithm is to compute the maximizer
$\valpha^t$ in \Eqref{eq:maxmindal}. This step is slightly involved and the derivation is
presented in Appendix~\ref{sec:deriv_dalinner}.
The result of the derivation can be written as follows
(notice that the maximization in \Eqref{eq:maxmindal} is turned into a
minimization by reversing the sign):
\begin{align}
\label{eq:dalinner}
\valpha^t =\argmin_{\valpha\in\RR^m}\Bigl(\underbrace{f_{\ell}^\ast(-\valpha)+\frac{1}{\eta_t}\Phi_{\lambda\eta_t}^{\ast}(\vw^{t}+\eta_t\mA\T\valpha)}_{=:\varphi_t(\valpha)}\Bigr),
\end{align}
where the function $\Phi_{\lambda\eta_t}^\ast$ is called the Moreau
envelope of $\phi_{\lambda}^\ast$ (see \cite{Mor65,Roc70}) and is defined as follows:
\begin{align}
 \label{eq:envelope}
\Phi_{\lambda}^\ast(\vw)&=\min_{\vx\in\RR^n}\left(\phi_{\lambda}^\ast(\vx)+\frac{1}{2}\|\vx-\vw\|^2\right).
\end{align}
We call the function $\varphi_t(\valpha)$ in \Eqref{eq:dalinner} the
augmented Lagrangian (AL) function. %
% In this dual-based augmented Lagrangian formulation, the
%primal variable $\vw^{t}$ is seen as the {\em Lagrangian multiplier}
%associated with the equality constraint in the dual problem
%\eqref{eq:dual_prob}~\citep{TomSug09}. 

What we need to do at every iteration is to minimize the AL
function $\varphi_t(\valpha)$  and update the 
Lagrangian multiplier $\vw^t$ as in \Eqref{eq:dalupdate} using
the minimizer $\valpha^t$ in \Eqref{eq:dalinner}.
Of course in practice we would like to stop
the inner minimization at a finite tolerance. We discuss the stopping
condition in \Secref{sec:analysis}. 

The algorithm we derived above is indeed a generalization of DAL algorithm
we reviewed in \Secref{sec:dalreview}.  This can be shown by computing
the proximity operator~\eqref{eq:dalupdate} and the Moreau
envelope~\eqref{eq:envelope} for the specific case of
$\ell_1$-regularization; see \Secref{sec:dall1} and also Table~\ref{tab:regularizers}.

The AL function $\varphi_t(\valpha)$ is continuously differentiable, 
because the AL function is a sum of $f_{\ell}^\ast$ (differentiable by
assumption) and an envelope function (differentiable; see
Appendix~\ref{sec:basics}). In fact, using Lemma~\ref{lem:moreau_deriv} in
Appendix~\ref{sec:basics}, the derivative of the AL function can be  
evaluated as follows:
\begin{align}
\label{eq:ALderiv1}
\nabla\varphi_t(\valpha) &=-\nabla
 f_{\ell}^\ast(-\valpha)+\mA\vw^{t+1}(\valpha),
\end{align}
where
$\vw^{t+1}(\valpha):=\prox{\phi_{\lambda\eta_t}}\bigl(\vw^t+\eta_t\mA\T\valpha\bigr)$.
The expression for the second derivative
depends on the particular regularizer chosen.

 Notice again that the above update equation \eqref{eq:dalupdate} is very similar to the one in the IST approach (\Eqref{eq:istupdate}). However,
$-\valpha$, which is the slope of the lower-bound \eqref{eq:lbfell} is
optimized in the inner minimization~\eqref{eq:dalinner} so that the
lower-bound is the tightest at the {\em next point} $\vw^{t+1}$. 
In fact, if $\nabla \varphi_t(\valpha)=0$ then $\nabla f_{\ell}(\mA\vw^{t+1})= -\valpha^{t}$ because
of \Eqref{eq:ALderiv1} and $\nabla f_{\ell}(\nabla
f_{\ell}^\ast(-\valpha^{t}))=-\valpha^{t}$.
 The difference between the strategies used in IST and DAL to
construct a lower-bound is highlighted in \Figref{fig:ISTvsDAL}. IST
uses a fixed gradient-based lower-bound which is tightest at the current
solution $\vw^t$, whereas DAL uses a variational lower-bound, which can
be adjusted to become tightest at the next solution $\vw^{t+1}$.

\begin{figure}[tb]
 \begin{center}
  \subfigure[Gradient-based lower-bound used in IST]{\includegraphics[width=.4\textwidth]{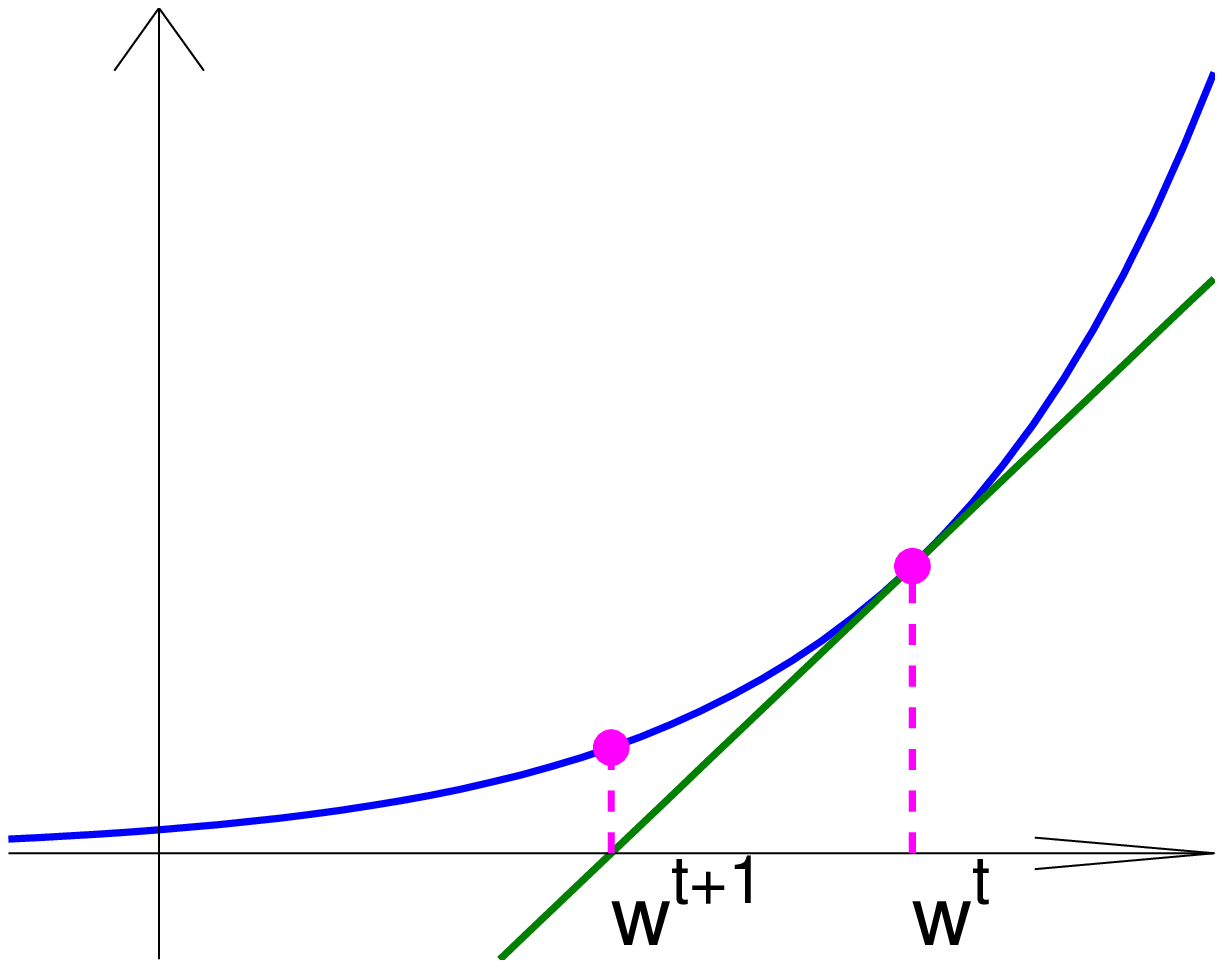}{\label{fig:gralb}}}
\hspace*{5mm}\subfigure[Variational lower-bound used in DAL]{\includegraphics[width=.4\textwidth]{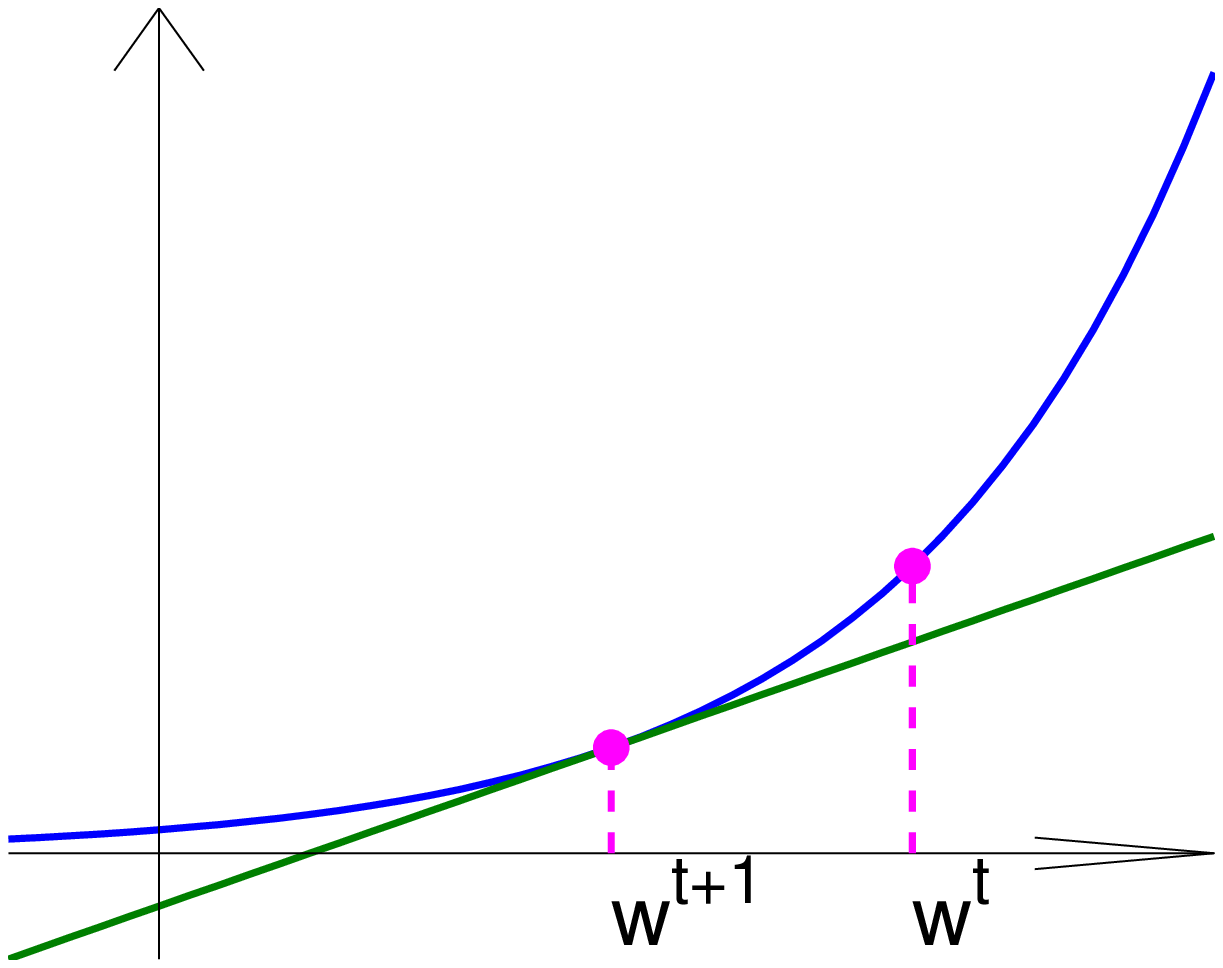}\label{fig:varlb}}
 \end{center}
 \caption{Comparison of the lower bounds used in IST and DAL.}
 \label{fig:ISTvsDAL}
\end{figure}
The general connection between the augmented Lagrangian algorithm and
the proximal minimization algorithm, and (asymptotic) convergence results can
be found in \cite{Roc76b,Ber82}. The derivation we show above is a special
case when the objective function $f(\vw)$ can be split into a part that is
easy to handle (regularization term $\phi_\lambda(\vw)$) and the rest
(loss term $f_{\ell}(\mA\vw)$).

\section{Exemplary instances}
\label{sec:instances}
In this section, we discuss special instances of DAL framework presented
in \Secref{sec:proximal_view} and qualitatively discuss the efficiency of minimizing
the inner objective. We first discuss the simple case of
$\ell_1$-regularization (\Secref{sec:dall1}), and then group-lasso
(\Secref{sec:dalgl}) and other more general regularization using the
so-called support functions (\Secref{sec:dalsupport}). In addition, the
case of component-wise regularization is discussed in
\Secref{sec:indivreg}. See also Table~\ref{tab:regularizers} for a list
of regularizers.

\subsection{Dual augmented Lagrangian algorithm for $\ell_1$-regularization}
\label{sec:dall1}
For the $\ell_1$-regularization, $\phi_{\lambda}^{\ell_1}(\vw)=\lambda\|\vw\|_1$,
the update equation \eqref{eq:dalupdate} can be rewritten as follows:
\begin{align}
\label{eq:dalL1update}
 \vw^{t+1}&=\ST{\lambda\eta_t}\left(\vw^t+\eta_t\mA\T\valpha^t\right),
\end{align}
where $\ST{\lambda}$ is the proximity operator corresponding to
 the $\ell_1$-regularizer defined in \Eqref{eq:softth}.  Moreover,
 noticing that the convex conjugate of the $\ell_1$-regularizer is the
 indicator function $\delta_\lambda^\infty$ in \Eqref{eq:indicatorL1},
we can derive the envelope function $\Phi_{\lambda}^{\ast}$ in
 \Eqref{eq:envelope} as follows (see also \Figref{fig:envelopes}):
\begin{align*}
 \Phi_{\lambda}^\ast(\vw)=\frac{1}{2}\left\|\ST{\lambda}(\vw)\right\|^2.
\end{align*}
Therefore, the AL function \eqref{eq:dalL1inner}  in 
 \cite{TomSug09}  is derived from the proximal minimization
 framework (see \Eqref{eq:dalinner}) in \Secref{sec:proximal_view}.
%Note also that the clipping function \eqref{eq:clipping} is the proximal
% operation with respect to the indicator function $\delta_{\lambda}^\infty$.

\begin{algorithm}[tb]
\caption{DAL algorithm for $\ell_1$-regularization} 
\label{alg:DALL1}
\begin{algorithmic}[1]
 \STATE {\bf Input:} design matrix $\mA$, loss function $f_{\ell}$,
 regularization constant  $\lambda$, sequence of proximity parameters
 $\eta_t$ ($t=0,1,2,\ldots$), initial solution $\vw^0$, tolerance $\epsilon$.
 \STATE Set $t=0$.
 \REPEAT
 \STATE Minimize the augmented
 Lagrangian function $\varphi_t(\valpha)$ (see \Eqref{eq:dalL1inner})
 with the gradient and Hessian given in \Eqsref{eq:ALderiv1_l1} and
 \eqref{eq:ALderiv2_l1}, respectively, using Newton's
 method. Let $\valpha^t$ be the approximate minimizer
$$
\valpha^t\simeq \argmin_{\valpha\in\RR^m}\Bigl(
f_{\ell}^\ast(-\valpha)+\frac{1}{2\eta_t}\left\|\ST{\lambda\eta_t}(\vw^t+\eta_t\mA\T\valpha)\right\|^2
\Bigr),
$$
with the stopping criterion (see \Secref{sec:approx})
$$
\|\nabla\varphi_t(\valpha^t)\|\leq\sqrt{\frac{\gamma}{\eta_t}}\left\|\ST{\lambda\eta_t}(\vw^t+\eta_t\mA\T\valpha^t)-\vw^{t}\right\|,
$$
where $\varphi_t(\valpha)$ is the derivative of the inner objective \eqref{eq:ALderiv1_l1}.
\STATE Update $\vw^{t+1}:=\ST{\lambda\eta_t}(\vw^t+\eta_t\mA\T\valpha^t)$, $t\leftarrow t+1$.
 \UNTIL{relative duality gap (see \Secref{sec:gap}) is less than the tolerance $\epsilon$.}
\STATE {\bf Output:} the final solution $\vw^{t}$.
\end{algorithmic}
\end{algorithm}

%Now03,DauDefMol04,ComWaj05} and is define as follows}
%\label{eq:softth}
%\ST{\lambda}(\vy)&:=\left(\max(|y_j|-\lambda,0)\frac{y_j}{|y_j|}
%\right)_{j=1}^n,
%\intertext{where the ratio $y_j/|y_j|$ is defined to be zero if
%$y_j=0$.
% Consequently, the AL function $\varphi_t(\valpha)$ can
% be written as follows:}
%\label{eq:dalL1inner}
% \varphi_t(\valpha)&=f_{\ell}^\ast(-\valpha)+\frac{1}{2\eta_t}\left\|\ST{\lambda\eta_t}\bigl(\vw^{t}+\eta_t\mA\T\valpha\bigr)\right\|^2.

We use Newton's method for the minimization of the inner objective
$\varphi_t(\valpha)$. The overall algorithm is shown in
Algorithm~\ref{alg:DALL1}.  The gradient and Hessian of
the AL function \eqref{eq:dalL1inner} can be
evaluated as follows~\citep{TomSug09}:  
\begin{align}
\label{eq:ALderiv1_l1}
\nabla\varphi_t(\valpha) &=-\nabla
 f_{\ell}^\ast(-\valpha)+\mA\vw^{t+1}(\valpha),\\
\label{eq:ALderiv2_l1}
\nabla^2\varphi_t(\valpha) &=\nabla^2 f_{\ell}^\ast(-\valpha)+\eta_t\mA_+\mA_+\T,
\end{align}
where
$\vw^{t+1}(\valpha):=\ST{\lambda\eta_t}(\vw^t+\eta_t\mA\T\valpha)$, and $\mA_+$ is the matrix that consists of columns of $\mA$ that
corresponds to ``active'' variables (i.e., the non-zero elements of
$\vw^{t+1}(\valpha)$ ). Note that \Eqref{eq:ALderiv1_l1} equals the
general expression \eqref{eq:ALderiv1} from the proximal minimization framework.

It is worth noting that in both the computation of
matrix-vector product in \Eqref{eq:ALderiv1_l1} and the computation of
matrix-matrix product in \Eqref{eq:ALderiv2_l1}, the cost is only
proportional to the number of non-zero elements of $\vw^{t+1}(\valpha)$.
Thus when we are aiming for a sparse solution, the minimization of
\Eqref{eq:dalL1inner} can be performed efficiently.

%
%and the convex conjugate $\phi_{\lambda}^{\ast}(\vv)$ is the indicator
%function of $\ell_\infty$-ball of radisu $\lambda$ as follows:
%\begin{align*}
% \phi_{\lambda}^{\ast}(\vv)=&
%\begin{cases}
% 0, & \textrm{if $\|\vv\|_\infty\leq \lambda$};\\
%\infty, &\textrm{otherwise}.
%\end{cases}
%\end{align*}
%Therefore, maximization of \Eqref{eq:maxdalpre} with respect to $\vv$
%is nothing but a projection of a point $\mA\T\valpha+\vw^t/\eta_t$ onto
%the $\ell_\infty$-ball of radius $\lambda$, which can be computed for
%each element of $\vv$ as follows:
%\begin{align}
%\label{eq:argmax_v}
%v_j&=\max(\lambda,(\mA\T\valpha+\vw/\eta_t)_j)\qquad(j=1,\ldots,n),
%\end{align}
%where $(\cdot)_j$ denotes the $j$-th component of a vector. Substituting
%\Eqref{eq:argmax_v} into \Eqsref{eq:maxdalpre} and
%\eqref{eq:dal_update_basic}, and using the fact 
%that $\eta_t\max(a,b)=\max(\eta_t a,\eta_t b)$, we have the following
%update equation: 

%In \Eqref{eq:dalinner} the minimization
% is terminated by the following condition:
%\begin{align}
%\label{eq:dalstopcond}
%\|\nabla\varphi_t(\valpha^t)\|&\leq \epsilon_t\sqrt{\frac{\gamma}{\eta_t}}\|\vw^{t+1}-\vw^{t}\|,
%\end{align}
%where $\nabla\varphi_t(\valpha^t)$ is the gradient of the 
%augmented Lagrangian function at $\valpha^t$ and  $\gamma$ is the
%constant in \Eqref{eq:lipshitz}; $\epsilon_t$ is a positive sequence,
%which we specify in detail in the next section. 

\subsection{Group lasso}
\label{sec:dalgl}
Let $\phi_{\lambda}$ be the group-lasso penalty~\citep{YuaLin06}, i.e.,
\begin{align}
\label{eq:grouplasso}
 \phi_{\lambda}^{\Gf}(\vw)=\lambda\sum_{\gf\in\Gf}\|\vw_{\gf}\|,
\end{align}
where $\Gf$ is a disjoint partition of
the index set $\{1,\ldots,n\}$, and $\vw_{\gf}\in\RR^{|\gf|}$ is a sub-vector of $\vw$
that consists of rows of $\vw$ indicated by $\gf\subseteq\{1,\ldots,n\}$. 
The proximity operator corresponding to the group-lasso
regularizer $\phi_{\lambda}^{\Gf}$ is
obtained as follows:
\begin{align}
 \label{eq:prox_gl}
\prox{\lambda}^{\Gf}(\vy):=\prox{\phi_{\lambda}^{\Gf}}(\vy)=\left(\max(\|\vy_{\gf}\|-\lambda,0)\frac{\vy_{\gf}}{\|\vy_{\gf}\|}\right)_{\gf\in\Gf},
\end{align}
where similarly to \Eqref{eq:softth}, $(\vy_{\gf})_{\gf\in\Gf}$ denotes
an $n$-dimensional vector whose $\gf$ component is given by
$\vy_{\gf}$. 
Moreover, analogous to update equation \eqref{eq:dalL1update} (see also
\Eqref{eq:dalL1inner}) in the $\ell_1$-case, the update equations can be
written as follows: 
\begin{align}
%\label{eq:dalGLupdate}
\vw^{t+1}&=\prox{\lambda\eta_t}^{\Gf}\left(\vw^t+\eta_t\mA\T\valpha^t\right),\nonumber
\intertext{where $\valpha^t$ is the minimizer of the AL function}
\label{eq:ALfunc_gl}
\varphi_t(\valpha)&=f_{\ell}^\ast(-\valpha)+\frac{1}{2\eta_t}\|\prox{\lambda\eta_t}^{\Gf}(\vw^t+\eta_t\mA\T\valpha)\|^2.
\end{align}
The overall algorithm is obtained by replacing the soft-thresholding operations
in Algorithm~\ref{alg:DALL1} by the one defined above~\eqref{eq:prox_gl}.
 In addition, the gradient and Hessian of the AL function
 $\varphi_t(\valpha)$ can be  written as follows:
\begin{align}
\label{eq:ALderiv1_gl}
\nabla\varphi_t(\valpha)&=-\nabla f_{\ell}^\ast(-\valpha)+\mA\vw^{t+1}(\valpha),\\
\label{eq:ALderiv2_gl}
\nabla^2\varphi_t(\valpha)&=\nabla^2 f_{\ell}^\ast(-\valpha)+\eta_t\sum_{\gf\in\Gf^+}\mA_{\gf}\left(\left(1-\frac{\lambda\eta_t}{\|\vq_{\gf}\|}\right)\mI_{|\gf|}+\frac{\lambda\eta_t}{\|\vq_{\gf}\|}\tilde{\vq}_{\gf}\tilde{\vq}_{\gf}\T\right)\mA_{\gf}\T,
\end{align}
where
$\vw^{t+1}(\valpha)=\prox{\lambda\eta_t}^{\Gf}(\vw^{t}+\eta_t\mA\T\valpha)$, and
$\Gf^+$ is a subset of $\Gf$ that consists of active groups, namely
$\Gf^+:=\{\gf\in\Gf: \|\vw_{\gf}^{t+1}(\valpha)\|>0\}$; 
 $\mA_{\gf}$ is a sub-matrix of $\mA$ that consists of columns of
$\mA$ that corresponds to the index-set $\gf$;
$\mI_{|\gf|}$ is the $|\gf|\times|\gf|$ identity matrix; the vector $\vq\in\RR^n$ is
defined as $\vq:=\vw^{t}+\eta_t\mA\T\valpha$ and
$\tilde{\vq}_{\gf}:=\vq_{\gf}/\|\vq_{\gf}\|$, where $\vq_{\gf}$ is
defined analogously to $\vw_{\gf}$. Note that in the above expression,
$\lambda\eta_t/\|\vq_{\gf}\|\leq 1$ for $\gf\in\Gf^+$ by the
soft-threshold operation \eqref{eq:prox_gl}.

Similarly to the $\ell_1$-case in the last
subsection, the sparsity of $\vw^{t+1}(\valpha)$ (i.e., $|\Gf^+|\ll |\Gf|$)
can be exploited to efficiently compute the gradient
\eqref{eq:ALderiv1_gl} and the Hessian \eqref{eq:ALderiv2_gl}.

\subsection{Support functions}
\label{sec:dalsupport}
The $\ell_1$-norm regularization and the group lasso regularization
in \Eqref{eq:grouplasso} can be generalized to the class of support
functions. The support function of a convex set $C_{\lambda}$ is defined
as follows:
\begin{align}
\label{eq:support}
\phi_\lambda(\vx)=\sup_{\vy\in C_{\lambda}}\vx\T\vy.
\end{align}
For example, the $\ell_1$-norm is the support function of the
$\ell_\infty$ unit ball (see \cite{Roc70}) and the group lasso
regularizer \eqref{eq:grouplasso} is the support function of the
group-generalized $\ell_\infty$-ball defined as
$\{\vy\in\RR^n:\|\vy_{\gf}\|\leq \lambda,\,\forall\gf\in\Gf\}$.
It is well known that the convex conjugate of the support function \eqref{eq:support} is the indicator function of $C$ (see \cite{Roc70}), namely,
\begin{align}
\label{eq:indicator}
\phi_{\lambda}^\ast(\vy)&=
\begin{cases}
 0 & (\textrm{if $\vy\in C_{\lambda}$}),\\
 +\infty & (\textrm{otherwise}).
\end{cases}
\end{align}
The proximity operator corresponding to the support function
\eqref{eq:support} can be written as follows:
\begin{align*}
 \prox{C_\lambda}^{\rm sup}(\vy):=\vy - {\rm proj}_{C_{\lambda}}(\vy),
\end{align*}
where ${\rm proj}_{C_{\lambda}}$ is the projection onto $C_{\lambda}$;
see Lemma~\ref{lem:moreau_decomp}  in Appendix~\ref{sec:basics}.
Finally, by computing the Moreau envelope
 \eqref{eq:envelope} corresponding to the above $\phi_{\lambda}^{\ast}$,  we have
\begin{align}
\label{eq:ALfunc_sp}
\varphi_t(\valpha)&=f_{\ell}^\ast(-\valpha)+\frac{1}{2\eta_t}\|\prox{C_{\lambda\eta_t}}^{\rm
 sup}(\vw^t+\eta_t\mA\T\valpha)\|^2,
\end{align}
where we used the fact that for the indicator function in \Eqref{eq:indicator},
$\phi_{\lambda}^\ast({\rm proj}_{C_{\lambda}}(\vz))=0$ ($\forall \vz$) and
Lemma~\ref{lem:moreau_decomp}.
Note that letting $\prox{C_{\lambda}}^{\rm sup}=\ST{\lambda}$ and
$\prox{C_{\lambda}}^{\rm sup}=\prox{\lambda}^{\Gf}$ in 
\Eqref{eq:ALfunc_sp}, we obtain \Eqref{eq:dalL1inner} and
\Eqref{eq:ALfunc_gl}, respectively.

%Moreover, it can be
%shown that the first and the second derivative of $\varphi_t(\valpha)$
%with the above envelope function $\Phi_{\lambda}^\ast$
% only depends on the columns of $\mA$
%that corresponds to non-zero elements of
%$\ST{\lambda\eta_t}(\vw^t+\eta_t\mA\T\valpha)$~\citep{TomSug09}. 

%; in fact  when the Newton method is used the computational complexity
%for the minimization of $\varphi_t(\valpha)$ is roughly  ${\rm
%O}(m^2n^+)$, where $m$ is the number of 
%samples and $n^+$ is the number of non-zero elements of $\vw^{t+1}$;
%quasi-Newton methods~\cite{NocWri99} can also be used if $m$ is very large.

\subsection{Handling different regularization constant for each component}
\label{sec:indivreg}
The $\ell_1$-regularizer in \Secref{sec:dall1} and the group lasso
regularizer in \Secref{sec:dalgl} assume that all the components
(variables or groups) are regularized by the same constant
$\lambda$. However the general formulation in \Secref{sec:dalprox} allows
 using different regularization constant for each component.

 For example, let us consider the following regularizer:
\begin{align}
\label{eq:indivL1}
\phi_{\bm{\lambda}}(\vw)= \sum_{j=1}^n\lambda_j|w_j|,
\end{align}
where $\lambda_j\geq 0$ ($j=1,\ldots,n$). Note that we can also include
unregularized terms (e.g., a bias term) by setting the corresponding
regularization constant $\lambda_j=0$.
The soft-thresholding operation corresponding to the regularizer~\eqref{eq:indivL1} is
written as follows:
\begin{align*}
\prox{\bm{\lambda}}^{\ell_1}(\vy)
 &=\left(\max(|y_j|-\lambda_j,0)\frac{y_j}{|y_j|}\right)_{j=1}^n,
\end{align*}
where again the ratio $y_j/|y_j|$ is defined to be zero if $y_j=0$.
Note that if $\lambda_j=0$, the soft-thresholding operation is an
identity mapping for that component. Moreover, by noticing that the 
regularizer \eqref{eq:indivL1} is a support function (see \Secref{sec:dalsupport}), the envelope 
function $\Phi_{\lambda}^\ast$ in \Eqref{eq:envelope} is written as
follows:
\begin{align*}
 \Phi_{\bm{\lambda}}^{\ast}(\vw)&=\frac{1}{2}\sum_{j=1}^n{\max}^2(|w_j|-\lambda_j,0),
\end{align*}
which can also be derived by noticing that
$\Phi_{\bm{\lambda}}^\ast(0)=0$ and $\nabla
\Phi_{\bm{\lambda}}^\ast(\vy)=\prox{\bm{\lambda}}^{\ell_1}(\vy)$
(Lemma~\ref{lem:moreau_deriv} in Appendix~\ref{sec:basics}).

As a concrete example, let $b$ be an unregularized bias term and let us
assume that all the components of $\vw\in\RR^n$ are regularized by the
same regularization constant $\lambda$. In other words, we aim to solve
the following optimization problem:
\begin{align*}
 \minimize_{\vw\in\RR^n,b\in\RR}&\quad f_{\ell}(\mA\vw+\bm{1}_mb)+\lambda\|\vw\|_1,
\end{align*}
where $\|\vw\|_1$ is the $\ell_1$-norm of $\vw$, and $\bm{1}_m$ is an
   $m$-dimensional all one vector.
   The update equations
\eqref{eq:dalupdate} and \eqref{eq:dalinner} can be written as follows:
\begin{align}
\label{eq:dalL1update_w_bias}
 \vw^{t+1}&=\ST{\lambda\eta_t}(\vw^t+\eta_t\mA\T\valpha^t),\\
\label{eq:dalL1_update_b_bias}
 b^{t+1}&=b^t+\eta_t\bm{1}_m\T\valpha^t,
\end{align}
where $\valpha^t$ is the minimizer of the AL function as follows:
\begin{align}
\label{eq:dalL1inner_bias}
 \valpha^t&=\argmin_{\valpha\in\RR^m}\left(
f_{\ell}^\ast(-\valpha)+\frac{1}{2\eta_t}\left(\|\ST{\lambda\eta_t}(\vw^t+\eta_t\mA\T\valpha)\|^2+(b^t+\eta_t\bm{1}_m\T\valpha)^2\right)
\right).
\end{align}

\section{Analysis}
\label{sec:analysis}
In this section, we first show the convergence of DAL algorithm 
assuming that the inner minimization problem \eqref{eq:dalinner} is solved
exactly (\Secref{sec:exact}), which is equivalent to the proximal minimization algorithm \eqref{eq:ppa}. The convergence is presented both in terms of 
the function value and the norm of the residual.  Next, since it is
impractical to perform the inner minimization to high precision, the
finite tolerance version of the two theorems are presented in
\Secref{sec:approx}. The convergence rate obtained in
\Secref{sec:approx} is slightly worse than the exact case. In
\Secref{sec:fasterrate}, we show that the convergence rate can be
improved by performing the inner minimization more precisely.
 Most of the proofs are given in Appendix~\ref{sec:proofs} for the sake of readability.

 Our result is inspired partly by \cite{BecTeb09} and is similar to the
 one given in \cite{Roc76a} and \cite{KorBer76}. However,  our analysis
 does not require asymptotic arguments as in \cite{Roc76a} or rely on 
the strong convexity of the objective as in \cite{KorBer76}.
Importantly the stopping criterion we discuss in \Secref{sec:approx} can
be checked in practice. Key to our
analysis is the Lipschitz  continuity of the gradient of the loss
function~$\nabla  f_{\ell}$ and the assumption that the proximation with
respect to $\phi_{\lambda}$ (see \Eqref{eq:proximation}) can be computed exactly.
Connections between our assumption and the ones made in earlier studies
are discussed in \Secref{sec:validity_assumption}.

\subsection{Exact inner minimization}
\label{sec:exact}

\begin{lemma}[\cite{BecTeb09}]
\label{lem:key_exact}
Let $\vw^1,\vw^2,\ldots$ be the sequence generated by the proximal
 minimization algorithm (\Eqref{eq:ppa}). For arbitrary $\vw\in\RR^n$ we have
\begin{align}
\label{eq:key_exact}
\eta_t(f(\vw^{t+1})-f(\vw))\leq \frac{1}{2}\|\vw^{t}-\vw\|^2-\frac{1}{2}\|\vw^{t+1}-\vw\|^2.
\end{align}
\end{lemma}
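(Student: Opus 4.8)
The plan is to treat \Eqref{eq:ppa} as the minimization of the strongly convex function $h_t(\vu):=f(\vu)+\frac{1}{2\eta_t}\|\vu-\vw^t\|^2$ and to extract everything from the information contained in its first-order optimality condition. Since $f$ is closed proper convex and the proximity term is a smooth strictly convex quadratic, $h_t$ is strongly convex with a unique minimizer $\vw^{t+1}$, and optimality reads $\bm{0}\in\partial h_t(\vw^{t+1})$. Because the quadratic summand is differentiable, the subdifferential sum rule applies cleanly and gives $\bm{0}\in\partial f(\vw^{t+1})+\frac{1}{\eta_t}(\vw^{t+1}-\vw^t)$; equivalently, there exists a subgradient $\vg\in\partial f(\vw^{t+1})$ with $\eta_t\vg=\vw^t-\vw^{t+1}$.

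Next I would feed this subgradient into the defining inequality of convexity. For arbitrary $\vw\in\RR^n$, convexity of $f$ at $\vw^{t+1}$ yields $f(\vw)\ge f(\vw^{t+1})+\langle\vg,\vw-\vw^{t+1}\rangle$. Rearranging, substituting $\eta_t\vg=\vw^t-\vw^{t+1}$, and multiplying through by $\eta_t>0$ gives
\[
\eta_t\bigl(f(\vw^{t+1})-f(\vw)\bigr)\le\langle\vw^t-\vw^{t+1},\,\vw^{t+1}-\vw\rangle .
\]
This already reduces the whole claim to a purely geometric identity for the right-hand side.

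The final step is the polarization (three-point) identity: writing $\vw^t-\vw=(\vw^t-\vw^{t+1})+(\vw^{t+1}-\vw)$ and expanding $\|\vw^t-\vw\|^2$ produces
\[
\langle\vw^t-\vw^{t+1},\,\vw^{t+1}-\vw\rangle=\tfrac12\|\vw^t-\vw\|^2-\tfrac12\|\vw^{t+1}-\vw\|^2-\tfrac12\|\vw^t-\vw^{t+1}\|^2 .
\]
Discarding the nonpositive term $-\tfrac12\|\vw^t-\vw^{t+1}\|^2$ yields exactly \Eqref{eq:key_exact}.

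I expect no serious obstacle, as this is the classical proximal-point estimate; the only points requiring care are (i) justifying the subdifferential sum rule, which is painless here because only one summand is nondifferentiable while the other is smooth, so no constraint qualification beyond convexity is needed, and (ii) observing that the passage from equality to inequality is precisely the discarding of $-\tfrac12\|\vw^t-\vw^{t+1}\|^2$, the term that quantifies the per-step progress and that one would want to retain when sharpening the convergence rate later. It is worth emphasizing that the argument uses only convexity of $f$ together with exactness of the inner minimization, and none of the special loss or regularizer structure, consistent with the result being attributed to the general analysis of \cite{BecTeb09}.
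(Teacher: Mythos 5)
Your proof is correct and follows essentially the same route as the paper: both start from the optimality condition $(\vw^t-\vw^{t+1})/\eta_t\in\partial f(\vw^{t+1})$, plug this subgradient into the convexity inequality for $f$, and then bound the resulting inner product $\dot{\vw^t-\vw^{t+1}}{\vw^{t+1}-\vw}$. The only difference is cosmetic: where the paper bounds that inner product via Cauchy--Schwarz followed by the arithmetic--geometric mean inequality, you use the exact three-point identity and drop the nonpositive term $-\tfrac{1}{2}\|\vw^t-\vw^{t+1}\|^2$ --- two algebraically equivalent ways of invoking $\|(\vw^t-\vw^{t+1})+(\vw^{t+1}-\vw)\|^2\geq 0$, with yours having the small aesthetic advantage of making the discarded per-step progress term explicit.
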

\begin{proof}
First notice that $(\vw^t-\vw^{t+1})/\eta_t\in\partial
 f(\vw^{t+1})$ because $\vw^{t+1}$ minimizes \Eqref{eq:ppa}. Therefore
 using the convexity of $f$, we have\footnote{We use the notation
 $\dot{\vx}{\vy}:=\sum_{j=1}^nx_jy_j$ for $\vx,\vy\in\RR^n$.}
\begin{align}
\label{eq:proof1_convexity}
 \eta_t(f(\vw)-f(\vw^{t+1}))&\geq\dot{\vw-\vw^{t+1}}{\vw^{t}-\vw^{t+1}}\\
&=\dot{\vw-\vw^{t+1}}{\vw^{t}-\vw+\vw-\vw^{t+1}}\nonumber\\
&\geq\|\vw-\vw^{t+1}\|^2-\|\vw-\vw^{t+1}\|\|\vw^t-\vw\|\nonumber\\
&\geq\frac{1}{2}\|\vw-\vw^{t+1}\|^2-\frac{1}{2}\|\vw^t-\vw\|^2,\nonumber
\end{align}
where the third line follows from Cauchy-Schwartz inequality and the
 last line follows from the inequality of arithmetic and 
 geometric means.
\end{proof}

Note that DAL algorithm (\Eqsref{eq:dalupdate} and
\eqref{eq:dalinner}) with exact inner minimization generates a sequence
from the proximal minimization algorithm (\Eqref{eq:ppa}). Therefore we
have the following theorem.

\begin{theorem}
\label{thm:convergence_f}
Let $\vw^1,\vw^2,\ldots$ be the sequence generated by DAL algorithm
(\Eqsref{eq:dalupdate} and \eqref{eq:dalinner}); let $W^\ast$ be the
 set of minimizers of the objective \eqref{eq:problem} and let
 $f(W^{\ast})$ denote the minimum objective value.  If the  inner minimization
(\Eqref{eq:dalinner}) is solved exactly and the proximity parameter $\eta_t$ is
 increased exponentially, then DAL algorithm converges exponentially as follows:
\begin{align}
\label{eq:convergence_f}
 f(\vw^{k+1})-f(W^\ast)\leq \frac{\|\vw^0-W^{\ast}\|^2}{2C_k},
\end{align}
where $\|\vw^0-W^{\ast}\|$ denotes the minimum distance between the
 initial solution $\vw^0$ and $W^{\ast}$, namely,
 $\|\vw^0-W^{\ast}\|=\min_{\vw^{\ast}\in W^{\ast}}\|\vw^0-\vw^{\ast}\|$.
 Note that $C_k=\sum_{t=0}^k\eta_t$ also grows exponentially.
\end{theorem}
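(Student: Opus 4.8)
The plan is to invoke Lemma~\ref{lem:key_exact} with a single fixed reference point and then telescope the resulting inequality over the iterations. Since DAL with exact inner minimization generates exactly the proximal minimization sequence \eqref{eq:ppa}, the lemma applies verbatim. First I would fix $\vw^{\ast}\in W^{\ast}$ to be the minimizer closest to the initial point, so that $\|\vw^0-\vw^{\ast}\|=\|\vw^0-W^{\ast}\|$, and substitute $\vw=\vw^{\ast}$ into \eqref{eq:key_exact} to obtain, for every $t$,
\begin{align*}
\eta_t\bigl(f(\vw^{t+1})-f(W^{\ast})\bigr)\leq \tfrac{1}{2}\|\vw^{t}-\vw^{\ast}\|^2-\tfrac{1}{2}\|\vw^{t+1}-\vw^{\ast}\|^2.
\end{align*}

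Next I would sum this inequality from $t=0$ to $t=k$. The right-hand side telescopes to $\tfrac{1}{2}\|\vw^0-\vw^{\ast}\|^2-\tfrac{1}{2}\|\vw^{k+1}-\vw^{\ast}\|^2$, which is bounded above by $\tfrac{1}{2}\|\vw^0-W^{\ast}\|^2$ after dropping the nonnegative last term. For the left-hand side I would use the monotonicity of the objective values along the proximal sequence, namely $f(\vw^{t+1})\leq f(\vw^{t})$, a direct consequence of the defining inequality $f(\vw^{t+1})+\tfrac{1}{2\eta_t}\|\vw^{t+1}-\vw^{t}\|^2\leq f(\vw^{t})$ noted just after \eqref{eq:ppa}. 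This gives $f(\vw^{t+1})-f(W^{\ast})\geq f(\vw^{k+1})-f(W^{\ast})\geq 0$ for all $t\leq k$, so that
\begin{align*}
\sum_{t=0}^{k}\eta_t\bigl(f(\vw^{t+1})-f(W^{\ast})\bigr)\geq \bigl(f(\vw^{k+1})-f(W^{\ast})\bigr)\sum_{t=0}^{k}\eta_t=C_k\bigl(f(\vw^{k+1})-f(W^{\ast})\bigr).
\end{align*}

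Combining the two bounds yields $C_k\bigl(f(\vw^{k+1})-f(W^{\ast})\bigr)\leq \tfrac{1}{2}\|\vw^0-W^{\ast}\|^2$, which is exactly \eqref{eq:convergence_f} after dividing by $C_k$. The exponential rate then follows because $\eta_t$ grows geometrically: if $\eta_t\propto a^t$ with $a>1$, then $C_k=\sum_{t=0}^{k}\eta_t$ is a geometric sum that itself grows like $a^{k}$, so the bound decays geometrically in $k$. There is no real obstacle here beyond bookkeeping; the only points requiring care are to use a \emph{single} common reference point $\vw^{\ast}$ across all $t$ (legitimate since \eqref{eq:key_exact} holds for arbitrary $\vw$) so that the quadratic terms actually telescope, and to invoke monotonicity so that the slowest-decreasing term $f(\vw^{k+1})-f(W^{\ast})$ can be pulled out of the $\eta_t$-weighted sum.
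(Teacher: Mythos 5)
Your proposal is correct and follows essentially the same route as the paper's proof: substitute a minimizer into Lemma~\ref{lem:key_exact}, telescope the sum over $t=0,\ldots,k$, and use the monotone decrease of $f(\vw^t)$ to pull $f(\vw^{k+1})-f(W^{\ast})$ out of the $\eta_t$-weighted sum. The only cosmetic difference is that you fix the closest minimizer to $\vw^0$ at the outset, whereas the paper uses an arbitrary $\vw^{\ast}\in W^{\ast}$ and minimizes over $W^{\ast}$ at the end; the two are equivalent.
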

\begin{proof}
Let $\vw^{\ast}$ be any point in $W^{\ast}$. 
 Substituting $\vw=\vw^{\ast}$ in \Eqref{eq:key_exact} and summing both sides
from $t=1$ to $t=k$, we have
\begin{align*}
 \left(\textstyle\sum_{t=0}^k\eta_t\right)\left(\sum_{t=0}^k\frac{\eta_tf(\vw^{t+1})}{\sum_{t=0}^k\eta_t}-f(\vw^\ast)\right)&\leq\frac{1}{2}\|\vw^0-\vw^\ast\|^2-\frac{1}{2}\|\vw^{k+1}-\vw^\ast\|^2\\
&\leq\frac{1}{2}\|\vw^0-\vw^{\ast}\|^2.
\intertext{In addition, since $f(\vw^{t+1})\leq f(\vw^t)$
 ($t=0,1,2,\ldots$) from \Eqref{eq:ppa},  we have}
 \left(\textstyle\sum_{t=0}^k\eta_t\right)\left(f(\vw^{k+1})-f(\vw^{\ast})\right)&\leq\frac{1}{2}\|\vw^0-\vw^{\ast}\|^2.
\end{align*} 
Finally, taking the minimum of the right-hand side with respect to
 $\vw^{\ast}\in W^{\ast}$ and using the equivalence of proximal
 minimization \eqref{eq:ppa} and DAL algorithm
 \eqref{eq:dalupdate}-\eqref{eq:dalinner} (see \Secref{sec:dalprox}),
 we complete the proof.  
\end{proof}

The above theorem claims the convergence of the residual function values
$f(\vw^t)-f(\vw^\ast)$ obtained along the sequence
$\vx_1,\vx_2,\ldots$. We can convert the above result into convergence
in terms of the residual norm $\|\vw^t-\vw^\ast\|$ by 
introducing an assumption that connects the residual function value
to the residual norm. In addition, we slightly generalize
Lemma~\ref{lem:key_exact} to improve the convergence
rate. Consequently, we obtain the following theorem.

\begin{theorem}
\label{thm:convergence_w}
Let $\vw^1,\vw^2,\ldots$ be the sequence generated by DAL algorithm
(\Eqsref{eq:dalupdate} and \eqref{eq:dalinner}) and let $W^\ast$ be
the set of minimizers of the objective~\eqref{eq:problem}.  Let us assume
that there are a positive constant $\sigma$ and  a scalar $\alpha$
($1\leq \alpha\leq 2$) such that
\begin{flalign}
\label{eq:lowerbounded}
\textrm{\bf(A1)} && f(\vw^{t+1})-f(W^\ast) &\geq
 \sigma \|\vw^{t+1}-W^{\ast}\|^{\alpha}\qquad(t=0,1,2,\ldots), &&
\end{flalign}
where $f(W^{\ast})$ denotes the minimum objective value, and $\|\vw-W^{\ast}\|$ denotes the minimum distance between $\vw\in\RR^n$ and
the set of minimizers $W^{\ast}$ as $\|\vw-W^{\ast}\|:=\min_{\vw^\ast\in
 W^{\ast}}\|\vw-\vw^{\ast}\|$.

If the inner minimization is solved exactly, we have the following
inequality:
\begin{align*}
\|\vw^{t+1}-W^{\ast}\| +\sigma\eta_{t}\|\vw^{t+1}-W^{\ast}\|^{\alpha-1}\leq \|\vw^{t}-W^{\ast}\|.
\end{align*}
Moreover, this implies that
\begin{align}
\label{eq:thm1}
\|\vw^{t+1}-W^{\ast}\|^{\frac{1+(\alpha-1)\sigma\eta_{t}}{1+\sigma\eta_{t}}}\leq \frac{1}{1+\sigma\eta_{t}}\|\vw^{t}-W^{\ast}\|.
\end{align}
That is, $\vw^{t}$ converges to $W^\ast$ {\em super-linearly} if
 $\alpha<2$ or $\alpha=2$ and $\eta_{t}$ is increasing, in a {\em global
 and non-asymptotic} sense.
\end{theorem}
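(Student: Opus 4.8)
The plan is to prove the three assertions in sequence: the one-step contraction inequality, its reformulation as the power inequality \eqref{eq:thm1}, and finally the super-linear rate. The engine behind the first inequality is a sharpened form of the argument behind Lemma~\ref{lem:key_exact}: rather than immediately applying Cauchy--Schwarz and the arithmetic--geometric mean inequality to telescope into squared norms, I would retain the bilinear cross term and exploit the convexity of the solution set $W^{\ast}$.

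Concretely, I would first record the first-order optimality of the proximal step \eqref{eq:ppa}, namely $(\vw^{t}-\vw^{t+1})/\eta_{t}\in\partial f(\vw^{t+1})$. Writing $\vw^{\ast}$ for the projection of $\vw^{t+1}$ onto $W^{\ast}$ (well defined since $W^{\ast}$ is convex) and invoking convexity of $f$ at $\vw^{\ast}$, I obtain
\begin{align*}
\eta_{t}\bigl(f(\vw^{t+1})-f(W^{\ast})\bigr)\leq \dot{\vw^{t+1}-\vw^{\ast}}{\vw^{t}-\vw^{\ast}}-\|\vw^{t+1}-\vw^{\ast}\|^{2}.
\end{align*}
The crucial step is to bound the cross term. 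Letting $\bar{\vw}$ denote the projection of $\vw^{t}$ onto $W^{\ast}$ and splitting $\vw^{t}-\vw^{\ast}=(\vw^{t}-\bar{\vw})+(\bar{\vw}-\vw^{\ast})$, the obtuse-angle (variational) characterization of projection onto the convex set $W^{\ast}$ gives $\dot{\vw^{t+1}-\vw^{\ast}}{\bar{\vw}-\vw^{\ast}}\leq 0$, so by Cauchy--Schwarz $\dot{\vw^{t+1}-\vw^{\ast}}{\vw^{t}-\vw^{\ast}}\leq \|\vw^{t+1}-W^{\ast}\|\,\|\vw^{t}-W^{\ast}\|$. Substituting this bound, applying Assumption {\bf (A1)} (see \eqref{eq:lowerbounded}) to the left-hand side, and dividing through by $\|\vw^{t+1}-W^{\ast}\|$ (the case $\vw^{t+1}\in W^{\ast}$ being trivial) yields exactly the claimed inequality $\|\vw^{t+1}-W^{\ast}\|+\sigma\eta_{t}\|\vw^{t+1}-W^{\ast}\|^{\alpha-1}\leq\|\vw^{t}-W^{\ast}\|$.

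For the power inequality \eqref{eq:thm1}, I would treat the left-hand side of the contraction as $(1+\sigma\eta_{t})$ times a convex combination of $\|\vw^{t+1}-W^{\ast}\|$ and $\|\vw^{t+1}-W^{\ast}\|^{\alpha-1}$ with weights $1/(1+\sigma\eta_{t})$ and $\sigma\eta_{t}/(1+\sigma\eta_{t})$. The weighted geometric--arithmetic mean inequality then lower-bounds this combination by $\|\vw^{t+1}-W^{\ast}\|^{p}$ with exponent $p=(1+(\alpha-1)\sigma\eta_{t})/(1+\sigma\eta_{t})$, which is precisely the weighted average of $1$ and $\alpha-1$ appearing in \eqref{eq:thm1}; rearranging gives the result. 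The super-linear claim is then read off from $p$: when $\alpha<2$ we have $p<1$, so \eqref{eq:thm1} is convergence of order $1/p>1$ even for fixed $\eta_{t}$; when $\alpha=2$ we have $p=1$ and \eqref{eq:thm1} reduces to linear convergence with factor $1/(1+\sigma\eta_{t})$, which tends to $0$ as $\eta_{t}$ increases, again giving a super-linear, globally valid and non-asymptotic rate.

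I expect the cross-term bound to be the main obstacle, since it is the only place where possible non-uniqueness of the minimizer matters: when $W^{\ast}$ is not a singleton the projections $\vw^{\ast}$ and $\bar{\vw}$ of $\vw^{t+1}$ and $\vw^{t}$ need not coincide, and a naive Cauchy--Schwarz on $\dot{\vw^{t+1}-\vw^{\ast}}{\vw^{t}-\vw^{\ast}}$ would only produce the factor $\|\vw^{t}-\vw^{\ast}\|\geq\|\vw^{t}-W^{\ast}\|$, which is the wrong direction. Convexity of $W^{\ast}$ through the obtuse-angle property is exactly what rescues the estimate. The remaining pieces---the optimality condition, Assumption {\bf (A1)}, and the weighted mean inequality---are routine.
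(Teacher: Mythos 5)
Your proposal is correct and follows essentially the same route as the paper's proof: the same proximal optimality condition plus convexity, the same decomposition through the two projections of $\vw^{t}$ and $\vw^{t+1}$ onto $W^{\ast}$ with the obtuse-angle property handling non-uniqueness, and the same weighted arithmetic--geometric mean step to obtain \eqref{eq:thm1}. The only cosmetic difference is in the middle: you divide the quadratic inequality by $\|\vw^{t+1}-W^{\ast}\|$ directly, whereas the paper reaches the identical contraction inequality by introducing a parameter $\mu$ in an arithmetic--geometric mean bound and then optimizing over $\mu$; the two manipulations are equivalent.
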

\begin{proof}
 See Appendix~\ref{sec:proof_convergence_w}.
\end{proof}
Note that the above super-linear convergence holds without the
assumption in Theorem~\ref{thm:convergence_f} that $\eta_t$ is
increased exponentially. 

\subsection{Approximate inner minimization}
\label{sec:approx}
First we present a finite tolerance version of Lemma~\ref{lem:key_exact}.
\begin{lemma}
\label{lem:key_approx}
Let $\vw^1,\vw^2,\ldots$ be the sequence generated by DAL algorithm
 (\Eqsref{eq:dalupdate} and \eqref{eq:dalinner}). Let us assume the
 following conditions.
\begin{description}
 \item[(A2)]  The loss function $f_{\ell}$ has a Lipschitz continuous gradient
 with modulus $1/\gamma$, i.e.,
\begin{align}
\label{eq:lipshitz}
 \left\|\nabla f_{\ell}(\vz)-\nabla f_{\ell}(\vz')\right\|\leq
\frac{1}{\gamma}\|\vz-\vz'\|\qquad(\forall \vz,\vz'\in\RR^m), 
\end{align}
\item[(A3)] The proximation with respect to $\phi_{\lambda}$ (see
	   \Eqref{eq:proximation}) can be  computed exactly.
%\begin{align*}
%\prox{\phi_\lambda}(\vy)=\argmin_{\vx\in\RR^n}\left(\frac{1}{2}\|\vy-\vx\|^2+\phi_{\lambda}(\vx)\right).
%\end{align*}
\item[(A4)] The  inner minimization (\Eqref{eq:dalinner}) is solved to the
 following tolerance:
\begin{align}
 \label{eq:dalstopcond}
\|\nabla\varphi_t(\valpha^t)\|&\leq
 \sqrt{\frac{\gamma}{\eta_t}}\|\vw^{t+1}-\vw^{t}\|,
\end{align}
where $\gamma$ is the constant in \Eqref{eq:lipshitz}.
\end{description}
Under assumptions {\bf (A2)}--{\bf (A4)}, for  arbitrary $\vw\in\RR^n$ we have 
\begin{align}
\label{eq:key_approx}
\eta_t(f(\vw^{t+1})-f(\vw))\leq \frac{1}{2}\|\vw^{t}-\vw\|^2-\frac{1}{2}\|\vw^{t+1}-\vw\|^2.
\end{align}
\end{lemma}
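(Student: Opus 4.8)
The plan is to follow the structure of the proof of Lemma~\ref{lem:key_exact}, isolating the single source of inexactness. Assumption {\bf (A3)} guarantees that the proximation \eqref{eq:dalupdate} is evaluated exactly, so the optimality condition of the prox step continues to hold exactly: writing $\vw^{t+1}=\prox{\phi_{\lambda\eta_t}}(\vw^t+\eta_t\mA\T\valpha^t)$ and using $\phi_{\lambda\eta_t}=\eta_t\phi_\lambda$, I would first deduce the inclusion
\[
\mA\T\valpha^t+\tfrac{1}{\eta_t}(\vw^t-\vw^{t+1})\in\partial\phi_\lambda(\vw^{t+1}).
\]
This is the exact analogue of the subgradient relation used for $f$ in Lemma~\ref{lem:key_exact}; the only inexactness now lives in the link between $\valpha^t$ and $\nabla f_\ell$.

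Next I would make that inexactness explicit. Set $\vg^t:=\nabla\varphi_t(\valpha^t)$, so that {\bf (A4)} reads $\|\vg^t\|\le\sqrt{\gamma/\eta_t}\,\|\vw^{t+1}-\vw^t\|$. The gradient formula \eqref{eq:ALderiv1} gives $\vg^t=-\nabla f_\ell^\ast(-\valpha^t)+\mA\vw^{t+1}$, hence $\nabla f_\ell^\ast(-\valpha^t)=\mA\vw^{t+1}-\vg^t=:\hat{\vz}^t$, and by the inverse-map property of $\nabla f_\ell$ and $\nabla f_\ell^\ast$ (valid since $f_\ell$ is differentiable, strictly convex, with differentiable conjugate) I obtain $-\valpha^t=\nabla f_\ell(\hat{\vz}^t)$. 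Thus $-\valpha^t$ is the \emph{exact} gradient of $f_\ell$ not at $\mA\vw^{t+1}$ but at the perturbed point $\hat{\vz}^t$, which differs from $\mA\vw^{t+1}$ by precisely $\vg^t$.

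I would then sandwich the loss gap around $\hat{\vz}^t$. Convexity of $f_\ell$ gives $f_\ell(\mA\vw)\ge f_\ell(\hat{\vz}^t)-\dot{\valpha^t}{\mA\vw-\hat{\vz}^t}$, while the descent lemma implied by {\bf (A2)} gives $f_\ell(\mA\vw^{t+1})\le f_\ell(\hat{\vz}^t)-\dot{\valpha^t}{\vg^t}+\tfrac{1}{2\gamma}\|\vg^t\|^2$, using $\mA\vw^{t+1}-\hat{\vz}^t=\vg^t$. Subtracting, the $f_\ell(\hat{\vz}^t)$ terms and a pair of $\dot{\valpha^t}{\vg^t}$ terms cancel and, after writing $\mA\vw-\hat{\vz}^t=\mA(\vw-\vw^{t+1})+\vg^t$, I am left with $f_\ell(\mA\vw)-f_\ell(\mA\vw^{t+1})\ge-\dot{\mA\T\valpha^t}{\vw-\vw^{t+1}}-\tfrac{1}{2\gamma}\|\vg^t\|^2$. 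Adding the subgradient inequality for $\phi_\lambda$ from the inclusion above, the two $\dot{\mA\T\valpha^t}{\cdot}$ terms cancel exactly, and multiplying by $\eta_t$ yields
\[
\eta_t\bigl(f(\vw)-f(\vw^{t+1})\bigr)\ge\dot{\vw^t-\vw^{t+1}}{\vw-\vw^{t+1}}-\tfrac{\eta_t}{2\gamma}\|\vg^t\|^2.
\]

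The decisive step --- which I expect to be the main obstacle --- is to \emph{absorb} the error term rather than waste accuracy as the exact proof does. Instead of the Cauchy--Schwarz and arithmetic/geometric-mean bound used in Lemma~\ref{lem:key_exact}, I would apply the exact polarization identity $\dot{\vw^t-\vw^{t+1}}{\vw-\vw^{t+1}}=\tfrac12\|\vw^t-\vw^{t+1}\|^2+\tfrac12\|\vw-\vw^{t+1}\|^2-\tfrac12\|\vw^t-\vw\|^2$, retaining the $\tfrac12\|\vw^t-\vw^{t+1}\|^2$ term that the exact argument discards. Finally {\bf (A4)} bounds $\tfrac{\eta_t}{2\gamma}\|\vg^t\|^2\le\tfrac12\|\vw^{t+1}-\vw^t\|^2$, which cancels exactly the retained term, giving $\eta_t(f(\vw^{t+1})-f(\vw))\le\tfrac12\|\vw^t-\vw\|^2-\tfrac12\|\vw^{t+1}-\vw\|^2$. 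The crux is recognizing that the stopping criterion {\bf (A4)} is calibrated so that the inexactness error $\tfrac{\eta_t}{2\gamma}\|\vg^t\|^2$ precisely matches the slack in the polarization identity; any looser bookkeeping would leave an uncancelled residual and destroy the clean recursion that Theorems~\ref{thm:convergence_f} and~\ref{thm:convergence_w} build upon.
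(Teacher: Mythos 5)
Your proposal is correct and follows essentially the same route as the paper's proof: the same error vector $\vg^t=\nabla\varphi_t(\valpha^t)$ (the paper's $\vdelta^t$), the same inverse-map identity $-\valpha^t=\nabla f_\ell(\mA\vw^{t+1}-\vg^t)$ via Rockafellar's Cor.~23.5.1, the same convexity bound, descent-lemma bound, and prox-subgradient bound, and the same exact expansion of $\dot{\vw^t-\vw^{t+1}}{\vw-\vw^{t+1}}$ whose retained $\tfrac12\|\vw^t-\vw^{t+1}\|^2$ term is cancelled by the $\tfrac{\eta_t}{2\gamma}\|\vg^t\|^2$ error under {\bf (A4)}. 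The only difference is presentational: you phrase the loss bounds as a sandwich around the perturbed point $\hat{\vz}^t$, whereas the paper decomposes the gap into three terms (A), (B), (C) before bounding each one --- mathematically these are identical.
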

\begin{proof}
 See Appendix~\ref{sec:proof_key_approx}.
\end{proof}
Note that Lemma~\ref{lem:key_approx} states that even with the weaker
stopping criterion {\bf (A4)}, we can obtain inequality
\eqref{eq:key_approx} as in Lemma~\ref{lem:key_exact}.

The assumptions we make here are rather weak. In Assumption {\bf (A2)}, the
loss function $f_{\ell}$ does not include the design matrix
$\mA$ (see Table~\ref{tab:loss}). Therefore, it is easy to compute the constant $\gamma$. 
Accordingly, the stopping criterion {\bf (A4)} can be checked without assuming
anything about the data.

Furthermore, summing
both sides of inequality~\eqref{eq:key_approx} and assuming that $\eta_t$ is
increased exponentially, we obtain
Theorem~\ref{thm:convergence_f} also under the approximate
minimization~{\bf (A4)}.

Finally, an analogue of Theorem~\ref{thm:convergence_w}, which does not
assume the exponential increase in $\eta_t$, is obtained as follows.

\begin{theorem}
\label{thm:convergence_w_approx}
 Let $\vw^1,\vw^2,\ldots$ be the sequence generated by DAL algorithm and let $W^{\ast}$
 be the set of  minimizers of the objective~\eqref{eq:problem}. Under assumption
 {\bf (A1)} in Theorem~\ref{thm:convergence_w} and 
{\bf (A2)}-{\bf (A4)} in Lemma~\ref{lem:key_approx}, we have
\begin{align}
\label{eq:convergence_w_approx_1}
 \|\vw^{t+1}-W^{\ast}\|^2+2\sigma\eta_t\|\vw^{t+1}-W^{\ast}\|^{\alpha}&\leq \|\vw^t-W^{\ast}\|^2,
\end{align}
where $\|\vw^{t}-W^{\ast}\|$ is the minimum distance between $\vw^{t}$ and
 $W^{\ast}$ as in Theorem~\ref{thm:convergence_w}.
Moreover, this implies that
\begin{align}
\label{eq:thm2}
 \|\vw^{t+1}-W^{\ast}\|^{\frac{1+\alpha\sigma\eta_t}{1+2\sigma\eta_t}}&\leq\frac{1}{\sqrt{1+2\sigma\eta_t}}\|\vw^{t}-W^{\ast}\|.
\end{align}
That is,  $\vw^t$ converges to $W^\ast$ super-linearly if $\alpha<2$ or
 $\alpha=2$ and $\eta_t$ is increasing.
\end{theorem}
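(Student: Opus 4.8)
The plan is to mirror the proof of Theorem~\ref{thm:convergence_w}, but to start from the finite-tolerance key inequality \eqref{eq:key_approx} of Lemma~\ref{lem:key_approx} rather than its exact counterpart. The crucial observation is that Lemma~\ref{lem:key_approx} delivers the \emph{identical} inequality as the exact Lemma~\ref{lem:key_exact}; all the real work of controlling the approximate inner minimization under the stopping criterion {\bf (A4)} has already been absorbed into that lemma. Consequently, what remains for this theorem is purely algebraic: combine \eqref{eq:key_approx} with the growth condition {\bf (A1)} to obtain the additive inequality \eqref{eq:convergence_w_approx_1}, and then convert that additive form into the multiplicative, super-linear form \eqref{eq:thm2}.

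First I would fix $t$ and let $\vw^\ast\in W^\ast$ be the point realizing the minimum distance from $\vw^t$ to $W^\ast$, so that $\|\vw^t-\vw^\ast\|=\|\vw^t-W^\ast\|$ and $f(\vw^\ast)=f(W^\ast)$. Substituting $\vw=\vw^\ast$ into \eqref{eq:key_approx} and multiplying by $2$ gives
\begin{align*}
2\eta_t\bigl(f(\vw^{t+1})-f(W^\ast)\bigr)\leq \|\vw^t-W^\ast\|^2-\|\vw^{t+1}-\vw^\ast\|^2.
\end{align*}
Next I would invoke {\bf (A1)} to replace the left-hand side by its lower bound $2\sigma\eta_t\|\vw^{t+1}-W^\ast\|^\alpha$, and use $\|\vw^{t+1}-\vw^\ast\|\geq\|\vw^{t+1}-W^\ast\|$ (since $\vw^\ast\in W^\ast$ and the set-distance is a minimum over $W^\ast$) to replace the remaining point-distance on the right by the set-distance. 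Rearranging yields exactly \eqref{eq:convergence_w_approx_1}.

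The second, genuinely new, step is to extract the super-linear rate from the additive bound. Writing $d:=\|\vw^{t+1}-W^\ast\|$ and $c:=\sigma\eta_t$, inequality \eqref{eq:convergence_w_approx_1} reads $d^2+2cd^\alpha\leq\|\vw^t-W^\ast\|^2$. I would apply the weighted inequality of arithmetic and geometric means to the two summands $d^2$ and $d^\alpha$ with weights $\tfrac{1}{1+2c}$ and $\tfrac{2c}{1+2c}$ (which sum to one), obtaining
\begin{align*}
d^{\frac{2+2c\alpha}{1+2c}}=\bigl(d^2\bigr)^{\frac{1}{1+2c}}\bigl(d^\alpha\bigr)^{\frac{2c}{1+2c}}\leq\frac{1}{1+2c}\bigl(d^2+2cd^\alpha\bigr)\leq\frac{1}{1+2c}\|\vw^t-W^\ast\|^2.
\end{align*}
Taking square roots and substituting back $c=\sigma\eta_t$ produces \eqref{eq:thm2}.

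Finally I would read off the qualitative statement: the exponent $\frac{1+\alpha\sigma\eta_t}{1+2\sigma\eta_t}$ on $\|\vw^{t+1}-W^\ast\|$ is strictly below $1$ whenever $\alpha<2$, forcing $\|\vw^{t+1}-W^\ast\|$ to be controlled by a super-linear power of $\|\vw^t-W^\ast\|$; and when $\alpha=2$ the exponent equals $1$ but the contraction constant $1/\sqrt{1+2\sigma\eta_t}$ tends to $0$ as $\eta_t$ increases, again giving super-linear convergence. There is no deep obstacle here, since Lemma~\ref{lem:key_approx} carries the analytic burden; the only points requiring care are the direction of the set-distance inequality $\|\vw^{t+1}-\vw^\ast\|\geq\|\vw^{t+1}-W^\ast\|$ and the correct choice of weights in the arithmetic--geometric mean step, where taking weights proportional to the coefficients $1$ and $2c$ of the two terms is precisely what makes the exponent collapse to the stated $\tfrac{1+\alpha\sigma\eta_t}{1+2\sigma\eta_t}$.
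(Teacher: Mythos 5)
Your proposal is correct and follows essentially the same route as the paper: the first inequality is obtained exactly as in the paper's proof (apply Lemma~\ref{lem:key_approx} at the nearest minimizer $\bar{\vw}^t$, invoke {\bf (A1)}, and use minimality of the set-distance for $\vw^{t+1}$), and your weighted arithmetic--geometric mean step with weights $\tfrac{1}{1+2\sigma\eta_t}$ and $\tfrac{2\sigma\eta_t}{1+2\sigma\eta_t}$ is precisely the ``Young's inequality'' argument the paper borrows from the proof of Theorem~\ref{thm:convergence_w}, here applied to the squared quantities.
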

\begin{proof}
Let $\bar{\vw}^t$ be the closest point in $W^{\ast}$ from $\vw^{t}$,
 i.e., $\bar{\vw}^t:=\argmin_{\vw^{\ast}\in
 W^{\ast}}\|\vw^{t}-\vw^{\ast}\|$. Using  Lemma~\ref{lem:key_approx}
 with $\vw=\bar{\vw}^t$  and Assumption ${\bf (A1)}$, we have the
 first part of the theorem as follows:
\begin{align*}
\|\vw^{t}-W^{\ast}\|^2=\|\vw^{t}-\bar{\vw}^t\|^2 &\geq
 \|\vw^{t+1}-\bar{\vw}^t\|^2+2\sigma\eta_t\|\vw^{t+1}-W^{\ast}\|^\alpha\\
&\geq \|\vw^{t+1}-W^{\ast}\|^2+2\sigma\eta_t \|\vw^{t+1}-W^{\ast}\|^{\alpha},
\end{align*}
where we used the minimality of $\|\vw^{t+1}-\bar{\vw}^{t+1}\|$ in the
second line.
The last part of the theorem~\eqref{eq:thm2} can be obtained in a similar manner as that of
 Theorem~\ref{thm:convergence_w} using Young's inequality (see Appendix~\ref{sec:proof_convergence_w}).
\end{proof}

\subsection{A faster rate}
\label{sec:fasterrate}
The factor $1/\sqrt{1+2\sigma\eta_t}$ obtained under the approximate
minimization~{\bf (A4)} (see inequality~\eqref{eq:thm2} in Theorem~\ref{thm:convergence_w_approx}) is larger than that
obtained under the exact inner minimization (see inequality~\eqref{eq:thm1} in Theorem~\ref{thm:convergence_w}); i.e., the
statement in 
Theorem~\ref{thm:convergence_w_approx} is weaker than that in Theorem~\ref{thm:convergence_w}.

Here we show that a better rate can also be obtained for approximate
minimization if we perform the inner minimization to
$O(\|\vw^{t+1}-\vw^{t}\|/\eta_t)$ instead of
$O(\|\vw^{t+1}-\vw^{t}\|/\sqrt{\eta_t})$ in Assumption~{\bf (A4)}.

\begin{theorem}
\label{thm:fasterrate}
 Let $\vw^1,\vw^2,\ldots$ be the sequence generated by DAL algorithm and
 let $W^\ast$
 be the set of minimizers of the objective~\eqref{eq:problem}. 
Under assumption {\bf (A1)} in Theorem~\ref{thm:convergence_w} with $\alpha=2$,
 and assumptions {\bf  (A2)} and {\bf (A3)} in
 Lemma~\ref{lem:key_approx}, for any $\epsilon<1$ such that
 $\delta:=(1-\epsilon)/(\sigma\eta_t)\leq 3/4$, if we solve the inner
 minimization to the following precision
\begin{flalign*}
\textrm{\bf (A4$'$)}&& \|\nabla\varphi_t(\valpha^t)\|&\leq
 \frac{\sqrt{\gamma(1-\epsilon)/\sigma}}{\eta_t}\|\vw^{t+1}-\vw^{t}\|, &&
\end{flalign*}
then we have
\begin{align*}
 \|\vw^{t+1}-W^{\ast}\|\leq\frac{1}{1+\epsilon\sigma\eta_t}\|\vw^{t}-W^{\ast}\|.
\end{align*}
\end{theorem}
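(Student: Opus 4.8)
The plan is to reproduce, in a sharpened form, the descent estimate behind Lemma~\ref{lem:key_approx} and then feed it into the same distance-contraction argument used for Theorem~\ref{thm:convergence_w_approx}, but \emph{retaining} the term $\tfrac12\|\vw^{t+1}-\vw^t\|^2$ that was previously spent to absorb the inner-minimization error. The whole point of the stronger tolerance {\bf (A4$'$)} is that its $1/\eta_t$ scaling (rather than $1/\sqrt{\eta_t}$) makes that error enter as a coefficient of order $1/(\sigma\eta_t)$ instead of order $1$, leaving a positive remainder to exploit.

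\emph{Step 1 (a sharpened key inequality).} First I would revisit the derivation of Lemma~\ref{lem:key_approx}. Because {\bf (A3)} holds, $\vw^{t+1}$ is the \emph{exact} minimizer of the surrogate $-\dot{\valpha^t}{\mA\vw}+\phi_\lambda(\vw)+\frac{1}{2\eta_t}\|\vw-\vw^t\|^2$, which is a $\frac{1}{\eta_t}$-strongly convex lower bound of the true proximal objective $f(\vw)+\frac{1}{2\eta_t}\|\vw-\vw^t\|^2$. Comparing the two at $\vw^{t+1}$ shows that the only slack is the Fenchel--Young gap $\Delta_t:=f_\ell(\mA\vw^{t+1})+f_\ell^\ast(-\valpha^t)+\dot{\valpha^t}{\mA\vw^{t+1}}\ge 0$, giving, for every $\vw$,
\[
\eta_t\bigl(f(\vw^{t+1})-f(\vw)\bigr)\le\tfrac12\|\vw-\vw^t\|^2-\tfrac12\|\vw^{t+1}-\vw\|^2-\tfrac12\|\vw^{t+1}-\vw^t\|^2+\eta_t\Delta_t .
\]
The crucial observation is that $\Delta_t$ is a Bregman divergence of $f_\ell$ whose two arguments differ exactly by $\nabla\varphi_t(\valpha^t)$ (use \Eqref{eq:ALderiv1}), so {\bf (A2)} yields $\Delta_t\le\frac{1}{2\gamma}\|\nabla\varphi_t(\valpha^t)\|^2$ \emph{with no factor $\|\mA\|$}. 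Substituting {\bf (A4$'$)} turns this into $\eta_t\Delta_t\le\frac{1-\epsilon}{2\sigma\eta_t}\|\vw^{t+1}-\vw^t\|^2=\frac{\delta}{2}\|\vw^{t+1}-\vw^t\|^2$, so the retained correction becomes $-\frac{1-\delta}{2}\|\vw^{t+1}-\vw^t\|^2$.

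\emph{Step 2 (reduction to a scalar quadratic).} Let $\bar\vw^t$ be the projection of $\vw^t$ onto $W^\ast$, and write $q=\|\vw^t-W^\ast\|$, $p=\|\vw^{t+1}-W^\ast\|$. Setting $\vw=\bar\vw^t$ above, Assumption {\bf (A1)} with $\alpha=2$ lower-bounds the left-hand side by $\sigma\eta_t p^2$, while $\|\vw^{t+1}-\bar\vw^t\|\ge p$ and the reverse triangle inequality $\|\vw^{t+1}-\vw^t\|\ge\|\vw^t-\bar\vw^{t+1}\|-p\ge q-p$ (with Fej\'er monotonicity $p\le q$ obtained from the same inequality) bound the right-hand side. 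Both substitutions have the correct sign, so everything collapses to $\bigl(\sigma\eta_t+\tfrac{2-\delta}{2}\bigr)p^2-(1-\delta)\,q\,p-\tfrac{\delta}{2}q^2\le 0$.

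\emph{Step 3 (closing).} Reading this as a quadratic in $r=p/q$, I would show its larger root lies below $r_0=1/(1+\epsilon\sigma\eta_t)$ by checking that the quadratic is strictly positive at $r_0$; a short computation reduces this to $(\sigma\eta_t)^2\delta\bigl(1-\tfrac{\epsilon^2}{2}\bigr)>0$, which holds since $\epsilon<1$, giving $p\le r_0\,q$ as claimed. The hypothesis $\delta\le 3/4$ serves to keep $1-\delta$ bounded away from $0$, so that the sign of the retained term in Step~1 is correct and $p\le q$ is available, with margin to spare for the final estimate. The step I expect to be the real obstacle is Step~1: forcing the error to appear \emph{only} through the dual residual $\|\nabla\varphi_t(\valpha^t)\|$—hence through the checkable tolerance {\bf (A4$'$)}—rather than through $\|\mA\|\,\|\vw^{t+1}-\bar\vw^{t+1}\|$. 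This is precisely what the Bregman/Fenchel--Young reformulation of $\Delta_t$ buys, and it is the reason the argument must run through the envelope/dual quantities instead of a naive subgradient bound on $\partial f(\vw^{t+1})$.
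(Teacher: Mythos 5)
Your proof is correct---I checked the algebra, including the key evaluation $(1+\epsilon\sigma\eta_t)^2Q(r_0)=\delta(\sigma\eta_t)^2\bigl(1-\tfrac{\epsilon^2}{2}\bigr)$---and its skeleton coincides with the paper's: a sharpened key inequality retaining $\tfrac{1-\delta}{2}\|\vw^{t+1}-\vw^t\|^2$, the substitution $\vw=\bar{\vw}^t$, assumption \textbf{(A1)} with $\alpha=2$, and reduction to a scalar quadratic in $p/q$ (with $p=\|\vw^{t+1}-W^\ast\|$, $q=\|\vw^{t}-W^\ast\|$). Your Step 1 is exactly what the paper's proof of Lemma~\ref{lem:key_approx} establishes before any term is discarded: your Fenchel--Young gap $\Delta_t$ is the Bregman divergence the paper controls via \Eqref{eq:proof2_loss_grad} and the descent lemma (its terms (A) and (B)), so in both proofs the inner-minimization error enters only through $\frac{\eta_t}{2\gamma}\|\nabla\varphi_t(\valpha^t)\|^2\le\frac{\delta}{2}\|\vw^{t+1}-\vw^t\|^2$; likewise your Step 2 quadratic $\bigl(\sigma\eta_t+\tfrac{2-\delta}{2}\bigr)p^2-(1-\delta)pq-\tfrac{\delta}{2}q^2\le0$ is precisely the paper's \Eqref{eq:proof_fastrate_quad} with the arithmetic--geometric parameter $\mu$ eliminated, and your reverse-triangle/Fej\'er route yields the same bound $\|\vw^{t+1}-\vw^t\|^2\ge(q-p)^2$ as the paper's projection inner-product inequalities. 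Where you genuinely differ is the closing step, and yours is cleaner: the paper splits into two cases on the sign of the bracket in \Eqref{eq:proof_fastrate_quad}, optimizes over $\mu$, and needs both the estimate $\delta\le\frac{(1-\epsilon)\epsilon+3/4}{1+\epsilon\sigma\eta_t}\le\frac{1}{1+\epsilon\sigma\eta_t}$ (this is where $\delta\le 3/4$ is actually used) and the auxiliary inequality $\sqrt{1+x}\ge 1+x/2-x^2/4$, whereas you evaluate the upward parabola once at $r_0=1/(1+\epsilon\sigma\eta_t)$ and conclude from $Q(0)\le 0<Q(r_0)$. A by-product is that your argument needs only $\delta\le 1$ (to keep $1-\delta\ge 0$ and secure $p\le q$); the hypothesis $\delta\le 3/4$ enters your proof purely as slack, so your parenthetical explanation of its role (``margin to spare'') mischaracterizes the paper's proof, where it is essential, but is harmless for yours.
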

\begin{proof}
 See Appendix~\ref{sec:proof_fasterrate}
\end{proof}
Note that the assumption $\delta<3/4$ is rather weak, because if the factor
$\delta$ is greater than one, the stopping criterion ${\bf (A4')}$ would
be weaker than the earlier criterion ${\bf (A4)}$. In order to be on the
safe side, we can choose $\epsilon=\max(\epsilon_0,1-3\sigma\eta_t/4)$
(assuming that we know the constant $\sigma$) and the above statement
holds with $\epsilon=\epsilon_0$.
Unfortunately, in exchange for obtaining a faster rate, the stopping criterion
{\bf (A4$'$)} now depends not only on $\gamma$, which can be computed,
but also on $\sigma$, which is hard to know in practice. Therefore stopping condition {\bf (A4$'$)} is not practical.

\subsection{Validity of assumption (A1)}
\label{sec:validity_assumption}
In this subsection, we discuss the validity of assumption {\bf
(A1)} and its relation to the assumptions used in \cite{Roc76a} and
\cite{KorBer76}. Roughly speaking, our assumption {\bf (A1)} is 
milder than the one used in \cite{Roc76a} and stronger than the one
used in \cite{KorBer76}.

First of all, assumption {\bf (A1)} is unnecessary for convergence in
terms of function value (Theorem~\ref{thm:convergence_f} and its
approximate version implied by Lemma~\ref{lem:key_approx}). Exponential
increase of the proximity parameter $\eta_t$ may sound restrictive, but
this is the setting we typically use in experiments (see \Secref{sec:impl}).
Assumption {\bf (A1)} is only necessary in Theorem~\ref{thm:convergence_w} and
Theorem~\ref{thm:convergence_w_approx} to translate the residual
function value $f(\vw^{t+1})-f(W^{\ast})$ into the residual distance
$\|\vw^{t+1}-W^{\ast}\|$.

We can roughly think of Assumption~{\bf (A1)} as a {\em local strong
convexity} assumption.  Here we say a function $f$ is {\em locally strongly
convex around} the set of minimizers $W^\ast$ if for all positive $C$, all the
points $\vw$ within distance $C$ from the set $W^{\ast}$, the objective
function $f$ is bounded below by a quadratic function, i.e., 
\begin{align}
\label{eq:local-strong-convex}
 f(\vw)-f(W^\ast)&\geq\sigma \|\vw-W^{\ast}\|^2\quad(\forall \vw:
 \|\vw-W^{\ast}\|\leq C),
\end{align}
where the positive constant $\sigma$ may depend on $C$. If the set of
minimizers $W^{\ast}$ is bounded, all the level sets of $f$
are bounded (see \citet[Theorem 8.7]{Roc70}). Therefore, if we make sure that the function
value $f(\vw^t)$ does not increase during the minimization, we can
assume that all points generated by DAL algorithm are contained
in some neighborhood around $W^{\ast}$  that contains the level set defined by the
initial function value $\{\vw\in\RR^n:f(\vw)\leq f(\vw^0)\}$; i.e.,
the local strong convexity of $f$ guarantees Assumption {\bf (A1)} with
$\alpha=2$.  

Note that \citet[p278]{KorBer76} used a slightly weaker assumption than the local strong
convexity \eqref{eq:local-strong-convex}; they assumed that there {\em
exists} a positive constant $C'>0$ such that
the local strong convexity \eqref{eq:local-strong-convex} is true for all
$\vw$ in the neighborhood $\|\vw-W^{\ast}\|\leq C'$ for some  $\sigma>0$. 

The local strong convexity~\eqref{eq:local-strong-convex} or Assumption
{\bf (A1)} fails when the objective function behaves like a constant
function around the set of minimizers $W^{\ast}$. In this case, DAL converges
rapidly in terms of function value due to
Theorem~\ref{thm:convergence_f}; however it does not  necessarily
converge in terms of the distance $\|\vw^t-W^{\ast}\|$. 

Note that the objective function $f$ is the sum of the loss term and the
regularization term. Even if the minimum eigenvalue of the Hessian of
the loss term is very close to zero, we can hope that the regularization
term holds the function up from the minimum objective value
$f(W^\ast)$. For example, when the loss term is {\em zero} and we only have the
$\ell_1$-regularization term $\phi_{\lambda}^{\ell_1}(\vw)$. The
objective $f(\vw)=\lambda\sum_{j=1}^n|w_j|$ can be lower-bounded as
\begin{align*}
 f(\vw)&\geq \frac{\lambda}{C}\|\vw\|^2\quad(\forall \vw:
 \|\vw\|\leq C),
\end{align*}
where the minimizer $\vw^{\ast}$ is $\vw^\ast=\bm{0}$. Note that the
$\ell_1$-regularizer is not (globally) strongly convex. The same
observation holds also for other regularizers we discussed in
\Secref{sec:instances}.

In the context of asymptotic analysis of AL algorithm,
 \cite{Roc76b} assumed that there exists $\tau>0$, such that in the ball
$\|\vbeta\|\leq \tau$ in $\RR^n$, the gradient of the convex conjugate
$f^\ast$ of the objective function $f$ is Lipschitz continuous
with constant $L$, i.e., 
\begin{align*}
%\label{eq:assumption_roc76b}
\|\nabla f^{\ast}(\vbeta)-\nabla f^{\ast}(0)\|\leq L\|\vbeta\|.
\end{align*}
Note that because $\partial f (\nabla f^\ast(0))\ni
0$~\cite[Cor. 23.5.1]{Roc70}, $\nabla f^\ast(0)$ is the optimal solution
$\vw^{\ast}$ of \Eqref{eq:problem}, and it is unique by the continuity assumed above.

Our assumption {\bf (A1)} can be justified from Rockafellar's assumption
as follows.
\begin{theorem}
\label{thm:rockafellar_to_A1}
Rockafellar's assumption implies that the objective $f$ is locally
 strongly convex with $C=c\tau L$ and $\sigma=\min(1,(2c-1)/c^2)/(2L)$
for any positive constant $c$ ($\tau$ and $L$ are constants from
 Rockafellar's assumption).
\end{theorem}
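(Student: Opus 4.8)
The plan is to turn Rockafellar's \emph{local} Lipschitz control on $\nabla f^\ast$ into a \emph{local quadratic} lower bound on $f$ itself, i.e.\ Assumption~{\bf (A1)} with $\alpha=2$. First I would record the two facts the hypothesis hands us for free: since $\partial f(\nabla f^\ast(0))\ni 0$, the point $\vw^\ast:=\nabla f^\ast(0)$ is the unique minimizer, so $W^\ast=\{\vw^\ast\}$ and $\|\vw-W^\ast\|=\|\vw-\vw^\ast\|$; and by the Fenchel equality applied at $(\vw^\ast,0)$ we get $f(\vw^\ast)=-f^\ast(0)$, which will let me cancel the constant later.

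The key reduction is a local descent lemma for $f^\ast$. I would integrate $\nabla f^\ast$ along the segment $s\mapsto s\vbeta$, which stays inside the ball $\|\vbeta\|\le\tau$ for $s\in[0,1]$, and bound the integrand by Rockafellar's inequality in the form $\|\nabla f^\ast(s\vbeta)-\nabla f^\ast(0)\|\le sL\|\vbeta\|$. This yields the quadratic majorant
\[f^\ast(\vbeta)\le f^\ast(0)+\dot{\vw^\ast}{\vbeta}+\frac{L}{2}\|\vbeta\|^2\qquad(\|\vbeta\|\le\tau).\]
Because this bound holds only locally, I cannot simply conjugate it; instead I would restrict the supremum defining $f=f^{\ast\ast}$ to the same ball (a smaller feasible set, hence a valid lower bound) and replace $f^\ast$ by its majorant on that ball. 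After substituting $f(\vw^\ast)=-f^\ast(0)$ to cancel the constant, this gives, for every $\vw$,
\[f(\vw)-f(\vw^\ast)\ \ge\ \sup_{\|\vbeta\|\le\tau}\Big(\dot{\vw-\vw^\ast}{\vbeta}-\tfrac{L}{2}\|\vbeta\|^2\Big).\]

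It then remains to evaluate this constrained concave maximization explicitly, which is exactly where the constant $\sigma$ is produced. The unconstrained maximizer is $\vbeta=(\vw-\vw^\ast)/L$; writing $r:=\|\vw-\vw^\ast\|$, I would split into two regimes. For $r\le L\tau$ the maximizer lies inside the ball and the optimal value is $r^2/(2L)$, giving the curvature factor $1/(2L)$. For $L\tau<r\le C$ the maximizer is clipped to the boundary $\|\vbeta\|=\tau$, and the value drops to the linear expression $\tau r-\tfrac{L}{2}\tau^2$; dividing by $r^2$ and minimizing the decreasing function $\tau/r-L\tau^2/(2r^2)$ over this range shows the worst case sits at the endpoint $r=C=c\tau L$, producing the factor $(2c-1)/(2c^2L)$. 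Taking the minimum of the two regimes yields exactly $\sigma=\min\bigl(1,(2c-1)/c^2\bigr)/(2L)$ together with the admissible radius $C=c\tau L$.

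The main obstacle — and the reason $\sigma$ is not simply $1/(2L)$ — is precisely this locality: since the quadratic majorant for $f^\ast$ is valid only on $\|\vbeta\|\le\tau$, the dual certificate of curvature at a point far from $\vw^\ast$ must be clipped to the boundary of the ball, which degrades the quadratic lower bound into a linear one and forces the endpoint analysis responsible for the $(2c-1)/c^2$ correction. Everything else — the integration in the descent lemma and the two-case maximization — is routine.
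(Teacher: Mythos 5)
Your proof is correct and follows essentially the same route as the paper's: a quadratic majorant of $f^\ast$ on the ball $\|\vbeta\|\leq\tau$ (the paper cites Theorem X.4.2.2 of Hiriart-Urruty and Lemar\'echal where you integrate the gradient), then restricting the Fenchel supremum $f=f^{\ast\ast}$ to that ball and evaluating the resulting constrained concave maximization in the two regimes. The only cosmetic difference is that the paper lower-bounds the clipped supremum by substituting the feasible point $\vbeta=(\vw-\vw^{\ast})/(cL)$, whereas you evaluate the boundary maximum exactly and minimize $\tau/r-L\tau^2/(2r^2)$ over $r$; both yield the same constant $\sigma=\min\bigl(1,(2c-1)/c^2\bigr)/(2L)$.
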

\begin{proof}
The proof is a {\em local} version of the proof of Theorem X.4.2.2 in
 \citet{HirLem93} (Lipschitz continuity of $\nabla f^{\ast}$ implies
 strong convexity of $f$). See Appendix~\ref{sec:proof_rockafellar_to_A1}.
\end{proof}
 Note  that as the
constant $c$ that bounds the distance to the set of minimizers $W^{\ast}$ increases, the constant $\sigma$ becomes smaller
and the convergence guarantee in Theorem~\ref{thm:convergence_w} and 
\ref{thm:convergence_w_approx} become weaker (but still valid).

Nevertheless Assumption~{\bf (A1)} we use in Theorem~\ref{thm:convergence_w}
and \ref{thm:convergence_w_approx} are weaker than the local strong
convexity~\eqref{eq:local-strong-convex}, because we need
Assumption {\bf (A1)} to hold only on the points generated by DAL
algorithm. For example, if we only consider a finite number of steps, such
a constant $\sigma$ always exists.

Both assumptions in \cite{Roc76b} and \cite{KorBer76} are made for
asymptotic analysis. In fact, they require that as the optimization
proceeds, the solution becomes closer to the optimum $\vw^{\ast}$ in the
sense of the distance $\|\vw^t-W^{\ast}\|$ in \cite{KorBer76} and
$\|\vbeta\|$ in \cite{Roc76b}. However in both cases, it is hard to
predict how many  iterations it takes for the solution to be
sufficiently close to the optimum so that the super-linear convergence happens.

Our analysis is complementary to the above classical results. We have
shown that super-linear convergence happens {\em non-asymptotically}
under Assumption {\bf (A1)}, which is trivial for a
finite number of  steps, for DAL algorithm. Moreover, Assumption {\bf (A1)} can be justified for any
number of steps by the local strong convexity around $W^{\ast}$ \eqref{eq:local-strong-convex}.

\section{Previous studies}
\label{sec:algorithms}
In this section, we discuss
earlier studies in two categories. The first category comprises
methods that try to overcome the difficulty posed by the 
non-differentiability of the regularization term $\phi_\lambda(\vw)$.
The second category, which includes DAL algorithm in this paper, consists of methods that try to overcome the difficulty
posed by the coupling (or non-separability) introduced by the design
matrix $\mA$. The advantages and disadvantages of all the methods are
summarized in \Tabref{tab:comparison}.

\subsection{Constrained optimization, upper-bound minimization, and subgradient methods}
Many authors have focused on the {\em non-differentiability} of the
regularization term in order to efficiently minimize
\Eqref{eq:problem}. This view has lead to three types of approaches,
namely, (i) constrained optimization, (ii) upper-bound minimization, and
(iii) subgradient methods. 
  
In the constrained optimization
 approach, auxiliary variables are introduced to rewrite the
 non-differentiable regularization term as a linear function of
 conically-constrained auxiliary variables. For example, the
 $\ell_1$-norm of a vector $\vw$ can be rewritten as:
\begin{align*}
\|\vw\|_1=\sum_{j=1}^n\min_{w_j^{(+)},w_j^{(-)}\geq 0}\left(w_j^{(+)}+w_j^{(-)}\right)\quad{\rm s.t.}\quad  w_j=w_j^{(+)}-w_j^{(-)}, 
\end{align*}
 where $w_j^{(+)}$ and $w_j^{(-)}$ $(j=1,\ldots,n)$ are auxiliary variables and they
 are constrained in the positive-orthant cone. 
 Two major challenges of the auxiliary-variable formulation
are the increased size of the problem and the complexity of solving a
 constrained optimization problem. 

The projected gradient (PG) method (see \cite{Ber99}) iteratively
computes a gradient step and projects it back to the constraint-set.
The PG method in \cite{FigNowWri07} converges R-linearly\footnote{A
  sequence $\xi^t$ converges to $\xi$ R-linearly (R is for ``root'') if
  the residual 
  $|\xi^t-\xi|$ is bounded by a sequence $\epsilon^t$ that linearly
  converges to zero~\citep{NocWri99}.}, if the loss function is quadratic. However,
PG methods can be extremely slow when the design matrix $\mA$
is poorly conditioned. To overcome the scaling problem, the L-BFGS-B
algorithm~\citep{ByrLuNocZhu95} can be applied for the simple positive
orthant constraint that arises from the $\ell_1$ minimization. However, 
this approach does not easily extend to more general regularizers,
such as group lasso and trace norm regularization.

The interior-point (IP) method (see \cite{BoydBook}) is another
algorithm that is often used for constrained minimization; see 
\citet{KohKimBoy07,KimKohLusBoyGor07} for the application of IP methods to
sparse estimation problems.
Basically an IP method generates a sequence that approximately follows the
so called central path, which parametrically connects the analytic center of the
constraint-set and the optimal solution.
Although IP methods can tolerate poorly conditioned design matrices well,
 it is challenging to scale them up to very large dense problems.
The convergence of the IP method in \cite{KohKimBoy07} is empirically
  found to be linear.

The second approach (upper-bound minimization) constructs a
 differentiable upper-bound of the 
 non-differentiable regularization term. For example, the $\ell_1$-norm
 of a vector $\vw$ can be rewritten as follows:
\begin{align}
\label{eq:variational_upperbound}
 \|\vw\|_1 = \sum_{j=1}^n\min_{\alpha_j\geq 0}\left(\frac{w_j^2}{2\alpha_j} +\frac{\alpha_j}{2}\right).
\end{align}
In fact, the right-hand side is an upper bound of the left-hand side for
arbitrary non-negative $\alpha_j$ due to the inequality of arithmetic
and geometric means, and the equality is obtained by setting
$\alpha_j=|w_j|$.  
The advantage of the above parametric-upper-bound formulation is that
for a fixed set of $\alpha_j$, the problem~\eqref{eq:problemL1} becomes a
(weighted) quadratically regularized minimization problem, for which
various efficient algorithms already exist. The iteratively reweighted
shrinkage (IRS) method~\citep{GorRao97,Bio06,FigBioNow07} alternately
solves the quadratically regularized minimization problem and tightens
(re-weights) the upper-bound in \Eqref{eq:variational_upperbound}. A more general technique was studied in parallel
by the name of variational EM~\citep{Jaa97,Gir01,PalWipDreRao06}, which
generalizes the above upper-bound using Fenchel's
inequality~\citep{Roc70}. A similar approach that is based on Jensen's
inequality~\citep{Roc70} has been studied in the context of
multiple-kernel learning~\citep{MicPon05,RakBacCanGra08} and in the
context of multi-task learning~\citep{ArgEvgPon07,ArgMicPonYin08}.
The challenge in the IRS framework is the {\em
 singularity}~\citep{FigBioNow07} around the coordinate axis. For
 example, in the $\ell_1$-problem in \Eqref{eq:problemL1}, any zero
 component $w_j=0$ in the  initial vector $\vw$ will remain zero after
 any number of iterations. Moreover, it is possible to create a
 situation that the convergence becomes arbitrarily slow for finite
 $|w_j|$ because the convergence in the $\ell_1$ case is only
 linear~\citep{GorRao97}.  

\begin{table}[tb]
 \begin{center}
  \caption{Comparison of the algorithms to solve \Eqref{eq:problem}.
  In the columns, six
  methods, namely, projected gradient (PG), interior point (IP),
  iterative reweighted shrinkage (IRS), orthant-wise limited-memory
  quasi Newton (OWLQN), accelerated gradient (AG), and dual augmented
  Lagrangian (DAL), are categorized into four groups discussed in the
  text. The first row:  ``Poorly conditioned $\mA$'' means that a method
  can tolerate poorly 
  conditioned design matrices well. The second row: ``No singularity''
  means that a method does not suffer from singularity in the
  parametrization (see main text). The third row: ``Extensibility''
  means that a method can be easily extended beyond
  $\ell_1$-regularization. The forth row: ``Exploits sparsity of $\vw$''
  means that a method can exploit the sparsity in the intermediate
  solution. The fifth row: ``Efficient when'' indicates the situations
  each algorithm runs efficiently, namely, more samples than unknowns
  ($m\gg n$), more unknowns than samples ($m\ll n$), or does not matter (--).
The last row shows the rate of convergence known from literature.  The
  super-linear convergence of DAL is established in this paper.
  }
  \label{tab:comparison}
{\small \begin{tabular}{|c|c|c|c|c|c|c|c}
\hline
   &\multicolumn{2}{c|}{Constrained} & Upper-bound & Subgradient &
   \multicolumn{2}{c|}{Iterative}\\
&\multicolumn{2}{c|}{Optimization} & Minimization & Method &
\multicolumn{2}{c|}{Proximation}\\
\cline{2-7}
   & PG & IP & IRS & OWLQN & AG & DAL\\
\hline
Poorly conditioned $\mA$ &  --    &  \checkmark    &  \checkmark    & 
 \checkmark&  --  &\checkmark     \\
\hline
No singularity     &  \checkmark    & \checkmark     &  --    &  \checkmark&  \checkmark  &  \checkmark     \\
\hline
Extensibility      &  \checkmark    &  \checkmark    & \checkmark     & --  &  \checkmark    &  \checkmark  \\
\hline
Exploits sparsity of $\vw$ &  \checkmark     &   --      &  --   &  \checkmark     & \checkmark&  \checkmark \\
\hline
Efficient when &
-- & -- & -- & $m\gg n$ & $m\gg n$ & $m\ll n$\\
\hline
Convergence & ($O(e^{-k})$) & ($O(e^{-k})$) & $O(e^{-k})$  & ? & $O(1/k^2)$ & $o(e^{-k})$\\
\hline
  \end{tabular}}
\end{center}
\end{table}

The third approach (subgradient methods) directly handles the
non-differentiability through subgradients; see e.g., \cite{Ber99}.
%A subgradient is a generalization of a gradient.
%%; a subgradient is defined as a member of a
%%set called subdifferential, which collapses to the usual gradient on a
%%differentiable point. Although being useful in handling differentiable
%%and nondifferentiable functions in a unfied manner, 
%Therefore it is a weaker notion than a gradient. For example, 
%a subgradient of a function cannot tell in which direction the function
%decreases, whereas the negative gradient direction is always a descent
%direction. Therefore, many techniques have been developed to minimize a
%function with subgradients

A (stochastic) subgradient method typically converges as $O(1/\sqrt{k})$
for non-smooth problems in general and as $O(1/k)$ if the objective is strongly
convex; see \cite{ShaSinSre07,Lan10}. However, since the method is based on
gradients, it can easily fail when the problem is poorly
conditioned~(see e.g., \cite[Section 2.2]{YuVisGueSch10}). Therefore,
one of the challenges in subgradient-based approaches is to take the
second-order curvature information into account. This is especially
important to tackle large-scale problems with a possibly poorly
conditioned design matrix.
Orthant-wise limited memory quasi Newton (OWLQN, \cite{AndGao07}) and
subLBFGS~\citep{YuVisGueSch10} combine subgradients with the well known L-BFGS 
quasi Newton method~\citep{NocWri99}.
Although being very efficient for
$\ell_1$-regularization and piecewise linear loss functions, these methods depend on the efficiency of oracles that
 compute a descent direction and a step-size; therefore,
it is challenging to extend these methods to combinations of general
loss functions and general non-differentiable
regularizers. In addition, the convergence rates of the OWLQN and
subLBFGS methods are not known.

%Let a (differentiable) function $f(\tilde{\vw})$ be the objective function we
%would like to minimize in 
%a constraint-set $Q\subseteq \RR^{\tilde{n}}$; in the above $\ell_1$-case,
%$\tilde{n}=2n$ and $Q=\{\tilde{\vw}\in\RR^{2n}:\tilde{w}_j\geq 0\}$. The projected gradient (PG) method
%(see \cite{Ber99}) iteratively minimizes a linear approximation of the
%objective function $f(\tilde{\vw})$ plus a proxy term $\|\tilde{\vw}-\tilde{\vw}^{t}\|^2/2$ in
%the constraint set $Q$ as follows:
%\begin{align*}
% \tilde{\vw}^{t+1}&=\argmin_{\tilde{\vw}\in Q}\left(f(\tilde{\vw}^t)+(\tilde{\vw}-\tilde{\vw}^t)\T\nabla
% f(\tilde{\vw}^t)+\frac{1}{2\eta_t}\|\tilde{\vw}-\tilde{\vw}^t\|^2\right)\\
% &=\argmin_{\tilde{\vw}\in Q}\left(\frac{1}{2\eta_t}\|\tilde{\vw}-(\tilde{\vw}^t-\eta_t \nabla f(\tilde{\vw}^t))\|^2\right)
%\end{align*}

\subsection{Iterative proximation}
Yet another approach is to deal with the nondifferentiable
regularization through the proximity operation.
In fact, the proximity operator \eqref{eq:proximation} is easy to
compute for many practically relevant separable regularizers.

The remaining issue, therefore, is the coupling between variables
introduced by the design matrix $\mA$. We have shown in Sections \ref{sec:ist}
and \ref{sec:dalprox} that IST and DAL can be considered as two
different strategies to remove this coupling.

Recently many studies have focused on methods that iteratively
compute the proximal operation 
\eqref{eq:proximation}~\citep{FigNow03,DauDefMol04,ComWaj05,Nes07,BecTeb09,CaiCanShe08},
which can be described in an abstract manner as follows:
\begin{align}
\label{eq:proxupdate}
 \vw^{t+1}
&=\prox{\phi_{\lambda_{t}}}\left(\vy^{t}\right),
\end{align}
where $\prox{\phi_{\lambda}}$ is the proximal operator defined in
\Eqref{eq:proximation}. The above mentioned studies can be
differentiated by the different $\vy^{t}$ and $\lambda_t$ that they
use. 

For example, the IST approach (also known as the {\em forward-backward
splitting}~\citep{LioMer79,ComWaj05,DucSin09}) can be described as follows:
\begin{align*}
 \vy^t&:=\vw^t-\eta_t\mA\T\nabla f_{\ell}(\mA\vw^t),\\
\lambda_t&:=\lambda\eta_t.
\end{align*}
What we need to do at every iteration is only to compute the gradient at the
 current point, take a gradient step, and then perform the
proximal operation (\Eqref{eq:proxupdate}). Note that $\eta_t$ can be
considered as a step-size.

The IST method can be considered as a generalization of the projected
gradient method. Since the proximal gradient
step~\eqref{eq:istupdatepre} reduces to an 
ordinary gradient step when $\phi_{\lambda}=0$, the basic idea behind
IST is to keep the non-smooth term $\phi_{\lambda}$ as a part of the
proximity step~(see \cite{Lan10}). Consequently, the convergence
behaviour of IST is the same as that of (projected) gradient descent on the
differentiable loss term. Note that \cite{DucSin09} analyze the
case where the loss term is also non-differentiable in both batch and
online learning settings. \cite{LanLiZha09} also analyze the online
setting with a more general threshold operation.

IST  approach maintains sparsity of $\vw^t$ throughout the optimization,
which results in significant reduction of computational cost; this is
an advantage of  iterative proximation methods compared to
interior-point methods (e.g., \cite{KohKimBoy07}), because the solution
produced by interior-point methods becomes sparse only in an
asymptotic sense; see \cite{BoydBook}. 

The downside of the IST approach is the difficulty to choose the
step-size parameter $\eta_t$; this issue is especially
problematic when the design matrix $\mA$ is poorly conditioned.
In addition, the best known convergence rate of a
naive IST approach is $O(1/k)$~\citep{BecTeb09}, which means that the
number of iterations $k$ that we need to obtain a solution $\vw^k$ such
that $f(\vw^k)-f(\vw^\ast)\leq\epsilon$ grows linearly with
$1/\epsilon$, where $f(\vw^\ast)$ is the minimal value of
\Eqref{eq:problem}. 

SpaRSA~\citep{WriNowFig09} uses approximate second order curvature information 
for the selection of the step-size parameter
$\eta_t$. TwIST~\citep{BioFig07} is a
``two-step'' approach that tries to alleviate the poor efficiency of IST
when the design matrix is poorly conditioned. However the convergence
rates of SpaRSA and TwIST are unknown.

Accelerating strategies that use different choices of $\vy^t$ have been
proposed in \cite{Nes07} and \cite{BecTeb09} (denoted AG in
Tab.~\ref{tab:comparison}), which have $O(1/k^2)$ 
guarantee with almost the same computational cost per iteration; see
also \cite{Lan10}.

DAL can be considered as a new member of the family of iterative
proximation algorithms. We have qualitatively shown in \Secref{sec:dalprox}
 that DAL constructs a better lower bound of the loss term than
IST. Moreover, we have rigorously studied the convergence rate of
DAL and have shown that it converges super-linearly. Of course the fast
convergence of DAL comes with the increased cost per
iteration. Nevertheless, as we have qualitatively discussed in \Secref{sec:instances}, this increase is mild, because the sparsity of
intermediate solutions can be effectively exploited in the inner
minimization. We empirically compare DAL with other methods in \Secref{sec:results}.

There is of course an issue on how much one should precisely optimize
when the training error (plus the regularization term) is a crude
approximation of the generalization error~\citep{ShaSre08}. However the
reason we use sparse regularization is exactly that we are not only
interested in the predictive power. We argue that when we are using
sparse methods to gain insights into some problem, it is important that
we are sure that we are doing what we write in our paper (e.g., ``solve an
$\ell_1$-regularized minimization problem''), and someone else can
reliably recover the same sparsity pattern using any optimization
approach that employs some objective stopping criterion such as the duality
gap. Of course the stability of 
the optimal solution itself must be analyzed (see
\cite{ZhaYu06,MeiBue06}) and the trade-off between accuracy and sparsity
should be discussed. However, this is beyond the scope of this paper.

\section{Empirical results}
\label{sec:results}
In this section, we confirm the super-linear convergence of DAL
algorithm and compare it with other algorithms on $\ell_1$-regularized
logistic regression problems. The algorithms that we compare are
FISTA~\citep{BecTeb09}, OWLQN~\citep{AndGao07}, SpaRSA~\citep{WriNowFig09},
IRS~\citep{FigBioNow07}, and L1\_LOGREG~\citep{KohKimBoy07}. Note that
IST is not included because SpaRSA and FISTA are shown to clearly outperform the naive IST approach. We describe the logistic regression
problem and the implementation of all of the methods in
\Secref{sec:impl}. The 
synthetic experiments are presented in \Secref{sec:synth} and the
benchmark experiments are presented in \Secref{sec:bench}.

\subsection{Implementation}
\label{sec:impl}
In this subsection, we first describe the problem to be solved
and then explain the implementation of the above mentioned algorithms in
detail. 

For all algorithms except for IRS, the initial solution $\vw^0$ was set
to an all zero vector. For IRS, the initial solution was sampled from an
independent standard Gaussian distribution.

The CPU time was
measured on a Linux server with two 3.1 GHz Opteron Processors and 32GB of
RAM.

\subsubsection{$\ell_1$-regularized logistic regression}
\label{sec:lrl1}
 The logistic regression model is defined by the 
loss function
\begin{align}
\label{eq:loss_logistic}
 f_{\rm LR}(\vz)&=\sum_{i=1}^m\log(1+e^{-y_iz_i}),
\intertext{where $y_i\in\{-1,+1\}$ is a training label. The conjugate of
 the loss function can be obtained as follows:}
 f_{\rm LR}^\ast(-\valpha)&=\sum_{i=1}^m\left(\alpha_iy_i\log(\alpha_iy_i)+(1-\alpha_iy_i)\log(1-\alpha_iy_i)\right).\nonumber
\end{align}
Rewriting the dual problem \eqref{eq:dual_prob} we have the following
expression:
\begin{align}
\label{eq:L1_dual_obj}
 \maximize_{\valpha\in\RR^m} \qquad & -f_{\rm LR}^{\ast}(-\valpha),\\
\label{eq:L1_dual_const}
\subjectto \qquad & \|\mA\T\valpha\|_{\infty}\leq\lambda,
\end{align}
where $\|\vy\|_{\infty}=\max_{j=1,\ldots,n}|y_j|$ is the
$\ell_\infty$-norm; note that the
implicit constraint in \Eqref{eq:dual_prob} (through the indicator
function $\delta_{\lambda}^{\infty}$) is made explicit in \Eqref{eq:L1_dual_const}.

For the experiments in this section, we reparametrize the
regularization constant $\lambda$ as
$\lambda=\bar{\lambda}\|\mA\T\vy\|_{\infty}$. The reason for this
reparametrization is that for all $\bar{\lambda}\geq
0.5$ the solution $\vw$ can be shown to be zero;
thus we can measure the strength of the regularization relative to the
problem using $\bar{\lambda}$ instead of $\lambda$.
This is because the conjugate loss function $f_{\rm LR}^\ast$ takes the
minimum at $\alpha_i=y_i/2$ and the minimum is attained for
$\lambda\geq \|\mA\T(\vy/2)\|_{\infty}$ (see \Eqref{eq:L1_dual_const}).

\subsubsection{Duality gap}
\label{sec:gap}
We used the relative duality gap (RDG) as a stopping criterion with
tolerance $10^{-3}$. More specifically, we terminated all the algorithms
described below when RDG fell below $10^{-3}$. RDG was
computed as follows for all algorithms except L1\_LOGREG. For
L1\_LOGREG, we modified the stopping criterion implemented in the
original code by the authors from absolute duality gap to relative
duality gap. See also \cite{KohKimBoy07,WriNowFig09,TomSug09}. 

Let $\bar{\valpha}^t$ be any candidate dual vector at $t$th
iteration. For example, $\bar{\valpha}^t=\valpha^{t}$ for DAL and
$\bar{\valpha}^t=-\nabla f_{\ell}(\mA\vw^{t+1})$ for OWLQN, SpaRSA, and
IRS. Note that the above $\bar{\valpha}^t$ does not necessarily satisfy
the dual constraint \eqref{eq:L1_dual_const}. Thus we define
$\tilde{\valpha}^t=\bar{\valpha}^t\min(1,\lambda/\|\mA\T\bar{\valpha}^t\|_{\infty})$.
 Notice that $\|\mA\T\tilde{\valpha}^t\|_{\infty}\leq\lambda$ by
 construction. We compute the dual objective value as
$d(\vw^{t+1})=-f_{\ell}^{\ast}(-\tilde{\valpha}^t)$; see \Eqref{eq:L1_dual_obj}. Finally RDG$^{t+1}$
is obtained as RDG$^{t+1}=(f(\vw^{t+1})-d(\vw^{t+1}))/f(\vw^{t+1})$,
where $f$ is the primal objective function defined in \Eqref{eq:problem}.

The norm of the minimum norm subgradient is also frequently used
as a stopping criterion. However, there are two reasons for using
RDG instead. First, the gradient at the current point is not evaluated in
FISTA~\citep{BecTeb09}  and it requires additional computation, whereas
the vector $\valpha^t$ in the computation of RDG does not need to be the
gradient  at the current point; in fact the gradient at any point (or
any $m$-dimensional vector) gives a valid lower bound of the minimum
objective value. Second, 
since the gradient can change discontinuously at non differentiable
points, the norm of gradient does not reflect the distance from the
solution well; this is a problem for e.g., an interior-point method, 
because it produces a sparse solution only asymptotically.

\subsubsection{DAL}
DAL algorithm is implemented in MATLAB\footnote{The software is
available from \texttt{http://www.ibis.t.u-tokyo.ac.jp/ryotat/dal/}.}.
The inner minimization problem (see \Eqref{eq:dalL1inner}) is solved with 
Newton's method; we used the preconditioned conjugate gradient (PCG) method for
solving the associated Newton system (\texttt{pcg} function in
MATLAB); we use the diagonal elements of the Hessian matrix (see
\Eqref{eq:ALderiv2_l1}) as the preconditioner. The inner minimization is
terminated by the criterion \eqref{eq:dalstopcond} with  $\gamma=4$, because 
 the Hessian of the loss function \eqref{eq:loss_logistic} is
uniformly bounded by $1/4$ (see Table~\ref{tab:loss}).

 We chose the initial proximity parameter to be either
 $\eta_0=0.01/\lambda$ (conservative setting) or $\eta_0=1/\lambda$
 (aggressive setting) and increased $\eta_t$
 by a factor of 2 at every iteration.
Since $\eta_t$ appears in the soft-thresholding operation multiplied by
$\lambda$, it seems to be intuitive to choose $\eta_t$ inversely
proportional to $\lambda$ but we do not have a formal argument yet. 
We empirically discuss the choice of $\eta_0$ in more detail in
\Secref{sec:etachoice}. 

The algorithm was terminated when the RDG fell below $10^{-3}$.

\subsubsection{DAL-B}
\label{sec:dal-b}
DAL-B is a variant of DAL with an unregularized bias term (see update equations
 \eqref{eq:dalL1update_w_bias}-\eqref{eq:dalL1inner_bias}). 
This algorithm is included because L1\_LOGREG implemented by
 \cite{KohKimBoy07} estimates a bias term and therefore cannot be
 directly compared to DAL.

As an augmented Lagrangian algorithm, DAL-B solves the following dual
problem:
\begin{align}
\label{eq:L1-b_dual_obj}
 \maximize_{\valpha\in\RR^m} \qquad & -f_{\rm LR}^{\ast}(-\valpha)-\delta_{\lambda}^{\infty}(\vv),\\
\label{eq:L1-b_dual_const_1}
\subjectto \qquad & \mA\T\valpha=\vv,\\
\label{eq:L1-b_dual_const_2}
 &\bm{1}\T\valpha=0.
\end{align}
See also \Eqsref{eq:dual_prob} and \eqref{eq:dual_const}.

When implementing DAL-B, we noticed that sometimes the algorithm gets
stuck in a plateau where the additional equality constraint
\eqref{eq:L1-b_dual_const_2} improves very little. This was more likely
to happen when the condition of the design matrix was poor.

In order to avoid this undesirable slow-down, we heuristically adapt
the proximity parameter $\eta_t$ for the equality constraint
\eqref{eq:L1-b_dual_const_2}. Note that this kind of modification cannot
improve the theoretical convergence result without additional prior
information.
More specifically, we use proximity parameters $\eta_t^{(1)}$ and $\eta_t^{(2)}$
for equality constraints \eqref{eq:L1-b_dual_const_1} and
\eqref{eq:L1-b_dual_const_2}, respectively. The AL function
\eqref{eq:dalL1inner_bias} is rewritten as follows
\begin{align*}
 \valpha^t&=\argmin_{\valpha\in\RR^m}\left(
f_{\ell}^\ast(-\valpha)+\frac{1}{2\eta_t^{(1)}}\|\ST{\lambda\eta_t}(\vw^t+\eta_t\mA\T\valpha)\|^2+\frac{1}{2\eta_t^{(2)}}(b^t+\eta_t\bm{1}_m\T\valpha)^2\right).
\end{align*}
First we initialize $\eta_0^{(1)}=\eta_0^{(2)}=0.01/\lambda$
(conservative setting) or $\eta_0^{(1)}=\eta_0^{(2)}=1/\lambda$
(aggressive setting) as above. 
The proximity parameter $\eta_t^{(1)}$ with respect to
\Eqref{eq:L1-b_dual_const_1} is increased by the factor 2 at
every iteration (the same as DAL). The proximity parameter $\eta_t^{(2)}$
with respect to \Eqref{eq:L1-b_dual_const_2} is increased by a larger
factor 40 if the following conditions are satisfied: 
\begin{enumerate}
 \item The iteration counter $t>1$.
 \item The violation of the equality constraint
       \eqref{eq:L1-b_dual_const_2}, namely ${\rm
       viol}^t:=|\bm{1}\T\valpha^t|$, does not sufficiently decrease;
       i.e., ${\rm viol}^t>{\rm viol}^{t-1}/2$.
 \item The violation ${\rm viol}^t$ is larger than $10^{-3}$ (the
       tolerance of optimization).
\end{enumerate}
Otherwise, $\eta_t^{(2)}$ is increased by the same factor 2 as $\eta_t^{(1)}$.

Note that the theoretical results in \Secref{sec:analysis} still holds
if we replace $\eta_t$ in \Secref{sec:analysis} by $\eta_t^{(1)}$,
because $\eta_t^{(1)}\leq \eta_t^{(2)}$; i.e., the stopping
criterion~\eqref{eq:dalstopcond} and the convergence rates simply become
more conservative.

Since DAL-B has an additional equality
constraint~\eqref{eq:L1-b_dual_const_2}. We modified the computation of
relative duality gap described above by defining the candidate dual
vector $\bar{\valpha}^t$ as  $\bar{\valpha}^{t}=\valpha^{t}-\frac{1}{m}\bm{1}_m\bm{1}_m\T\valpha^t$. 

\subsubsection{Fast Iterative Shrinkage-Thresholding Algorithm (FISTA)}
FISTA algorithm~\citep{BecTeb09} is implemented in MATLAB. The
algorithm is terminated by the same RDG criterion except that the dual
objective is evaluated at $\vy^t$ in update equation
\eqref{eq:proxupdate} instead of $\vw^{t+1}$; this approach saves
unnecessary computation of gradients.

\subsubsection{Orthant-wise limited memory quasi Newton (OWLQN)}
OWLQN algorithm~\citep{AndGao07} is also implemented in MATLAB
because we found that our MATLAB implementation was faster than the C++
implementation provided by the authors; this is because MATLAB uses
optimized linear algebra routines while authors' implementation does
not. The algorithm is terminated by the same RDG criterion as DAL.

\subsubsection{Sparse Reconstruction by Separable Approximation (SpaRSA)}
SpaRSA algorithm~\citep{WriNowFig09} is implemented in MATLAB. We
modified the code provided by the
authors\footnote{\texttt{http://www.lx.it.pt/\~{}mtf/SpaRSA/}} to handle
the logistic loss function. The algorithm is terminated by the same RDG
criterion.

\subsubsection{Iterative reweighted shrinkage (IRS)}
IRS algorithm is implemented in MATLAB. At every iteration IRS
solves a ridge-regularized logistic regression problem with the regularizer
defined in \Eqref{eq:variational_upperbound}. This problem can be
converted into  a standard $\ell_2$-regularized logistic regression with
the design matrix $\tilde{\mA}=\mA{\rm
diag}(\sqrt{\alpha_1},\ldots,\sqrt{\alpha_n})$ by reparametrizing $w_j$ to
$\tilde{w}_j=w_j/\sqrt{\alpha_j}$. The
weight $\alpha_j$ is set to $|w_j^{t}|$ before solving the problem. Thus if
any $w_j^{t}=0$, the corresponding column of $\tilde{\mA}$ becomes zero
and it can be removed from the optimization. We use the limited memory
BFGS quasi-Newton method~\citep{NocWri99} to solve each sub-problem.

\subsubsection{Interior point algorithm (L1\_LOGREG)}
L1\_LOGREG algorithm~\citep{KohKimBoy07} is implemented in C. We
modified the code provided by the
authors\footnote{\texttt{http://www.stanford.edu/\~{}boyd/l1\_logreg/}} as a
C-MEX function so that it can be called directly from MATLAB without
saving matrices into files. We used the BLAS and LAPACK libraries
provided together with MATLAB R2008b (\texttt{-lmwblas} and
\texttt{-lmwlapack} options for the \texttt{mex} command). 
L1\_LOGREG is also terminated by the RDG criterion.

Note that L1\_LOGREG also estimates  an  unregularized bias term.
DAL algorithm with a bias term (DAL-B) is included to make the
comparison easy; see \Secref{sec:dal-b}.

%Since the L1\_LOGREG algorithm also estimates an unregularized bias
%term, we have separated the comparison into two parts: comparison of DAL
%with no bias term (see \Eqsref{eq:dalL1update} and
%\eqref{eq:dalL1inner}) against FISTA, OWLQN, SpaRSA, and IRLS, and
%comparison of DAL with a bias term (see
%\Eqsref{eq:dalL1update_w_bias}-\eqref{eq:dalL1inner_bias}) against L1\_LOGREG.

%Additionally included is the DAL algorithm
%with a bias term (see update equations
%\eqref{eq:dalL1update_w_bias}-\eqref{eq:dalL1inner_bias}), which is
%necessary as a reference because L1\_LOGREG solves logistic regression
%with an unregularized bias term.

\subsection{Synthetic experiment}
\label{sec:synth}
In this subsection, we first confirm the convergence behaviour of DAL
(\Secref{sec:validation}); second we compare the scaling of various algorithms
against the size of the problem (\Secref{sec:scaling}); finally we
discuss how to choose the initial proximity parameter $\eta_0$ (\Secref{sec:etachoice}).

\subsubsection{Experimental setting}
The elements of the design matrix $\mA\in\RR^{m\times n}$ were randomly
sampled from an independent standard Gaussian distribution. The true
classifier coefficient $\vbeta$ was generated by filling randomly chosen
element (4\%) of a $n$-dimensional vector with samples from an independent
standard Gaussian 
distribution; the remaining elements of the vector were set to zero. The
training label vector $\vy$ was obtained by taking the sign of
$\mA\vbeta+0.01\vxi$, where $\vxi\in\RR^m$ was a sample from
an $m$-dimensional independent standard Gaussian distribution. The whole
procedure was repeated ten times.

\subsubsection{Empirical validation of super-linear convergence}
\label{sec:validation}
In this section, we empirically confirm the validity of the convergence results
(Theorems~\ref{thm:convergence_f}, \ref{thm:convergence_w} and \ref{thm:convergence_w_approx})
obtained in the previous section and compare the efficiency of 
DAL, FISTA, OWLQN, SpaRSA, and IRS for the number of samples $m=1,024$
and the number of parameters $n=16,384$. L1\_LOGREG is not included
because it solves a different minimization problem. We use the
regularization constant $\bar{\lambda}=0.01$. For DAL, we used the
aggressive setting ($\eta_t=1/\lambda,2/\lambda,4/\lambda,\ldots$).

First in order to obtain the true minimizer\footnote{We assume that the
minimizer is unique.} $\vw^\ast$ of \Eqref{eq:problem},
we  ran DAL algorithm to obtain a solution with high precision (RDG $<10^{-9}$). Assuming that the support of this solution is correct, we
performed one Newton step of \Eqref{eq:problem} in the subspace of
active variables. The solution $\vw^\ast$ we obtained in this way
satisfied $\|\nabla f(\vw^\ast)\|<10^{-13}$, where $\nabla f(\vw^\ast)$
is the minimum norm subgradient of $f$ at $\vw^{\ast}$. The parameter
$\sigma$ in \Eqref{eq:lowerbounded} was estimated by taking the minimum
of $(f(\vw^t)-f(\vw^\ast))/\|\vw^t-\vw^\ast\|^2$ along the trajectory
obtained by the above minimization and multiplying the minimum value by
a safety factor of $0.7$. 
In order to estimate the residual norm $\|\vw^t-\vw^{\ast}\|$, we use
bounds \eqref{eq:thm1} and \eqref{eq:thm2} with $\alpha=2$ and the 
initial residual $\|\vw^0-\vw^{\ast}\|$. The bound
\eqref{eq:convergence_f} in Theorem~\ref{thm:convergence_f} is used with
the same initial residual to estimate the reduction in the function value.

\begin{figure}[tb]
 \begin{center}
  \hspace*{-20mm}\includegraphics[width=\textwidth]{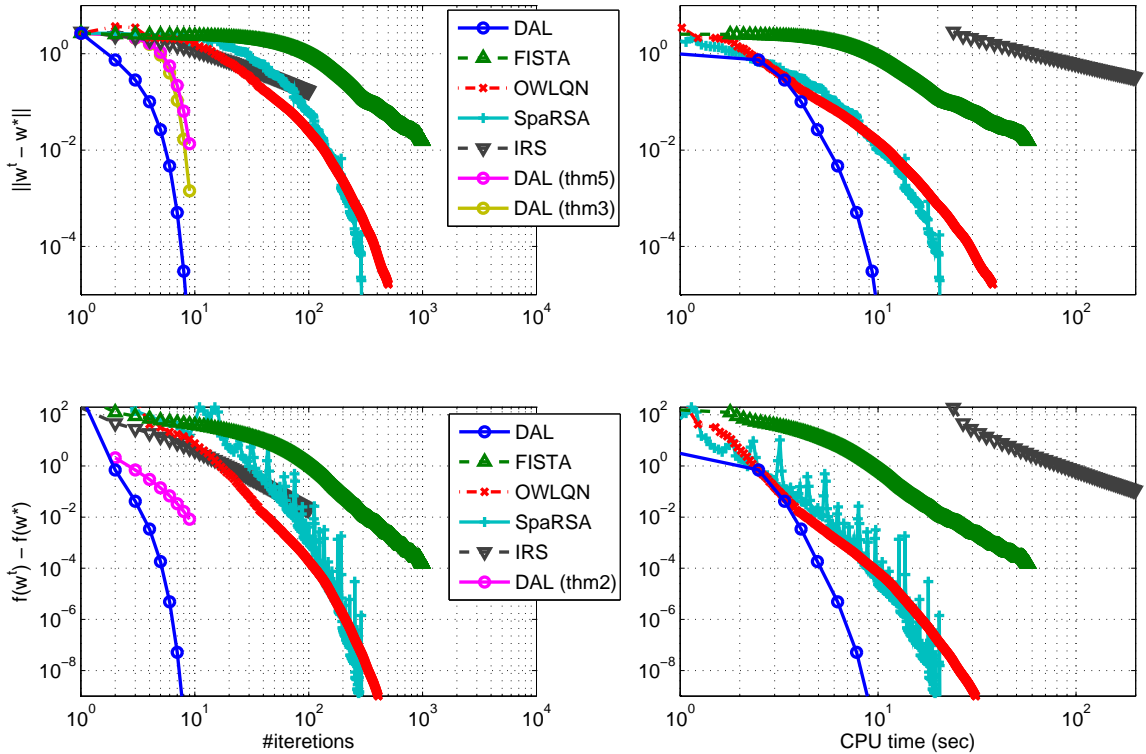}
  \caption{Empirical comparison of DAL,
  FISTA~\citep{BecTeb09}, OWLQN~\citep{AndGao07}, and
  SpaRSA~\citep{WriNowFig09}. Top left: residual norm   vs. number of 
  iterations. Also the theoretical guarantees for DAL from
  Theorems~\ref{thm:convergence_w} and \ref{thm:convergence_w_approx}
  are shown. Top right: residual norm vs. CPU time. Bottom left:
  residual in the function value vs. number of iterations. Bottom right:
  residual in the function value vs. CPU time.}
  \label{fig:iterations}
 \end{center}
\end{figure}

In \Figref{fig:iterations}, we show a result of a typical (single) run of
the algorithms described above. Note that the result is not averaged to
keep the meaning of theoretical bounds.

In the top left panel of \Figref{fig:iterations}, we can see that the
convergence in terms of the norm of the residual vector $\vw^t-\vw^\ast$
happens indeed rapidly as predicted by the theorems in
\Secref{sec:analysis}. The yellow curve shows the result of
Theorem~\ref{thm:convergence_w}, which assumes exact minimization of
\Eqref{eq:dalinner}, and the magenta curve shows the result of
Theorem~\ref{thm:convergence_w_approx}, which allows some error in the
minimization of \Eqref{eq:dalinner}. We can see that the difference 
between the optimistic analysis of Theorem~\ref{thm:convergence_w} and
the realistic analysis of Theorem~\ref{thm:convergence_w_approx} is negligible.
In this problem, in order to reach the quality of solution
DAL achieves in 10 iterations OWLQN and SpaRSA take at least 100
iterations and FISTA takes 1,000 iterations. The IRS approach required
about the same number of iterations as OWLQN and SpaRSA but each step was
much heavier than those two algorithms (see also the top right panel in
\Figref{fig:iterations}) and it was terminated after 100 iterations. 

The bottom left panel of \Figref{fig:iterations} shows comparison of five
algorithms DAL, FISTA, OWLQN, SpaRSA, and IRS in terms of the
decrease in the function value. Also plotted is the decrease in the
function value predicted by Theorem~\ref{thm:convergence_f} (magenta curve).
The convergence of DAL is the fastest also in
terms of function value. OWLQN and SpaRSA are the next after DAL and are
faster than FISTA. 

DAL needs to solve a minimization problem at
every iteration. Accordingly the operation required in each iteration is
heavier than those in FISTA, OWLQN, and SpaRSA. Thus we compare the
total CPU time spent by the 
algorithms in the right part of \Figref{fig:iterations}. It can be seen
that DAL can obtain a solution that is much more accurate in less than
10 seconds than the solution FISTA obtained after almost 60 seconds. In terms of
computation time, DAL and OWLQN seem to be on par at low
precision. However as the precision becomes higher DAL becomes clearly
faster than OWLQN. SpaRSA seems to be slightly slower than DAL and OWLQN.

\begin{figure}[tb]
 \begin{center}
  \includegraphics[width=\textwidth]{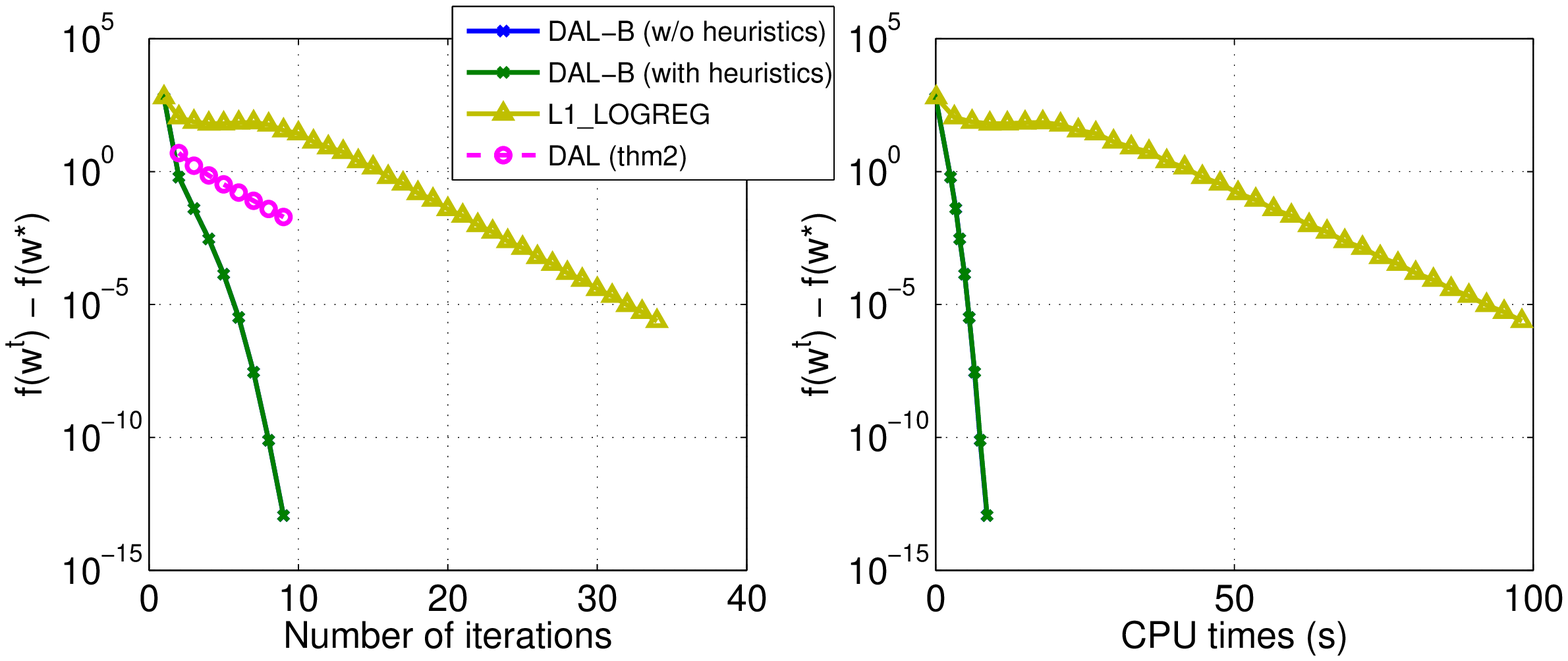}
  \caption{Comparison of DAL-B and L1\_LOGREG~\citep{KohKimBoy07}. Both
  algorithms estimate an unregularized bias term. The left panel shows
  the residual function value against the number of iterations. The
  right panel shows the same against the CPU time spent by the algorithms.}
  \label{fig:dal_l1logreg}
\end{center}
\end{figure}

Two algorithms (DAL-B and L1\_LOGREG) that also estimate an
unregularized bias term are compared in \Figref{fig:dal_l1logreg}. 
The number of observations $m=1,024$ and the number of parameters
$n=16,384$, and all other settings are identical as above.
A variant of DAL-B that does not use the heuristics described in
\Secref{sec:dal-b} is included for comparison.
For DAL-B without the heuristics, the proximity parameters $\eta_t^{(1)}$
and $\eta_t^{(2)}$ are both initialized to $1/\lambda$ and increased by
the factor 2. For DAL-B with the heuristics, the proximity parameter
$\eta_t^{(2)}$ is increased more aggressively; see \Secref{sec:dal-b}.

In the left panel in \Figref{fig:dal_l1logreg}, the residual of primal objective values of both
algorithms are plotted against the number of iterations. 
As empirically observed in \cite{KohKimBoy07}, L1\_LOGREG converges
linearly; after roughly 10 iterations, the residual function value
reduces by a factor around 2 in each iteration (a factor 1.85 was
reported in \cite{KohKimBoy07}).
The convergence of DAL-B is faster than L1\_LOGREG and the curve is
slightly concave downwards, which indicates the super-linearity of the
convergence. Note also that the linear convergence bound from
Theorem~\ref{thm:convergence_f} is shown. The heuristics described in
\Secref{sec:dal-b} shows almost no effect on this problem, probably because the
design matrix is well conditioned.

The right panel in \Figref{fig:dal_l1logreg} shows the same information
against the CPU time spent by the algorithms. DAL-B is roughly 10 times
faster than L1\_LOGREG to achieve residual less than $10^{-5}$.

\subsubsection{Scaling against the size of the problem}
\label{sec:scaling}
Here we compare how well different algorithms scale against the number
of parameters $n$. We fixed the number of samples $m$ at $m=1,024$
and varied the number of parameters from $n=4,096$ to
$n=524,288$. We used two regularization constants $\bar{\lambda}=0.1$
and $\bar{\lambda}=0.01$.

\begin{figure}[tb]
 \begin{center}
  \subfigure[$\bar{\lambda}=0.01$.]{
  \includegraphics[width=.5\textwidth]{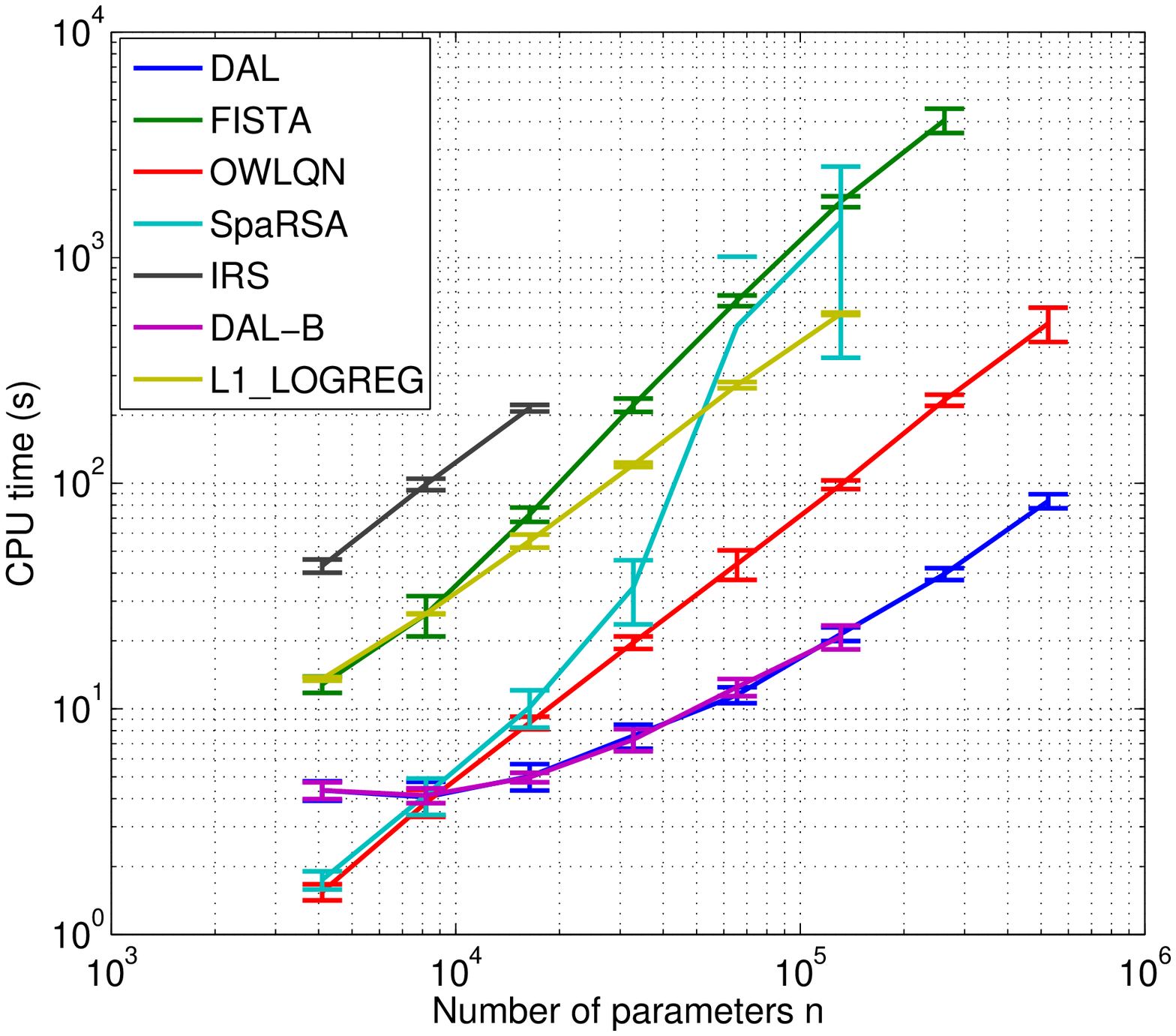}\label{fig:synthetic_0.01}}~\subfigure[$\bar{\lambda}=0.1$.]{
  \includegraphics[width=.5\textwidth]{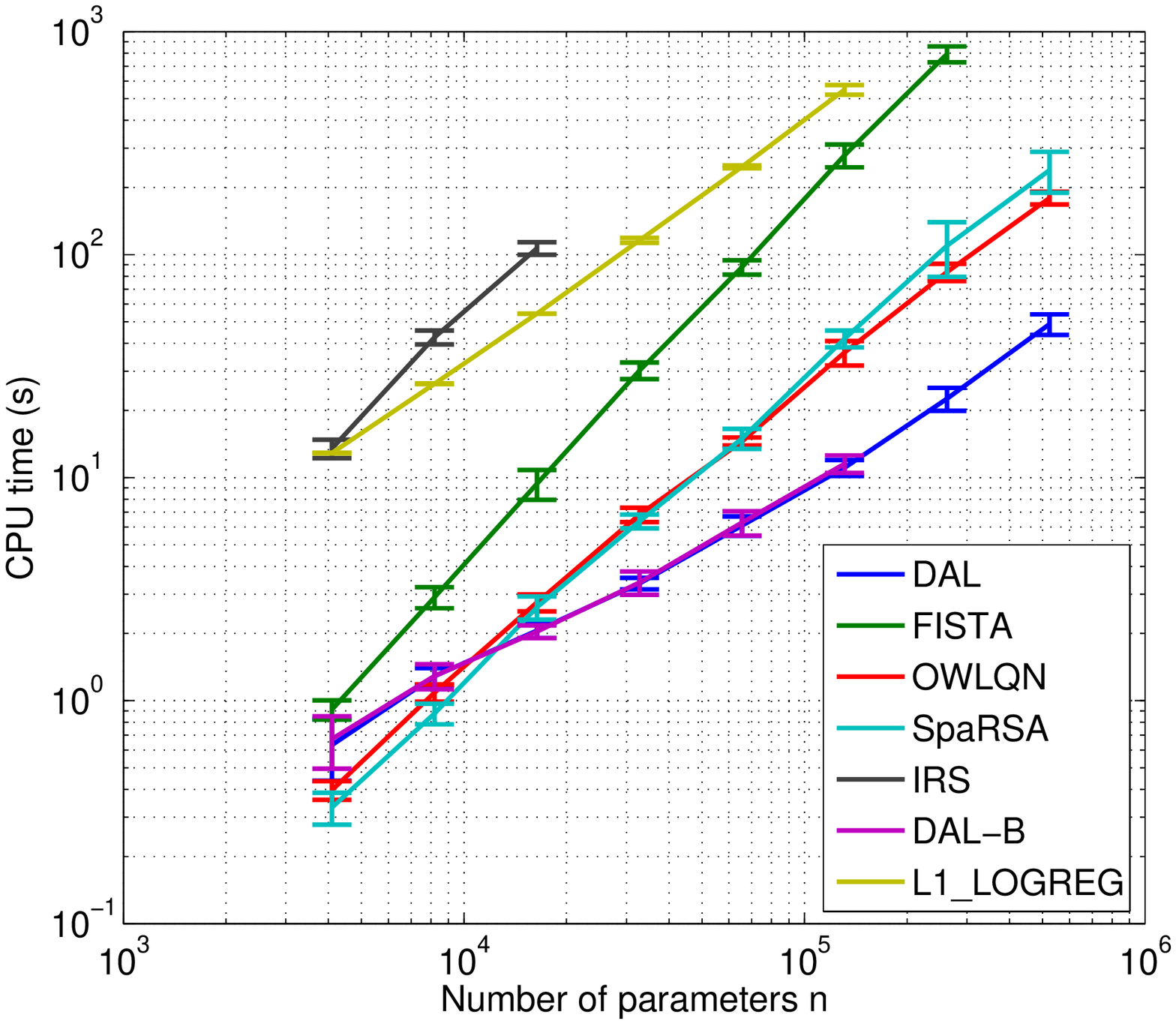}\label{fig:synthetic_0.1}}
   \end{center}
 \caption{CPU time of various algorithms on synthetic logistic
 regression problems.}
\label{fig:synthetic}
\end{figure}

The results are summarized in
\Figref{fig:synthetic}. Figures~\ref{fig:synthetic_0.01} and
\ref{fig:synthetic_0.1} show the results for $\bar{\lambda}=0.01$ and 
$\bar{\lambda}=0.1$, respectively. In each figure we plot the CPU time
spent to reach RDG$<10^{-3}$ against the number of parameters $n$.

One can see that DAL has the
mildest dependence on the number of parameters among the methods
compared. In particular, DAL is faster than other algorithms
for roughly $n> 10^{4}$. Also note that DAL and DAL-B show similar
scaling against the number of parameters; i.e., adding an unregularized
bias term has no significant influence on the computational efficiency.

For $\bar{\lambda}=0.01$, SpaRSA shows sharp increase in the CPU time
from around $n=32,\! 768$, which is similar to the result in
\cite{TomSug09} (Figure~3). Also notice the increased error-bar.
In fact, for $n\geq65,\! 536$, it had to be
stopped after 5,000 iterations in some runs, whereas it converged after
few hundred iterations in other runs.
On the other hand, SpaRSA scales similarly to
OWLQN and is more stable for $\bar{\lambda}=0.1$.

For all algorithms except L1\_LOGREG, solving the problem for larger
regularization constant $\bar{\lambda}=0.1$ requires less computation
than for $\bar{\lambda}=0.01$. Nevertheless the advantage of the DAL
algorithm is larger for the more computationally demanding situation of
$\bar{\lambda}=0.01$ against FISTA, OWLQN, SpaRSA, and IRS. On the other
hand, the advantage of DAL against L1\_LOGREG is larger for
$\bar{\lambda}=0.1$, because the CPU time of L1\_LOGREG is almost
constant in both cases. The CPU time of DAL with (DAL-B) and without
(DAL) the bias term are almost the same.

\subsubsection{Choice of $\eta_0$}
\label{sec:etachoice}
In this subsection, we show how the choice of the sequence $\eta_t$
changes the behaviour of DAL algorithm. We ran DAL algorithm for
$\bar{\lambda}=0.1$ with
$\eta_0=1/\lambda$ (as above), which we call the aggressive setting, and
$\eta_0=0.01/\lambda$, which we call the conservative setting. In both
cases, $\eta_t$ is increased by a factor of 2 as in the previous experiments.
No bias term is used.

In \Figref{fig:etabar}, plotted are the number of PCG steps for inner
minimization and the CPU time spent by DAL algorithm with the
conservative setting 
($\eta_0=0.01/\lambda$, left) and the aggressive setting
($\eta_0=1/\lambda$, right).
The average number of PCG steps and CPU time are shown as stacked bar-plots, in
which each segment of a bar corresponds to one outer iteration. One can see that in the conservative setting, DAL uses
roughly 8 to 10 outer iterations, whereas in the aggressive setting, the
number of outer iterations is reduced to less than a half (3 or 4). On
the other hand, the total number of PCG steps is only slightly smaller
in the aggressive setting. Therefore, in the aggressive setting DAL
spends more PCG steps for 
each outer iteration. It is worth noting that almost half of the PCG
iterations are spent for the first outer iteration in the aggressive setting,
whereas in the conservative setting the PCG steps are more
distributed. In terms of the CPU time, the aggressive setting is about
10--30\% faster than the conservative setting because it saves
both computation required for each outer-iteration and
inner-iteration. However, generally speaking increasing the proximity
parameter $\eta_t$ makes the condition of the problem worse; in fact we
found that the algorithm did not always converge for
$\eta_0=100/\lambda$. Thus it is not recommended to use  too large value
for $\eta_t$.

Figure~\ref{fig:etaloglog} compares the total CPU time spent by the two
variants of DAL. As discussed above, the aggressive setting
($\eta_0=1/\lambda$) is faster than the conservative setting
($\eta_0=0.01/\lambda$). However 
the difference is minor compared to the change in the proximity parameter
$\eta_0$,

\begin{figure}[tbp]
 \begin{center}
  \includegraphics[width=.7\textwidth]{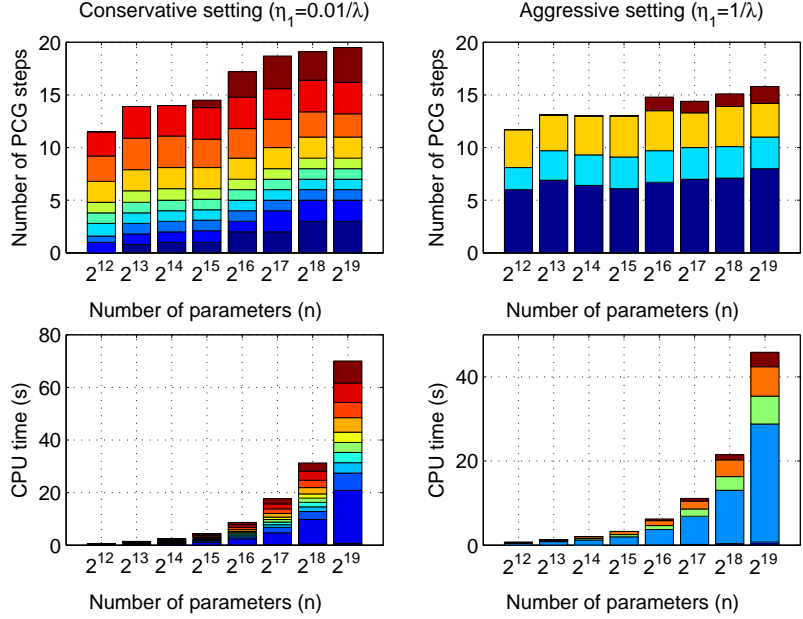}
  \caption{Comparison of behaviours of DAL algorithm for different
  choices of initial proximity parameter $\eta_0$. Left:
  $\eta_0=0.01/\lambda$ (conservative setting). Right:
  $\eta_0=1/\lambda$ (aggressive setting). On the top row, the
  cumulative numbers of PCG steps (inner steps) are shown. On the bottom
  row, the cumulative CPU time spent by the algorithm is shown.}
  \label{fig:etabar}
 \end{center}
\end{figure}

\begin{figure}[tbp]
 \begin{center}
  \includegraphics[width=.5\textwidth]{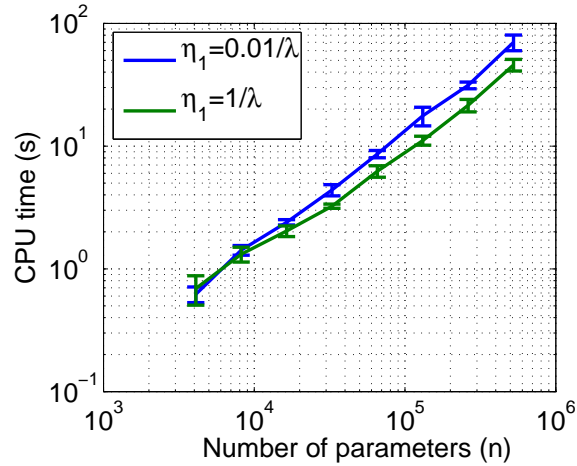}
  \caption{Comparison of conservative ($\eta_0=0.01/\lambda$) and
  aggressive ($\eta_0=1/\lambda$) choice of proximity parameter
  $\eta_0$. Note that the aggressive setting is used in
  Sections \ref{sec:validation} and \ref{sec:scaling} and the
  conservative setting is used in \Secref{sec:bench}.}
  \label{fig:etaloglog}
 \end{center}
\end{figure}

\subsection{Benchmark datasets}
\label{sec:bench}
In this subsection, we apply the algorithms discussed in the previous
subsection except IRS to benchmark datasets, and compare their efficiency
on various problems; IRS is omitted because it was clearly outperformed
by other methods on the synthetic data.

\subsubsection{Experimental setting}
The benchmark datasets we use are five datasets from NIPS 2003 Feature
Selection Challenge\footnote{The datasets are available from
\texttt{http://www.nipsfsc.ecs.soton.ac.uk/datasets/}; see
\cite{GuyGunNikZad06} for more information.}, 20 newsgroups dataset\footnote{The dataset is available from
\texttt{http://people.csail.mit.edu/jrennie/20Newsgroups/}.}, and 
a bioinformatics data\footnote{The data is available from
\texttt{http://www.plosbiology.org/article/info:doi/10.1371/journal.pbio.0030002}.}
provided by ~\cite{BarMouRioCaiStiVilWyaComGreSomMonOks04}

The five datasets from the Feature Selection Challenge ({\bf arcene,
dexter}, {\bf dorothea}, {\bf gisette}, and {\bf madelon})  are all
split into training-,  
validation-, and test-set. We combine the training- and validation-sets
and randomly split each dataset into a training-set that contains
two-thirds of the examples, and a
test-set that contains the remaining one-third. We apply the $\ell_1$-regularized logistic regression
solvers to the training-set and report the accuracy on the
test-set as well as the CPU time for training. This procedure was
repeated 10 times (also for the two other datasets below). The numbers of
training instances and features, and the format of each dataset (sparse or
dense) are summarized in \Tabref{tab:benchmark}. 

From the 20 newsgroups dataset ({\bf 20news}), we deal with the binary
classification of category ``alt.atheism'' vs. ``comp.graphics''. We use
the preprocessed MATLAB format data. The original dataset consists of
$1,061$ training examples and $707$ test examples. We again combine all
the examples and randomly split them into a training-set containing
two-thirds of the examples and a test-set containing the rest. 
The training example has $n=61,188$ features which are provided as a
sparse matrix. 

The goal in \cite{BarMouRioCaiStiVilWyaComGreSomMonOks04} is to predict
the response (good or poor) to recombinant human interferon beta
(rIFN$\beta$) treatment of multiple sclerosis patients from 
gene-expression measurements. The dataset is denoted as {\bf
gene}. The dataset consists of gene-expression
profile of 70 genes from 52 subjects. We again randomly select two-thirds
of the subjects for training and the remaining for testing. 
Following the setting in the original
paper, we used only the expression data from the beginning of the
treatment ($t=0$) and preprocessed the data by taking all the
polynomials up to third order, i.e., we compute (i) $x$, $x^2$, and $x^3$ for
each single feature $x$, (ii) $xy$, $x^2y$, and $xy^2$ for every pair of
features $(x,y)$, and (iii) $xyz$ for every triplet of features
$(x,y,z)$. As a result we obtain 62,195 $(=3\cdot 70+3\cdot
2,\! 415+54,\! 740)$ features.

In every dataset, we standardized each feature to zero mean and unit standard
deviation before 
applying the algorithms. Since the standardized design matrix
$\tilde{\mA}$ is usually dense even if the original matrix $\mA$ is
sparse, we provide function handles that compute $\tilde{\mA}\vx$ and
$\tilde{\mA}\T\vy$ instead of $\tilde{\mA}$ itself with DAL, FISTA,
OWLQN, and SpaRSA.  
 This can be 
done by keeping the vector of means and standard deviations of
the original design matrix as follows:
\begin{align*}
 \tilde{\mA}\vx &= \mA\mS^{-1}\vx-\bm{1}_m\bm{m}\T\mS^{-1}\vx,\\
 \tilde{\mA}\T\vy &= \mS^{-1}\mA\T(\vy-\frac{1}{m}\bm{1}_m\bm{1}_m\T\vy),
\end{align*}
where $\bm{m}\in\RR^n$ is the vector of means and $\mS$ is a $n\times n$
diagonal matrix that has the standard deviations of the original features
on the diagonal. If the standard deviation of any feature is zero, we
placed one in the corresponding element of $\mS$. L1\_LOGREG is
implemented with a similar technique; see \cite{KohKimBoy07}.

We compare the CPU time that is necessary to compute the whole
regularization path. In order to define the regularization path, we choose 20
log-linearly separated values from $\bar{\lambda}=0.5$ to
$\bar{\lambda}=0.001$, where $\bar{\lambda}$ is the normalized
regularization constant defined in \Secref{sec:lrl1}.
We apply a warm start strategy to all the
algorithms; i.e., we sequentially solve problems for smaller and
smaller regularization constants using the solution obtained from the
last optimization (for a larger regularization constant) as the initial
solution. 

All the methods were terminated when the relative duality gap
fell below $10^{-3}$. For DAL algorithms (DAL and DAL-B) we choose the
conservative setting, i.e.,  we initialize $\eta_0^{(1)}$ and
$\eta_0^{(2)}$ as $0.01/\lambda$.

\subsubsection{Results}

Table~\ref{tab:benchmark} summarizes the problems and the performance of
the algorithms. For each algorithm, we show the maximum test accuracy
obtained in the regularization path and the CPU time spent to compute
the whole path. The smallest and the second smallest CPU times are shown in
bold-face and italic, respectively. One can see that DAL is the fastest
in most cases when the number of parameters $n$ is larger than
the number of observations. In addition, the CPU time of two variants of
DAL (with and without the bias term) tend to be similar except {\bf
dorothea} dataset. For most datasets, the accuracy obtained by DAL
algorithm is close to FISTA, OWLQN, and SpaRSA, and the accuracy
obtained by DAL-B is close to L1\_LOGREG.

Figure~\ref{fig:dorothea} illustrates a typical situation where DAL
algorithm is efficient. Since the size of the inner minimization problem
 \eqref{eq:dalinner} is proportional to the number of observations $m$,
when $n\gg m$, DAL is more efficient than other methods that work in the
primal.

In contrast, \Figref{fig:madelon} illustrates the situation where DAL
is not very efficient compared to other algorithms. In
\Figref{fig:madelon}, we can also see that for all algorithms except
L1\_LOGREG, the cost of solving one minimization problem grows larger as
the regularization constant is reduced, whereas the cost seems almost
constant for L1\_LOGREG.

\begin{table}[tb]
\begin{center}
\caption{Results on benchmark datasets. We tested  six algorithms,
 namely, DAL, DAL-B, FISTA, OWLQN, SpaRSA, L1\_LOGREG on seven benchmark
 datasets. See main text for the
 description of the datasets. $m$ is the number of
 observations. $n$ is the number of features. For each algorithm, shown
 are the test accuracy and the CPU time spent to compute the
 regularization path with a warm-start strategy. All the numbers are
 averaged over 10 runs. Bold face numbers indicate the fastest CPU
 time. Italic numbers indicate CPU times that are within two times of
 the fastest CPU time.}
\label{tab:benchmark}
{\small 
\begin{tabular}{|c|c|r|r|r|r|r|r|r|}
\hline
\multicolumn{2}{|c|}{} &\textbf{arcene}&\textbf{dexter}&\textbf{dorothea}&\textbf{gisette}&\textbf{madelon}&\textbf{20news}&\textbf{gene}\\\hline
\multicolumn{2}{|c|}{\textbf{m}} &133 & 400 & 767 & 4667 & 1733 & 1179 & 35\\\hline
\multicolumn{2}{|c|}{\textbf{n}} &10000&20000&100000&5000&500&61188&62195\\\hline
 \multicolumn{2}{|c|}{\textbf{format}} & dense & sparse & sparse & dense &
		     dense & sparse & dense\\
\hline
\hline
\multirow{2}{1.5cm}{\textbf{DAL}} & accuracy &
70.60 & 91.75 & 93.71 & 97.70 & 61.53 & 92.84 & 82.35\\
%72.00&91.67&95.14&98.10&62.50&87.69&80.00\\
\cline{2-9}
& time (s)&
{\bf 3.47} & {\em 4.20} & 36.61 & {\em 77.02} & 16.73 & {\em 28.10} & {\em 5.56}\\\hline
%{\em 2.81}&{\em 3.07}&{\em 36.80}&130.48&15.88&{\bf 19.51}&{\em 6.88}\\\hline
\hline
\multirow{2}{1.5cm}{\textbf{DAL-B}} &accuracy &
72.54 & 92.00 & 93.05 & 97.71 & 61.43 & 92.87 & 81.18\\
%71.00&91.67&94.29&98.10&62.50&87.55&40.00\\
\cline{2-9}
& time (s)&
{\em 3.56} & {\em 4.77} & {\bf 10.60} & {\em 82.96} & 17.96 & {\em 26.31} & {\bf 5.49}\\\hline
%{\bf 2.74}&3.41&{\bf 11.41}&147.88&16.82&22.18&{\bf 6.37}\\\hline
\hline
\multirow{2}{1.5cm}{\textbf{FISTA}} & accuracy &
70.60 & 91.75 & 93.79 & 97.71 & 61.51 & 92.80 & 82.35\\
%73.00&91.67&94.86&98.10&62.50&87.69&80.00\\
\cline{2-9}
& time (s)&
25.34 & 7.24 & 284.59 & {\bf 52.19} & {\em 10.40} & {\em 27.95} & 108.27\\\hline
 %25.04&8.24&396.08&{\bf 75.49}&{\em 9.48}&28.61&371.56\\\hline
\hline
\multirow{2}{1.5cm}{\textbf{OWLQN}} & accuracy &
70.60 & 91.75 & 93.76 & 97.70 & 61.56 & 92.82 & 82.35\\
%72.00&91.67&94.86&98.10&62.50&87.69&80.00\\
\cline{2-9}
& time (s)&
17.63 & {\em 5.25} & 134.31 & {\em 70.96} & 19.08 & {\em 23.11} & 132.21\\\hline
%15.73&6.28&115.66&131.72&15.93&{\em 20.35}&265.64\\\hline
\hline
\multirow{2}{1.5cm}{\textbf{SpaRSA}}& accuracy &
70.90 & 91.75 & 93.71 & 97.70 & 61.55 & 95.14 & 78.24\\
%82.00&91.67&95.14&98.10&62.67&89.67&80.00\\
\cline{2-9}
& time (s)&
294.80 & 29.98 & 1377.20 & {\em 91.65} & {\em 10.11} & 310.96 & 1622.26\\\hline
%383.28&96.65&1470.59&165.60&10.56&215.49&2076.74\\\hline
\hline
\textbf{L1\_LOG-} & accuracy &
72.84 & 92.05 & 93.05 & 97.71 & 61.48 & 92.85 & 81.18\\
%71.00&91.67&94.29&98.10&62.67&87.55&40.00\\
\cline{2-9}
\textbf{REG}& time (s)&
8.98 & {\bf 3.39} & 109.92 & {\em 98.37} & {\bf 5.90} & {\bf 21.48} & 16.58\\\hline
%4.83&{\bf 2.68}&106.16&{\em 82.17}&{\bf 6.59}&53.69&19.27\\\hline
\end{tabular}}
\end{center}
\end{table}

\begin{figure}[tb]
 \begin{center}
  \includegraphics[width=.9\textwidth]{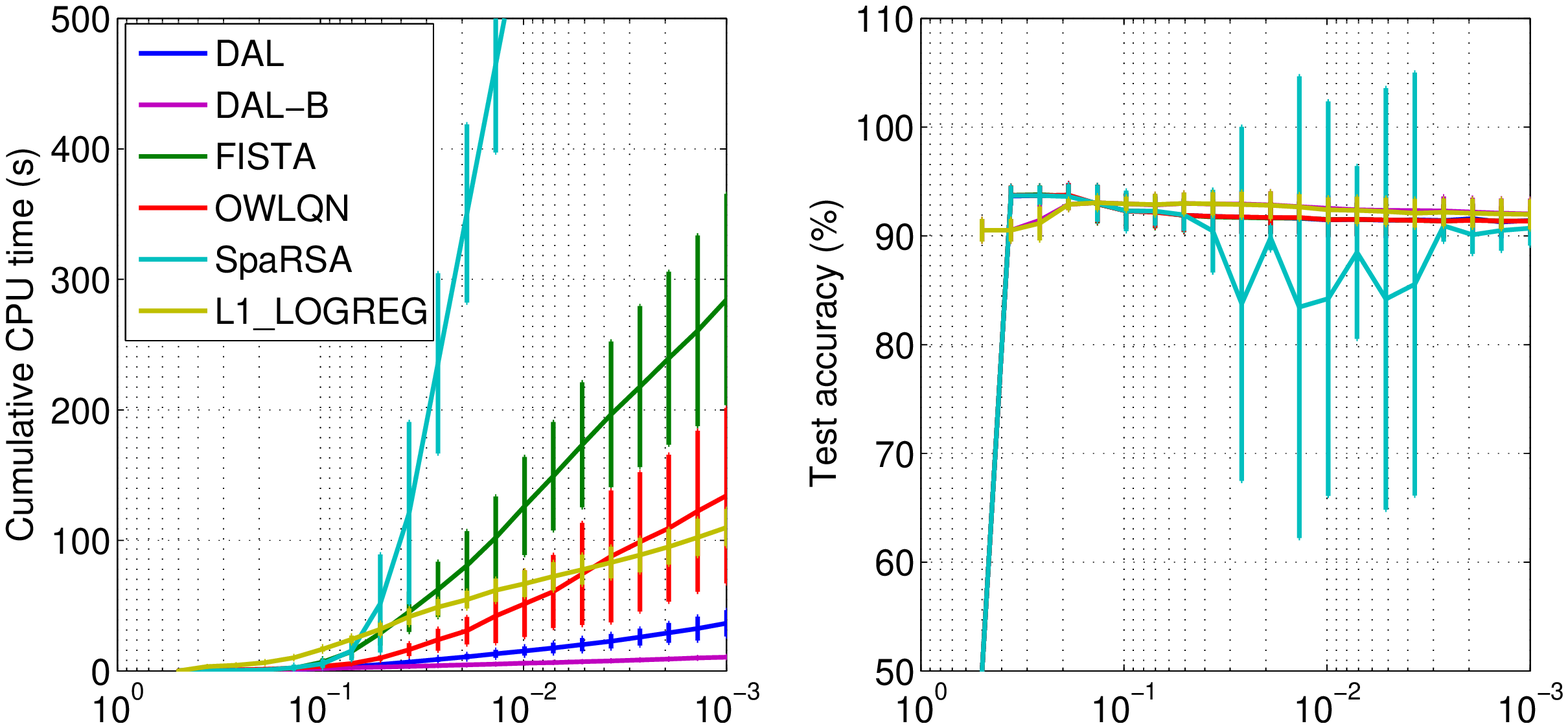}

\quad  {\small \sf Normalized regularization constant $\bar{\lambda}$}
\qquad\qquad  {\small \sf Normalized regularization constant $\bar{\lambda}$}
  \caption{{\bf Dorothea} dataset ($m=800$, $n=100,\! 000$). DAL is
  efficient in this case ($m\ll n$).}
  \label{fig:dorothea}
 \end{center}
\end{figure}

\begin{figure}[tb]
 \begin{center}
  \includegraphics[width=.9\textwidth]{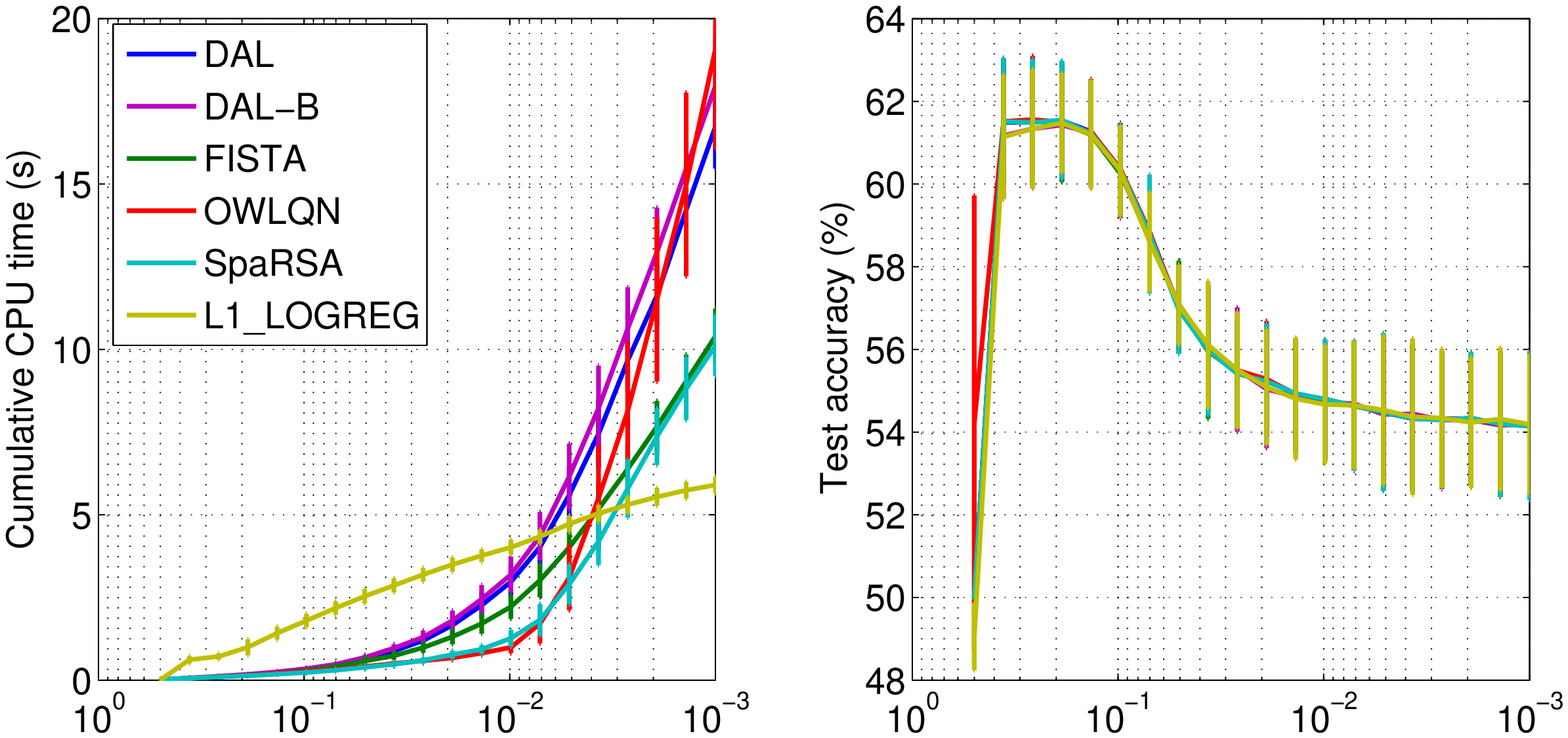}

\quad  {\small \sf Normalized regularization constant $\bar{\lambda}$}
\qquad\qquad  {\small \sf Normalized regularization constant $\bar{\lambda}$}
  \caption{{\bf Madelon} dataset ($m=2,\! 000$, $n=500$). DAL is not
  very efficient in this case ($m\gg n$).}
  \label{fig:madelon}
 \end{center}
\end{figure}

\section{Conclusion}
\label{sec:summary}
In this paper, we have extended DAL
algorithm~\citep{TomSug09} for general regularized minimization problems,
and provided it with a new view based on the
proximal minimization framework in \cite{Roc76b}. Generalizing the recent
result from \cite{BecTeb09}, we improved the convergence results on
super-linear convergence of augmented Lagrangian methods in
literature for the case of sparse estimation.

Importantly, most assumptions that we made in our analysis can be
checked independent of data.
Instead of assuming that the problem is strongly
convex we assume that the loss function has a Lipschitz continuous
gradient, which can be checked before receiving data. Another assumption
we have made is that the proximation with respect to the regularizer can
be computed analytically, which can also be checked without looking at data.
Moreover, we have shown that such assumption is valid for the
$\ell_1$-regularizer, group lasso regularizer, and any other support
function of some convex set for which the projection onto the set can be
analytically obtained.

 Compared to the general result
in \cite{Roc76b}, our result is stronger when the inner minimization
is solved approximately. Compared to
\cite{KorBer76}, we do not need to assume the strong convexity of the
objective function, which is
obviously violated for the dual of many sparsity regularized estimation
problems; instead we assume that the loss function has Lipschitz
continuous gradient. Note that we use no asymptotic arguments as in
\cite{Roc76b} and \cite{KorBer76}. Currently, our results does not
apply to primal-based augmented Lagrangian method discussed in
\cite{GolOsh08} for loss functions that are not strongly convex (e.g.,
logistic loss). The extension of our analysis to these methods is a
future work.

%  This makes DAL approach a promising direction in
% machine learning settings.
The theoretically predicted rapid convergence of DAL
algorithm is also empirically confirmed
in simulated $\ell_1$-regularized logistic regression problems.
Moreover, we have compared six recently proposed algorithms for
$\ell_1$-regularized logistic regression, namely DAL, FISTA, OWLQN,
SpaRSA, L1\_LOGREG, and IRS on synthetic and benchmark datasets.
On the synthetic datasets, we have shown that DAL has the mildest
dependence on the number of parameters among the methods
compared.
On the benchmark datasets, we have shown that DAL is the
fastest among the methods compared when the number of parameters is larger
than the number of observations on both sparse and dense datasets.

Furthermore, we have empirically investigated the relationship between
the choice of the initial proximity parameter $\eta_0$ and the number of
(inner/outer) iterations as well as the computation time. We found that
the computation can be sped up by choosing a large value for $\eta_0$;
however the improvement is often small compared to the change in
$\eta_0$ and choosing large value for $\eta_0$ can make the inner
minimization unstable by making the problem poorly conditioned.

There are basically two strategies to make an efficient optimization
algorithm. One is to use many iterations that are very light. FISTA,
SpaRSA, and OWLQN (and also stochastic approaches \citep{ShaSre08,DucSin09}) fall into this category. Theoretical convergence
guarantee is often weak for these methods, e.g., $O(1/k^2)$ for
FISTA. Another strategy is to use a small number of heavier
iterations. Interior point methods, such as L1\_LOGREG, are prominent
examples of this class. DAL can be considered as a member of the
second class. We have theoretically and empirically shown that DAL
requires a small number of outer iterations. At the same time, DAL
inherits good properties of iterative shrinkage/thresholding algorithms
from the first class. For example, it effectively uses the fact that the
proximal operation can be computed analytically, and it can maintain
the sparsity of the parameter vector during optimization.
Furthermore, we have shown that the dual formulation of DAL makes the
inner minimization efficient, because (i) typically the number of 
observations $m$ is smaller than the number of parameters~$n$, and (ii)
the gradient and Hessian of the inner objective can be computed efficiently
for sparse estimation problems.

Future work includes the extension of our analysis to the primal-based
augmented Lagrangian methods \citep{YinOshGolDar08,GolOsh08,YanZha09,LinCheWuMa09}, application
of approximate augmented Lagrangian methods and operator splitting
methods to machine learning problems (see \cite{TomSuzSug11}), and
application of DAL to more advanced sparse estimation problems (e.g.,
\cite{CaiCanShe08,WipNag08,TomHayKas10}). 

\paragraph{Acknowledgement}
We would like to thank Masakazu Kojima, Masao Fukushima, and Hisashi
Kashima for helpful discussions. This work was partially supported by
the Global COE program "The Research and Training Center for New
Development in Mathematics", MEXT KAKENHI 22700138, 22700289, and the
FIRST program. 

\appendix
\numberwithin{equation}{section}
\section{Preliminaries on proximal operation}
\label{sec:basics}
This section contains some basic results on proximal operation, which 
we use in later sections and is based on \cite{Mor65}, \cite{Roc70}, and
\cite{ComWaj05}. 

\subsection{Proximal operation}
Let $f$ be a closed proper convex function over $\RR^n$ that takes values in
$\RR\cup\{+\infty\}$. The {\em proximal operator}
with respect to $f$ is defined as follows:
\begin{align}
\label{eq:proximation_basic}
\prox{f}(\vz)=\argmin_{\vx\in\RR^n}\left(f(\vx)+\frac{1}{2}\|\vx-\vz\|^2\right).
\end{align}
Note that because of the strong convexity of the minimand in the right-hand side, the
above minimizer is unique.
Similarly we define the proximal operator with respect to the convex conjugate
function $f^\ast$ of $f$ as follows:
\begin{align*}
 \prox{f^\ast}(\vz)=\argmin_{\vx\in\RR^n}\left(f^\ast(\vx)+\frac{1}{2}\|\vx-\vz\|^2\right).
\end{align*}

The following elegant result is well known.
\begin{lemma}[Moreau's decomposition]
 \label{lem:moreau_decomp}
The proximation of a vector $\vz\in\RR^n$ with respect to a convex
 function $f$ and that with respect to its convex conjugate $f^\ast$ is
 complementary in the following sense:
\begin{align*}
 \prox{f}(\vz)+\prox{f^\ast}(\vz)&=\vz.
\end{align*}
\end{lemma}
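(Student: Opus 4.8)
The plan is to reduce the identity to the classical inversion rule between the subdifferentials of a convex function and its conjugate, so that the whole statement becomes a matter of reading off and comparing two first-order optimality conditions. First I would set $\vx:=\prox{f}(\vz)$; by the strong convexity noted just after \Eqref{eq:proximation_basic} this minimizer is unique, and since the quadratic term $\tfrac{1}{2}\|\cdot-\vz\|^2$ is differentiable, the optimality condition for the minimization defining $\prox{f}(\vz)$ reads
\begin{align*}
 \bm{0}\in\partial f(\vx)+(\vx-\vz),\qquad\text{equivalently}\qquad \vz-\vx\in\partial f(\vx).
\end{align*}

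Next I would invoke the conjugate subgradient inversion rule \citep[Theorem~23.5]{Roc70}, which states that for a closed proper convex function, $\bm{p}\in\partial f(\vx)$ holds if and only if $\vx\in\partial f^{\ast}(\bm{p})$. Applying it with $\bm{p}=\vz-\vx$ turns the condition above into $\vx\in\partial f^{\ast}(\vz-\vx)$. Writing $\vu:=\vz-\vx$, so that $\vx=\vz-\vu$, this is precisely $\vz-\vu\in\partial f^{\ast}(\vu)$, which is the first-order optimality condition for $\vu$ to minimize $f^{\ast}(\cdot)+\tfrac{1}{2}\|\cdot-\vz\|^2$. By uniqueness of the proximal point for $f^{\ast}$, I conclude $\vu=\prox{f^{\ast}}(\vz)$, and therefore $\prox{f^{\ast}}(\vz)=\vz-\prox{f}(\vz)$, which is exactly the claimed decomposition.

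The single nontrivial ingredient — and hence the only place where real content enters — is the inversion rule $\bm{p}\in\partial f(\vx)\Leftrightarrow\vx\in\partial f^{\ast}(\bm{p})$; everything else is bookkeeping with optimality conditions. If a self-contained argument were preferred, I would instead proceed variationally: using the Fenchel--Moreau theorem $f^{\ast\ast}=f$, the two proximal problems are dual to one another, and with $\vu=\vz-\vx$ the Fenchel--Young equality $f(\vx)+f^{\ast}(\vu)=\vx\T\vu$ (which characterizes $\vu\in\partial f(\vx)$) shows that the two Moreau envelope values sum to $\tfrac{1}{2}\|\vx+\vu\|^2=\tfrac{1}{2}\|\vz\|^2$, forcing the optimal points to split $\vz$ additively as $\vx+\vu=\vz$. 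Either route is short; I would write out the subgradient-inversion version in full, as it is the cleanest.
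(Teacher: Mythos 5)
Your proof is correct, and it closes the argument by a genuinely more direct route than the paper's, although both hinge on the same key fact: the subgradient inversion rule $\bm{p}\in\partial f(\vx)\Leftrightarrow\vx\in\partial f^{\ast}(\bm{p})$ \citep[Cor.~23.5.1]{Roc70}. The paper introduces \emph{both} points $\vx=\prox{f}(\vz)$ and $\vy=\prox{f^{\ast}}(\vz)$, applies the inversion rule to each optimality condition to obtain four subgradient relations, writes out the four corresponding subgradient inequalities, and sums them so that all function values cancel, leaving $0\geq 2\|\vz-\vx-\vy\|^2$ and hence $\vx+\vy=\vz$. You instead introduce only $\vx=\prox{f}(\vz)$, apply the inversion rule once to recognize $\vu:=\vz-\vx$ as satisfying the first-order (and, by convexity, sufficient) optimality condition of the \emph{second} proximal problem, and conclude by uniqueness of its minimizer. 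Your argument is shorter—no inequalities are needed at all—at the modest price of explicitly invoking sufficiency of stationarity and uniqueness of the prox, whereas the paper's symmetric summation needs only the defining inequality of a subgradient once the four relations are in hand. One caution about your sketched alternative: deriving the decomposition from the envelope identity $F(\vz)+F^{\ast}(\vz)=\frac{1}{2}\|\vz\|^2$ would invert the paper's logical order, since its Lemma~\ref{lem:decomp_envelope} is itself \emph{proved from} Lemma~\ref{lem:moreau_decomp}; as a standalone proof that route would need an independent derivation of the envelope identity. Your primary subgradient-inversion argument is the right one to write out, and it is complete as stated.
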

\begin{proof}
 The proof can be found in  \cite{Mor65} and
 \citet[Thm.~31.5]{Roc70}. Here, we present a slightly more simple proof.

Let $\vx=\prox{f}(\vz)$ and $\vy=\prox{f^\ast}(\vz)$. By definition we have
$\partial f(\vx) + \vx-\vz\ni 0$ and $\partial f^\ast(\vy)+\vy-\vz\ni 0$,
 which imply
\begin{subequations}
\begin{align}
\label{eq:decomp1a}
 \partial f(\vx) \ni \vz-\vx,\\
\label{eq:decomp1b}
 \partial f^\ast(\vz-\vx)\ni\vx,
\end{align}
\end{subequations}
and
\begin{subequations}
 \begin{align}
\label{eq:decomp2a}
 \partial f^\ast(\vy)\ni\vz-\vy,\\
\label{eq:decomp2b}
 \partial f(\vz-\vy)\ni \vy,
 \end{align}
\end{subequations}
respectively, because $(\partial f)^{-1}=\partial f^\ast$~\cite[Cor. 23.5.1]{Roc70}.

From \Eqsref{eq:decomp1a} and \eqref{eq:decomp2a}, we have
\begin{align}
\label{eq:decomp1aa}
 f(\vz-\vy)&\geq f(\vx)+(\vz-\vy-\vx)\T(\vz-\vx),\\
 f^\ast(\vz-\vx)&\geq f^\ast(\vy)+(\vz-\vx-\vy)\T(\vz-\vy).
\end{align}
Similarly, \Eqsref{eq:decomp1b} and \eqref{eq:decomp2b} give
\begin{align}
 f^\ast(\vy)&\geq f^\ast(\vz-\vx)+(\vy-\vz+\vx)\T\vx,\\
\label{eq:decomp2bb}
 f(\vx)&\geq f(\vz-\vy)+(\vx-\vz+\vy)\T\vy.
\end{align}
Summing both sides of \Eqsref{eq:decomp1aa}--\eqref{eq:decomp2bb}, we have
\begin{align*}
 0\geq 2\|\vz-\vx-\vy\|^2,
\end{align*}
from which we conclude that $\vx+\vy=\vz$.
\end{proof}

Proximal operation can be considered as a generalization of the projection
onto a convex set. For example, if we take $f$ as the indicator function of
the $\ell_\infty$ ball of radius $\lambda$, i.e.,
$f(\vz)=\delta_\lambda^{\infty}(\vz)$ (see \Eqref{eq:indicatorL1}),
then the proximal operation with respect to $f$ is the
 projection onto the $\ell_\infty$-ball  \eqref{eq:clipping}.
 On the other hand, the proximal
 operation with respect to the $\ell_1$-regularizer is the soft-thresholding
 operator \eqref{eq:softth}. Therefore, we notice that
\begin{align*}
{\rm proj}_{[-\lambda,\lambda]}(\vz)+\prox{\lambda}^{\ell_1}(\vz)=\vz,
\end{align*}
which is a special case of Lemma~\ref{lem:moreau_decomp}, because the
$\ell_1$-regularizer is the convex conjugate of the indicator function
of the $\ell_\infty$-ball of radius $\lambda$; see \Figref{fig:envelopes}.

\subsection{Moreau's envelope}
The minimum attained in \Eqref{eq:proximation_basic} is called the
Moreau envelope of $f$:
\begin{align}
\label{eq:envelope_general}
 F(\vz) &=\min_{\vx\in\RR^n}\left(f(\vx)+\frac{1}{2}\|\vx-\vz\|^2\right).
\end{align}

The decomposition in Lemma~\ref{lem:moreau_decomp} can be expressed in
terms of envelope functions as follows.
\begin{lemma}[Decomposition and envelope functions]
\label{lem:decomp_envelope}
Let $f$ and $f^{\ast}$ be a pair of convex conjugate functions, and let $F$
 and $F^\ast$ be the Moreau envelopes of $f$ and $f^{\ast}$, respectively.
Then we have
\begin{align*}
 F(\vz)+F^\ast(\vz)&=\frac{1}{2}\|\vz\|^2.
\end{align*}
\end{lemma}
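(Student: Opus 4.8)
The plan is to reduce the statement to the equality case of the Fenchel--Young inequality, exploiting the decomposition already proved in Lemma~\ref{lem:moreau_decomp}. First I would set $\vx=\prox{f}(\vz)$ and $\vy=\prox{f^\ast}(\vz)$, so that Lemma~\ref{lem:moreau_decomp} gives $\vx+\vy=\vz$; in particular $\vx-\vz=-\vy$ and $\vy-\vz=-\vx$. Substituting these minimizers back into the definition~\eqref{eq:envelope_general} of the two envelopes, the quadratic penalties simplify to $\tfrac{1}{2}\|\vy\|^2$ and $\tfrac{1}{2}\|\vx\|^2$ respectively, and I obtain
\begin{align*}
F(\vz)+F^\ast(\vz)&=f(\vx)+\tfrac{1}{2}\|\vy\|^2+f^\ast(\vy)+\tfrac{1}{2}\|\vx\|^2.
\end{align*}

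The remaining task is to identify the sum $f(\vx)+f^\ast(\vy)$ with the cross term $\vx\T\vy$ that appears when expanding $\tfrac{1}{2}\|\vz\|^2=\tfrac{1}{2}\|\vx+\vy\|^2=\tfrac{1}{2}\|\vx\|^2+\vx\T\vy+\tfrac{1}{2}\|\vy\|^2$. This is precisely the equality case of the Fenchel--Young inequality $f(\vx)+f^\ast(\vy)\geq\vx\T\vy$, which holds with equality if and only if $\vy\in\partial f(\vx)$. The needed membership is free: the first-order optimality condition for the proximal map says that, since $\vx$ minimizes $f(\cdot)+\tfrac{1}{2}\|\cdot-\vz\|^2$, we have $\vz-\vx\in\partial f(\vx)$, and $\vz-\vx=\vy$ by the decomposition. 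This is exactly relation~\eqref{eq:decomp1a} derived inside the proof of Lemma~\ref{lem:moreau_decomp}, so no new work is required to establish it.

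Combining the two observations, I would substitute $f(\vx)+f^\ast(\vy)=\vx\T\vy$ into the displayed expression and recognize the result as $\tfrac{1}{2}\|\vx\|^2+\vx\T\vy+\tfrac{1}{2}\|\vy\|^2=\tfrac{1}{2}\|\vz\|^2$, which is the claim. I do not expect a genuine obstacle: the only conceptual step is pairing the cross term with the Fenchel--Young equality, and that equality is handed to us by the optimality condition of the prox. As a sanity check (and an alternative route), one could instead use the envelope gradient formula $\nabla F(\vz)=\vz-\prox{f}(\vz)$ from Lemma~\ref{lem:moreau_deriv}: together with its counterpart for $F^\ast$ and Moreau's decomposition one gets $\nabla(F+F^\ast)(\vz)=\vz$, so $F+F^\ast-\tfrac{1}{2}\|\cdot\|^2$ is constant, and evaluating at any single convenient point pins the constant to zero. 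The direct argument above is cleaner, so I would present that.
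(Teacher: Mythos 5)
Your proof is correct and follows essentially the same route as the paper's own: both use Lemma~\ref{lem:moreau_decomp} to write $\vz=\vx+\vy$ with $\vx=\prox{f}(\vz)$, $\vy=\prox{f^\ast}(\vz)$, substitute back into the two envelopes, and invoke the Fenchel--Young equality $f(\vx)+f^\ast(\vy)=\vx\T\vy$ (justified by $\vy=\vz-\vx\in\partial f(\vx)$, exactly the paper's citation of Theorem~23.5 in \cite{Roc70}) before expanding $\tfrac{1}{2}\|\vx+\vy\|^2$. No differences worth noting.
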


\begin{proof}
Let $\vx=\prox{f}(\vz)$ and $\vy=\prox{f^\ast}(\vz)$ as in the proof of
Lemma~\ref{lem:moreau_decomp}. From the definition of convex conjugate
 $f^{\ast}$, we have
\begin{align*}
 f(\vx)+f^{\ast}(\vy)&=\vy\T\vx,
\end{align*}
because $\vy=\vz-\vx\in\partial f(\vx)$~\cite[Theorem 23.5]{Roc70}.
Therefore, we have
\begin{align*}
F(\vz)+F^{\ast}(\vz)&=f(\vx)+\frac{1}{2}\|\vy\|^2+f^{\ast}(\vy)+\frac{1}{2}\|\vx\|^2\\
&=\vy\T\vx+\frac{1}{2}\|\vy\|^2+\frac{1}{2}\|\vx\|^2\\
&=\frac{1}{2}\|\vx+\vy\|^2=\frac{1}{2}\|\vz\|^2,
\end{align*}
where we used $\vx+\vy=\vz$ from 
 Lemma~\ref{lem:moreau_decomp} in the last line. 
\end{proof}
Note that $F^\ast$ is the Moreau envelope of $f^\ast$ and {\em not} the
convex conjugate of $F$.

\begin{figure}[tb]
 \begin{center}
  \includegraphics[width=.6\textwidth]{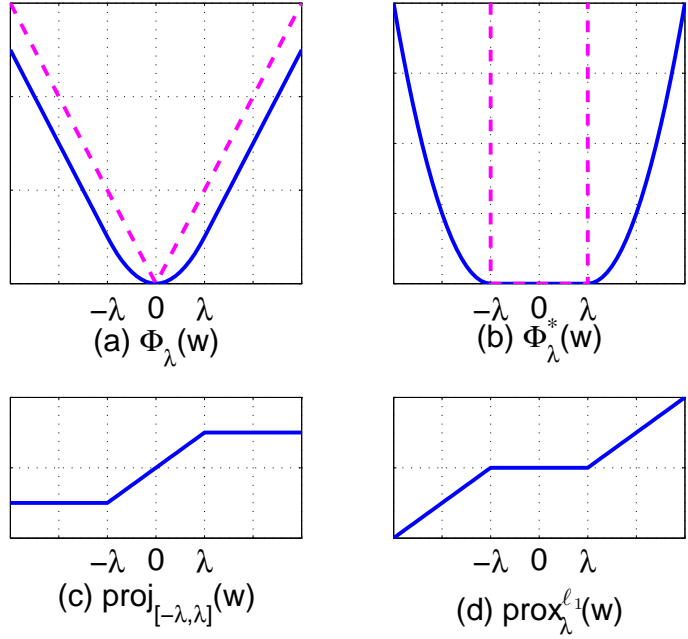}
  \caption{(a) The $\ell_1$-regularizer (dashed) and its envelope
  function $\Phi_{\lambda}$ (solid). (b) The indicator function
  $\delta_\lambda^\infty$ (dashed) and its envelope function
  $\Phi_{\lambda}^\ast$ (solid). (c) The
  derivative of $\Phi_{\lambda}$, which is the projection onto the
  interval $[-\lambda,\lambda]$; see~\Eqref{eq:clipping}. (d) The derivative of
  $\Phi_{\lambda}^{\ast}$, which is called the soft-threshold
  function~\eqref{eq:softth}. Note that the $\ell_1$-regularizer and the
  indicator function $\delta_{\lambda}^\infty$ are conjugate to each other.} 
  \label{fig:envelopes}
 \end{center}
\end{figure}

Moreau's envelope can be considered as a inf-convolution (see
\cite{Roc70}) of $f$ and a quadratic function $\|\cdot\|^2/2$. Accordingly
it is differentiable and the derivative is given in the following lemma.

\begin{lemma}[Derivative of Moreau's envelope]
\label{lem:moreau_deriv}
 Moreau's envelope function $F$ in
 \Eqref{eq:envelope_general} is continuously differentiable (even if
 $f$ is not differentiable) and the
 derivative can be written as follows:
\begin{align*}
 \nabla F(\vz) = \prox{f^\ast}(\vz).
\end{align*}
\end{lemma}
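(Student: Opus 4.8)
The plan is to identify the gradient explicitly as $\nabla F(\vz)=\vz-\prox{f}(\vz)$ and then rewrite it using Moreau's decomposition (Lemma~\ref{lem:moreau_decomp}), which gives $\vz-\prox{f}(\vz)=\prox{f^\ast}(\vz)$ at once. The whole difficulty is that $f$ may be non-differentiable, so a naive envelope/Danskin argument (differentiating $f(\vx)+\frac12\|\vx-\vz\|^2$ at the optimal $\vx$) cannot be invoked verbatim. Instead I would exploit the smoothing effect of the quadratic together with the convexity of $F$ to pin the subdifferential down to a single point.

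First I would record two preliminary facts. The map $(\vx,\vz)\mapsto f(\vx)+\tfrac12\|\vx-\vz\|^2$ is jointly convex, and partial minimization over $\vx$ preserves convexity, so $F$ is convex; moreover, by the strong convexity of the minimand the minimizer $\vx:=\prox{f}(\vz)$ exists and is unique, so $F(\vz)=f(\vx)+\tfrac12\|\vx-\vz\|^2$ with equality attained at this $\vx$.

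The key estimate is a one-sided quadratic upper bound. Since $\vx$ is feasible but generally suboptimal for the problem defining $F(\vz')$, we have $F(\vz')\le f(\vx)+\tfrac12\|\vx-\vz'\|^2$. Subtracting $F(\vz)=f(\vx)+\tfrac12\|\vx-\vz\|^2$ and expanding $\|\vx-\vz'\|^2$ around $\vz$, the $f(\vx)$ terms cancel and we obtain
\[
F(\vz')-F(\vz)\le (\vz-\vx)\T(\vz'-\vz)+\tfrac12\|\vz'-\vz\|^2\qquad(\forall\,\vz').
\]
The crucial point is that no differentiability of $f$ is used here: the cancellation of $f(\vx)$ is exactly what rescues the envelope argument.

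The final step is a squeeze between this upper bound and the convexity lower bound. For any subgradient $\vg\in\partial F(\vz)$, convexity gives $F(\vz')-F(\vz)\ge \vg\T(\vz'-\vz)$; combining with the display, setting $\vz'=\vz+t\vu$, dividing by $t$, and letting $t\downarrow 0$ forces $\vg\T\vu\le(\vz-\vx)\T\vu$ for every direction $\vu$, hence $\vg=\vz-\vx$. Thus $\partial F(\vz)=\{\vz-\vx\}$ is a singleton, so $F$ is differentiable with $\nabla F(\vz)=\vz-\prox{f}(\vz)=\prox{f^\ast}(\vz)$, the last equality being Lemma~\ref{lem:moreau_decomp}. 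Continuous differentiability then comes for free, since a finite convex function that is differentiable everywhere is automatically $C^1$ (equivalently, $\prox{f}$ is nonexpansive, so $\vz\mapsto\vz-\prox{f}(\vz)$ is continuous). I expect the main obstacle to be precisely this circumvention of the nonsmoothness of $f$: one must replace the missing smoothness by the cancellation in the upper bound and by convexity in the lower bound.
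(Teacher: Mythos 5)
Your proof is correct, and it differs from the paper's in how existence of the gradient is obtained. The paper's argument is also two-sided: it contains exactly your quadratic upper bound (same suboptimal-point cancellation trick), but instead of your abstract squeeze it first proves the lower bound $F(\vz')\geq F(\vz)+(\vz'-\vz)\T\vy$ with $\vy=\prox{f^\ast}(\vz)$, using the convexity of $f$ together with the prox optimality relation $\vy=\vz-\prox{f}(\vz)\in\partial f(\prox{f}(\vz))$; this exhibits $\vy$ as a subgradient of $F$ directly, and the upper bound then forces uniqueness. You dispense with that computation entirely and substitute general convex-analytic machinery: $F$ is convex as a partial minimum of a jointly convex function, a finite convex function has a nonempty subdifferential, and the squeeze against the upper bound pins every subgradient to $\vz-\prox{f}(\vz)$, with Moreau's decomposition (Lemma~\ref{lem:moreau_decomp}) invoked only at the very end rather than being built in from the start. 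The paper's route is more elementary and self-contained, needing nothing beyond the subgradient inequality for $f$; yours is lighter on computation (one inequality instead of two) and, unlike the paper's proof, explicitly justifies why $\nabla F$ is \emph{continuous}. One point to make airtight: concluding differentiability from ``the subdifferential is a singleton'' silently uses $\partial F(\vz)\neq\emptyset$, which is where finiteness and convexity of $F$ enter; this is standard but deserves an explicit mention (e.g., Theorems~23.4 and~25.1 of Rockafellar's book).
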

\begin{proof}
The proof can be found in \cite{Mor65} and \citet[Thm. 31.5]{Roc70}. We
 repeat the proof below for completeness. The proof consists of two
 parts. We first show that for all $\vz,\vz'\in\RR^n$
\begin{align}
\label{eq:lem8step1}
F(\vz') &\geq F(\vz)+(\vz'-\vz)\T\vy,
\intertext{where $\vy=\prox{f^\ast}(\vz)$, which implies that
 $\vy=\prox{f^\ast}(\vz)\in\partial F(\vz)$. Second, we show that}
\label{eq:lem8step2}
F(\vz') &\leq F(\vz)+(\vz'-\vz)\T\vy+\frac{\|\vz'-\vz\|^2}{2},
\end{align}
which implies the uniqueness of the subgradient of $F(\vz)$ for all $\vz$.

Inequality \eqref{eq:lem8step1} follows easily from the definition of
 the envelope function $F$ and Lemma~\ref{lem:moreau_decomp} as follows:
\begin{align*}
 F(\vz')-F(\vz)&=f(\vx')+\frac{1}{2}\|\vy'\|^2-f(\vx)-\frac{1}{2}\|\vy\|^2\\
&=\Bigl(f(\vx')-f(\vx)\Bigr)+\Bigl(\frac{1}{2}\|\vy'\|^2-\frac{1}{2}\|\vy\|^2\Bigr)\\
&\geq (\vx'-\vx)\T\vy+(\vy'-\vy)\T\vy\\
&=(\vz'-\vz)\T\vy,
\end{align*}
where $\vx=\prox{f}(\vz)$, $\vy=\prox{f^\ast}(\vz)$, and $\vx'$ and $\vy'$ are
 similarly defined. We used the convexity of $f$ with $\vy\in\partial
 f(\vx)$ and the convexity of $\|\cdot\|^2/2$ in the third line.

Second, we obtain inequality \eqref{eq:lem8step2} by upper-bounding
 $F(\vz')$ as follows:
\begin{align*}
 F(\vz')&=\min_{\vxi\in\RR^n}\Bigl(f(\vxi)+\frac{1}{2}\|\vxi-\vz'\|^2\Bigr)\\
 &\leq f(\vx)+\frac{1}{2}\|\vx-\vz'\|^2\\
 &= F(\vz)-\frac{1}{2}\|\vx-\vz\|^2+\frac{1}{2}\|\vx-\vz'\|^2\\
 &= F(\vz)-\frac{1}{2}\|\vx-\vz\|^2+\frac{1}{2}\|\vx-\vz+\vz-\vz'\|^2\\
&=F(\vz)+(\vx-\vz)\T(\vz-\vz')+\frac{1}{2}\|\vz'-\vz\|^2\\
&=F(\vz)+\vy\T(\vz'-\vz)+\frac{1}{2}\|\vz'-\vz\|^2.
\end{align*}
\end{proof}

The envelope functions of two convex functions $\phi_{\lambda}(w)=\lambda|w|$
and $\phi_{\lambda}^\ast(w)=\delta_{\lambda}^{\infty}(w)$, and their
derivatives (the projection \eqref{eq:clipping} and the
soft-threshold function \eqref{eq:softth}, respectively) are
schematically shown in \Figref{fig:envelopes}.

%\paragraph{Derivation of \Eqref{eq:ALfunc_sp}}:
%The Moreau envelope of indicator functions was discussed in
%\Secref{sec:dalsupport}. This can be expressed in terms of the proximal operator
%as follows. Let $f$ be the support function of some convex set $C$, i.e.,
%$f(\vx)=\sup_{\vy\in C}\vx\T\vy$. Since $f^\ast$ is the indicator function of
%$C$, the Moreau envelope of $f^\ast$ is obtained as 
%\begin{align*}
% F^\ast(\vz)&=\min_{\vx\in\RR^n}\left(f^\ast(\vx)+\frac{1}{2}\|\vx-\vz\|^2\right)\\
%&=f^\ast({\rm prox}_{f^{\ast}}(\vz))+\frac{1}{2}\|{\rm prox}_{f^\ast}(\vz)-\vz\|^2\\
%&=\frac{1}{2}\|{\rm prox}_{f}(\vz)\|^2,
%\end{align*}
%where we used the fact that $f^\ast$ takes value zero if it is finite
%(see \Eqref{eq:indicator}) and Lemma~\ref{lem:moreau_decomp}.

%Here we present an
% alternative proof based on Danskin's min-max
% theorem~\cite[Prop. B.25]{Ber99}. 
%
%
%
% Lemma~\ref{lem:moreau_decomp}. From Danskin's theorem, we have (note
% that the minimizer in \Eqref{eq:proximation_basic} is unique)
%\begin{align*}
% \nabla F(\vz)=\vz-\prox{f}{\vz},
%\end{align*}
%which equals $\prox{f^\ast}(\vz)$ from Lemma~\ref{lem:moreau_decomp}.

%Finally we obtain a general version of \Eqref{eq:ALderiv1} by applying
%the above Lemma to \Eqref{eq:dalinner} as follows:
%\begin{align*}
% \nabla_{\valpha}\varphi_t(\valpha)&=-\nabla_{\valpha}f_{\ell}^\ast(-\valpha)+\mA\nabla_{\vz}\left.\Phi_{\lambda\eta_t}^{\ast}(\vz)\right|_{\vz=\vw^t+\eta_t\mA\T\valpha}\\
%&=-\nabla_{\valpha}f_{\ell}^\ast(-\valpha)+\mA\prox{\phi_{\lambda\eta_t}}\!\!\left(\vw^t+\eta_t\mA\T\valpha\right)
%\end{align*}

\section{Derivation of Equation~\eqref{eq:dalinner}}
\label{sec:deriv_dalinner}
\begin{align*}
\textrm{\Eqref{eq:maxmindal}}
&=\max_{\valpha\in\RR^m}\Biggl\{-f_{\ell}^\ast(-\valpha)+\min_{\vw\in\RR^n}\left(\phi_{\lambda}(\vw)+\frac{1}{2\eta_t}\|\vw-\vw^t-\eta_t\mA\T\valpha\|^2\right)\\
&\hspace{5cm}-\frac{1}{2\eta_t}\|\vw^t+\eta_t\mA\T\valpha\|^2\Biggr\}+\frac{\|\vw^t\|^2}{2\eta_t}\\
&=\max_{\valpha\in\RR^m}\left(-f_{\ell}^\ast(-\valpha)+\frac{1}{\eta_t}\Phi_{\lambda\eta_t}(\vw^{t}+\eta_t\mA\T\valpha)-\frac{1}{2\eta_t}\|\vw^t+\eta_t\mA\T\valpha\|^2\right)+\frac{\|\vw^t\|^2}{2\eta_t}\\
&=\max_{\valpha\in\RR^m}\left(-f_{\ell}^\ast(-\valpha)-\frac{1}{\eta_t}\Phi_{\lambda\eta_t}^{\ast}(\vw^{t}+\eta_t\mA\T\valpha)\right)+\frac{\|\vw^t\|^2}{2\eta_t},
\end{align*}
where we used the definition of the Moreau envelope in the second line and
Lemma~\ref{lem:decomp_envelope} in the third line. Finally omitting the
constant term $\|\vw^t\|^2/(2\eta_t)$ in the last line and reversing the
sign we obtain \Eqref{eq:dalinner}.

\section{Proofs}
\label{sec:proofs}
\subsection{Proof of Theorem~\ref{thm:convergence_w}}
\label{sec:proof_convergence_w}
\begin{proof}
The first step of the proof is to generalize  Lemma~\ref{lem:key_exact}
in two ways: first we allow a point $\vw^{\ast}_t$ in the set of
minimizers $W^\ast$ to be chosen for each time step, and second, we introduce a
parameter $\mu$ to tighten the bound. Let $\bar{\vw}^t$ be the closest
point from $\vw^{t}$ in $W^{\ast}$, namely
 $\bar{\vw}^t:=\argmin_{\vw^{\ast}\in W^{\ast}}\|\vw^{t}-\vw^{\ast}\|$.
From the proof of Lemma~\ref{lem:key_exact}, we have
\begin{align}
%\label{eq:proof1_convexity}
\eta_t(f(\bar{\vw}^{t+1})-f({\vw}^{t+1})) 
&\geq\dot{\bar{\vw}^{t+1}-\vw^{t+1}}{\vw^t-\vw^{t+1}}\nonumber\\
&=\dot{\bar{\vw}^{t+1}-\vw^{t+1}}{\vw^t-\bar{\vw}^{t}+\bar{\vw}^{t}-\bar{\vw}^{t+1}+\bar{\vw}^{t+1}-\vw^{t+1}}\nonumber\\
&=\|\bar{\vw}^{t+1}-\vw^{t+1}\|^2+\dot{\bar{\vw}^{t+1}-\vw^{t+1}}{\vw^t-\bar{\vw}^{t}}+\dot{\bar{\vw}^{t+1}-\vw^{t+1}}{\bar{\vw}^t-\bar{\vw}^{t+1}}\nonumber\\
&\geq \|\vw^{t+1}-W^{\ast}\|^2 -\|\vw^{t+1}-W^{\ast}\|\|\vw^{t}-W^{\ast}\| \nonumber\\
&\geq \|\vw^{t+1}-W^{\ast}\|^2-\left(\frac{\mu}{2}\|\vw^{t+1}-W^{\ast}\|^2+\frac{1}{2\mu}\|\vw^{t}-W^{\ast}\|^2\right)\quad(\forall\mu>0)\nonumber\\
&=\left(1-\frac{\mu}{2}\right)\|\vw^{t+1}-W^{\ast}\|^2-\frac{1}{2\mu}\|\vw^{t}-W^{\ast}\|^2,\qquad\qquad(\star)\nonumber
\end{align}
where the last inner product
 $\dot{\bar{\vw}^{t+1}-\vw^{t+1}}{\bar{\vw}^t-\bar{\vw}^{t+1}}$ in the
 third line is non-negative because the set of minimizers $W^{\ast}$ is
 a convex set, and  $\bar{\vw}^{t+1}$ is the projection
 of $\vw^{t+1}$  onto $W^{\ast}$; see \citet[Proposition
 B.11]{Ber99}. In addition, the fifth line follows from the inequality
 of arithmetic and geometric means.

Note that by setting $\mu=1$ in $(\star)$ and $\bar{\vw}^{t}=\bar{\vw}^{t+1}$, we recover Lemma~\ref{lem:key_exact}.
Now using assumption {\bf (A1)}, we obtain
 the following expression:
\begin{align*}
 \left(2\mu-\mu^2\right)\|{\vw}^{t+1}-W^{\ast}\|^2+2\mu \sigma\eta_{t}\|\vw^{t+1}-W^{\ast}\|^\alpha&\leq \|\vw^t-W^{\ast}\|^2.
\end{align*}
Maximizing the left hand side with respect to $\mu$, we have
 $\mu=1+\sigma\eta_{t}\|{\vw}^{t+1}-W^{\ast}\|^{\alpha-2}$ and accordingly,
\begin{align}
\left(1+\sigma\eta_{t}\|{\vw}^{t+1}-W^{\ast}\|^{\alpha-2}\right)^2\|{\vw}^{t+1}-W^{\ast}\|^2&\leq \|\vw^t-W^{\ast}\|^2.\nonumber
\intertext{Taking the square-root of both sides we obtain}
\label{eq:thm1-1}
\|{\vw}^{t+1}-W^{\ast}\| +\sigma\eta_{t} \|{\vw}^{t+1}-W^{\ast}\|^{\alpha-1}&\leq \|\vw^{t}-W^{\ast}\|.
\end{align}
The last part of the theorem is obtained by lower-bounding the left-hand
 side of the  above inequality using Young's inequality as follows:
\begin{align*}
&\|\vw^{t+1}-W^{\ast}\| +\sigma\eta_{t}\|\vw^{t+1}-W^{\ast}\|^{\alpha-1}\\
 &=(1+\sigma\eta_{t})\left(\frac{1}{1+\sigma\eta_{t}}\|\vw^{t+1}-W^{\ast}\|+\frac{\sigma\eta_{t}}{1+\sigma\eta_{t}}\|\vw^{t+1}-W^{\ast}\|^{\alpha-1}\right)\\
&\geq(1+\sigma\eta_{t})\|\vw^{t+1}-W^{\ast}\|^{\frac{1}{1+\sigma\eta_{t}}}\cdot \|\vw^{t+1}-W^{\ast}\|^{\frac{(\alpha-1)\sigma\eta_{t}}{1+\sigma\eta_{t}}}\\
&=(1+\sigma\eta_{t})\|\vw^{t+1}-W^{\ast}\|^{\frac{1+(\alpha-1)\sigma\eta_{t}}{1+\sigma\eta_{t}}}.
\end{align*}
Substituting this relation back into inequality \eqref{eq:thm1-1}
completes the proof of the theorem.
\end{proof}

\subsection{Proof of Lemma~\ref{lem:key_approx}}
\label{sec:proof_key_approx}
\begin{proof}
First let us define $\vdelta^t\in\RR^m$ as the gradient of the AL
 function \eqref{eq:ALderiv1} at the approximate minimizer $\valpha^t$ follows:
\begin{align*}
 \vdelta^t:=\nabla \varphi_t(\valpha^t)=-\nabla f_{\ell}^\ast(-\valpha^t)+\mA\vw^{t+1},
\end{align*}
where $\vw^{t+1}:=\prox{\phi_{\lambda\eta_t}}(\vw^t+\eta_t\mA\T\valpha^t)$. Note that
 $\|\vdelta^t\|\leq\sqrt{\frac{\gamma}{\eta_t}}\|\vw^{t+1}-\vw^{t}\|$ from
assumption~{\bf (A4)}. Using  Cor. 23.5.1 from \cite{Roc70}, we have
\begin{align}
\label{eq:proof2_loss_grad}
\nabla f_{\ell}(\mA\vw^{t+1}-\vdelta^t)=\nabla f_{\ell}\left(\nabla f_{\ell}^\ast(-\valpha^t)\right)=-\valpha^t,
\end{align}
which implies that if $\vdelta^t$ is small, $-\valpha^t$ is
approximately the  gradient of  the loss term at $\vw^{t+1}$.

Moreover, let $\vq^t=\vw^t+\eta_t\mA\T\valpha^t$. Since
 $\vw^{t+1}=\prox{\phi_{\lambda\eta_t}}(\vq^t)$ (assumption {\bf (A3)}), we have 
\begin{align}
&\partial\phi_{\lambda\eta_t}(\vw^{t+1})+(\vw^{t+1}-\vq^t)\ni 0,\nonumber
\intertext{which implies}
\label{eq:proof2_prox_subgrad}
&(\vq^t-\vw^{t+1})/\eta_t\in\partial \phi_{\lambda}(\vw^{t+1}),
\end{align}
 because $\phi_{\lambda\eta_t}=\eta_t\phi_{\lambda}$.

Now we are ready to derive an analogue of inequality
 \eqref{eq:proof1_convexity}  in the proof of
 Lemma~\ref{lem:key_exact}. Let $\vw\in\RR^n$ be an arbitrary vector. We can
 decompose the residual value in the left hand side of
 inequality~\eqref{eq:proof1_convexity} as follows:
\begin{align*}
 \eta_t(f(\vw)-f(\vw^{t+1}))&=\eta_t(\underbrace{f_{\ell}(\mA\vw)-f_{\ell}(\mA\vw^{t+1}-\vdelta^t)}_{\rm
 (A)})\\
&\qquad+\eta_t(\underbrace{f_{\ell}(\mA\vw^{t+1}-\vdelta^t)-f_{\ell}(\mA\vw^{t+1})}_{\rm
 (B)})\\
&\qquad+\eta_t(\underbrace{\phi_{\lambda}(\vw)-\phi_{\lambda}(\vw^{t+1})}_{\rm
 (C)}).
\end{align*}
The above terms (A), (B), and (C) can be separately bounded using the
 convexity of $f_{\ell}$ and $\phi_\lambda$ as follows:
\begin{flalign*}
{\rm (A)}: && f_{\ell}(\mA\vw)-f_{\ell}(\mA\vw^{t+1}-\vdelta^t)
 &\geq\dot{\mA(\vw-\vw^{t+1})+\vdelta^t}{-\valpha^t},&&\\
{\rm (B)}:  &&
 f_{\ell}(\mA\vw^{t+1}-\vdelta^t)-f_{\ell}(\mA\vw^{t+1})&\geq-\dot{\vdelta^t}{-\valpha^t}-\frac{1}{2\gamma}\|\vdelta^t\|^2,&&\\
 {\rm (C)}: && \phi_{\lambda}(\vw)-\phi_{\lambda}(\vw^{t+1})&\geq
\dot{\vw-\vw^{t+1}}{\frac{\vw^t+\eta_t\mA\T\valpha^t-\vw^{t+1}}{\eta_t}},&&
\end{flalign*}
where (A) is due to \Eqref{eq:proof2_loss_grad}, (B) is due to
 assumption {\bf (A2)} and \citet[Theorem X.4.2.2]{HirLem93},
 and (C) is due to \Eqref{eq:proof2_prox_subgrad}.
Combining (A), (B), and (C), we have the following expression:
\begin{align*}
 \eta_t(f(\vw)-f(\vw^{t+1}))&\geq \dot{\vw^t-\vw^{t+1}}{\vw-\vw^{t+1}}-\frac{\eta_t}{2\gamma}\|\vdelta^t\|^2.
\end{align*}
Note that the above inequality reduces to
 \Eqref{eq:proof1_convexity} if $\|\vdelta_t\|=0$ (exact minimization). Using
 assumption {\bf (A4)}, we obtain 
\begin{align*}
\eta_t(f(\vw)-f(\vw^{t+1}))&\geq\dot{\vw^t-\vw+\vw-\vw^{t+1}}{\vw-\vw^{t+1}}-\frac{1}{2}\|\vw^t-\vw^{t+1}\|^2\\
&=\frac{1}{2}\|\vw-\vw^{t+1}\|^2-\frac{1}{2}\|\vw-\vw^{t}\|^2,
\end{align*}
which completes the proof.
\end{proof}

\subsection{Proof of Theorem~\ref{thm:fasterrate}}
\label{sec:proof_fasterrate}
\begin{proof}
Let $\delta=(1-\epsilon)/(\sigma\eta_t)$. Note that $\delta\leq 3/4<1$
 from the assumption.
Following the proof of Lemma~\ref{lem:key_approx} with $\vw=\bar{\vw}^{t}$, we have
\begin{align*}
& \eta_t(f(\bar{\vw}^{t+1})-f(\vw^{t+1}))\\
&= \eta_t(f(\bar{\vw}^{t})-f(\vw^{t+1}))\\
&\geq\dot{\vw^t-\vw^{t+1}}{\bar{\vw}^{t}-\vw^{t+1}}-\frac{\delta}{2}\|\vw^{t+1}-\vw^{t}\|^2\\
&\geq\dot{\vw^t-\bar{\vw}^{t}+\bar{\vw}^{t}-\vw^{t+1}}{\bar{\vw}^{t}-\vw^{t+1}}-\frac{1}{2}\|\vw^{t+1}-\vw^{t}\|^2+\frac{1-\delta}{2}\|\vw^{t+1}-\vw^{t}\|^2\\
&=\frac{1}{2}\|\bar{\vw}^{t}-\vw^{t+1}\|^2-\frac{1}{2}\|\bar{\vw}^{t}-\vw^{t}\|^2+\frac{1-\delta}{2}\|\vw^{t+1}-\vw^{t}\|^2\\
 &\geq
 \frac{1}{2}\|\vw^{t+1}-W^{\ast}\|^2-\frac{1}{2}\|\vw^{t}-W^{\ast}\|^2\\
&\qquad\qquad+\frac{1-\delta}{2}\left(\|\vw^{t+1}-W^{\ast}\|^2+\|\vw^{t}-W^{\ast}\|^2+2\dot{\vw^{t+1}-\bar{\vw}^{t+1}}{\bar{\vw}^t-\vw^{t}}\right)\\
 &\geq -(1-\delta)\|\vw^{t+1}-W^{\ast}\|\|\vw^{t}-W^{\ast}\|+\left(1-\frac{\delta}{2}\right)\|\vw^{t+1}-W^{\ast}\|^2-\frac{\delta}{2}\|\vw^{t}-W^{\ast}\|^2\\
&\geq
 -(1-\delta)\left(\frac{1}{2\mu}\|\vw^{t}-W^{\ast}\|^2+\frac{\mu}{2}\|\vw^{t+1}-W^{\ast}\|^2\right)\\
&\qquad\qquad\qquad+\left(1-\frac{\delta}{2}\right)\|\vw^{t+1}-W^{\ast}\|^2-\frac{\delta}{2}\|\vw^{t}-W^{\ast}\|^2
 \quad (\forall \mu>0),
\end{align*}
where we used $\|\bar{\vw}^{t+1}-\bar{\vw}^{t}\|^2\geq 0$,
 $\dot{\vw^{t+1}-\bar{\vw}^{t+1}}{\bar{\vw}^{t+1}-\bar{\vw}^{t}}\geq 0$
 and $\dot{\bar{\vw}^{t+1}-\bar{\vw}^{t}}{\bar{\vw}^{t}-\vw^t}\geq 0$ in
 the sixth line; the seventh line
 follows from Cauchy-Schwartz inequality; the eighth line follows
 from the inequality of arithmetic and geometric means. 
% Note that the  above expression reduces to Lemma~\ref{lem:key_approx}
% if  $\delta=1$.

Applying assumption {\bf (A1)} with $\alpha=2$ to the above expression,
 we have
\begin{align*}
\frac{1-\delta}{2\mu}\|\vw^{t}-W^{\ast}\|^2&\geq -\frac{1-\delta}{2}\mu \|\vw^{t+1}-W^{\ast}\|^2
+\left({\textstyle \left(1-\frac{\delta}{2}\right)}+\sigma\eta_t\right)\|\vw^{t+1}-W^{\ast}\|^2-\frac{\delta}{2}\|\vw^{t}-W^{\ast}\|^2.
\end{align*}
Multiplying both sides with $\mu/\|\vw^{t+1}-W^{\ast}\|^2$, we have
\begin{align}
\label{eq:proof_fastrate_quad}
\frac{1-\delta}{2}\frac{\|\vw^{t}-W^{\ast}\|^2}{\|\vw^{t+1}-W^{\ast}\|^2}&\geq -\frac{1-\delta}{2}\mu^2+\left\{\left(1-\frac{\delta}{2}\right)+\sigma\eta_t-\frac{\delta}{2}\frac{\|\vw^{t}-W^{\ast}\|^2}{\|\vw^{t+1}-W^{\ast}\|^2}\right\}\mu.
\end{align}
Now we have to consider two cases depending on the sign inside the curly
 brackets. If the sign is negative or zero, we have
\begin{align}
1-\frac{\delta}{2}+\sigma\eta_t-\frac{\delta}{2}\frac{\|\vw^t-W^{\ast}\|^2}{\|\vw^{t+1}-W^{\ast}\|^2}\leq 0,\nonumber
\intertext{which implies}
\label{eq:proof_fastrate_case1}
\|\vw^{t+1}-W^{\ast}\|^2\leq\frac{\delta}{2-\delta+2\sigma\eta_t}\|\vw^{t}-W^{\ast}\|^2.
\end{align}
Since $\delta\leq 3/4$, the factor in front of $\|\vw^t-W^{\ast}\|^2$ in the
 right-hand side is clearly smaller than one. We further show that this
 factor is smaller than $1/(1+\epsilon\sigma\eta_t)^2$. First we upper
 bound $\delta$ by $1/(1+\epsilon\sigma\eta_t)$ as follows:
\begin{align*}
\delta=\frac{1-\epsilon}{\sigma\eta_t}&=\frac{(1-\epsilon)(\frac{1}{\sigma\eta_t}+\epsilon)}{1+\epsilon\sigma\eta_t}\leq \frac{(1-\epsilon)\epsilon+3/4}{1+\epsilon\sigma\eta_t}\leq\frac{1}{1+\epsilon\sigma\eta_t}.
\end{align*}
Plugging the above upper bound into inequality
 \eqref{eq:proof_fastrate_case1}, we have
\begin{align*}
 \|\vw^{t+1}-W^{\ast}\|^2&\leq \frac{\delta}{1+2\sigma\eta_t}\|\vw^{t}-W^{\ast}\|^2\\
&\leq \frac{1}{(1+\epsilon\sigma\eta_t)(1+2\sigma\eta_t)}\|\vw^{t}-W^{\ast}\|^2\leq \frac{1}{(1+\epsilon\sigma\eta_t)^2}\|\vw^{t}-W^{\ast}\|^2,
\end{align*}
which completes the proof for the first case.

If on the other hand, the term inside curly brackets is positive in
\Eqref{eq:proof_fastrate_quad}, maximizing the right-hand side of
 \Eqref{eq:proof_fastrate_quad} with respect to $\mu$ gives the
 following expression:
\begin{align*}
(1-\delta)r_t&\geq 1-\frac{\delta}{2}+\sigma\eta_t-\frac{\delta}{2}r_t^2,
\intertext{where we defined $r_t:=\|\vw^{t}-W^{\ast}\|/\|\vw^{t+1}-W^{\ast}\|$. Because $r_t>0$, the
 above inequality translates into}
r_t&\geq
 \frac{\sqrt{1+2\sigma\eta_t\delta}-1+\delta}{\delta}\\
 &\geq \frac{1+\sigma\eta_t\delta-\sigma^2\eta_t^2\delta^2-1+\delta}{\delta}\\
 &\geq 1+\sigma\eta_t(1-\sigma\eta_t\delta)\\
 &=1+\epsilon\sigma\eta_t.
\end{align*}
The second line is true because for $x\geq 0 $, $\sqrt{1+x}\geq
 1+x/2-x^2/4$; the last line follows from the definition $\delta=(1-\epsilon)/(\sigma\eta_t)$.
\end{proof}

\subsection{Proof of Theorem~\ref{thm:rockafellar_to_A1}}
\label{sec:proof_rockafellar_to_A1}
\begin{proof}
Since the minimizer is unique, we denote the minimizer by $\vw^{\ast}$
 and show that the following is true:
\begin{align*}
f(\vw)-f(\vw^{\ast})\geq\sigma\|\vw-\vw^{\ast}\|^2\qquad(\forall\vw:\|\vw-\vw^{\ast}\|\leq
 c\tau L).
\end{align*}

Using Theorem X.4.2.2 in \cite{HirLem93}, for $\|\vbeta\|\leq
\tau$, we have
\begin{align}
 f^\ast(\vbeta)&\leq f^{\ast}(0)+\vbeta\T\nabla
 f^{\ast}(0)+\frac{L}{2}\|\vbeta\|^2\nonumber\\
\label{eq:fast_upper_bound}
 &=-f(\vw^\ast)+\vbeta\T\vw^\ast+\frac{L}{2}\|\vbeta\|^2,
\end{align}
where $\vw^\ast:=\argmin_{\vw\in\RR^n}f(\vw)=\nabla f^{\ast}(0)$ and
$f^\ast(0)=-f(\vw^\ast)$.

On the other hand, we have
\begin{align*}
 f(\vw)&=\sup_{\vbeta\in\RR^n}\left(\vbeta\T\vw-f^{\ast}(\vbeta)\right)\\
&\geq
 \sup_{\|\vbeta\|\leq\tau}\left(\vbeta\T\vw-f^{\ast}(\vbeta)\right)\\
&\geq
 \sup_{\|\vbeta\|\leq\tau}\left(\vbeta\T(\vw-\vw^{\ast})-\frac{L}{2}\|\vbeta\|^2\right)+f(\vw^\ast)\\
&
\begin{cases}
= f(\vw^\ast)+ \frac{1}{2L}\|\vw-\vw^{\ast}\|^2 & \textrm{(if $c\leq 1$),}\\
\geq f(\vw^\ast)+ \frac{2c-1}{2c^2L}\|\vw-\vw^{\ast}\|^2 & \textrm{(otherwise),}
\end{cases}
\end{align*}
where we used inequality \eqref{eq:fast_upper_bound} in the third line; the last
 line follows because if  $c\leq 1$, the maximum is attained at
 $\vbeta=(\vw-\vw^{\ast})/L$, and otherwise we can lower
 bound the value at the maximum by the value at
 $\vbeta=(\vw-\vw^{\ast})/(cL)$. Combining the above two cases, we have
 Theorem~\ref{thm:rockafellar_to_A1}.
\end{proof}

\bibliography{IEEEabrv,jmlr10}

\begin{sidewaystable}[htb]
 \begin{center}
  \caption{List of loss functions and their convex conjugates. Constant
  terms are ignored. For the multi-class logit loss,
  $\vz,\valpha\in\RR^{m(c-1)}$, where $m$ is the number of
  samples and $c$ is the number of classes; $y_{ik}=1$ if the $i$th
  sample belongs to the $k$th class, and zero otherwise;
  $i(k):=(i-1)c+k$ denotes the   linear index corresponding to the $k$th
  output for the $i$th sample; 
  $\delta_{i,k}$ denotes the Kronecker delta function.}
  \label{tab:loss}
\medbreak
\hspace*{-10mm}
{\footnotesize
  \begin{tabular}{p{2.2cm}|c|c|c|c|c}
   & primal loss $f_{\ell}(\vz)$ & conjugate loss $f_{\ell}^{\ast}(-\valpha)$ &
   gradient $-\nabla f_{\ell}^{\ast}(-\valpha)$ & Hessian $\nabla^2
   f_{\ell}^{\ast}(-\valpha)$ & $\gamma$\\
\hline\hline
Squared loss & $\frac{1}{2}\sum_{i=1}^n\frac{(y_i-z_i)^2}{\sigma_i^2}$ &
$\frac{1}{2}\sum_{i=1}^n\sigma_i^2(\alpha_i-y_i)^2$ &
${\rm diag}\left(\sigma_1^2,\ldots,\sigma_m^2\right)(\valpha-\vy)$ &
		   ${\rm diag}\left(\sigma_1^2,\ldots,\sigma_m^2\right)$ & $\min\limits_i\sigma_i^2$\\
\hline
Logistic loss & $\sum_{i=1}^n\log(1+\exp(-y_iz_i))$ &
$
\begin{array}{c}
\sum_{i=1}^n\bigl((\alpha_iy_i)\log(\alpha_iy_i) \\
\qquad +(1-\alpha_iy_i)\log(1-\alpha_iy_i)\bigr)
\end{array}
$ & 
$\left(y_i\log\frac{\alpha_iy_i}{1-\alpha_iy_i}\right)_{i=1}^m$ &
${\rm diag}(\frac{1}{\alpha_iy_i(1-\alpha_iy_i)})_{i=1}^m$ & 4
\\
\hline
Hyperbolic secant likelihood~{\footnotesize \citep{HauTomNolMueKaw10}} &
$\sum_{i=1}^n\log \left(e^{y_i-z_i}+e^{-y_i+z_i}\right)$ &
$\begin{array}{c}
\frac{1}{2}\sum_{i=1}^n \bigl((1-\alpha_i)\log(1-\alpha_i)\\
\qquad +(1+\alpha_i)\log(1+\alpha_i)-2\alpha_iy_i\bigr)
\end{array}$
& $(\frac{1}{2}\log\frac{1+\alpha_i}{1-\alpha_i}-y_i)_{i=1}^m$ &
${\rm diag}(\frac{1}{2(1-\alpha_i)(1+\alpha_i)})_{i=1}^m$ & 2\\
\hline
Multi-class logit {\footnotesize \citep{TomMue10}} &
$
\begin{array}{c}
\sum_{i=1}^{m}\bigl(-\sum_{k=1}^{c-1}z_{i(k)}y_{ik}\\
\quad+\log\bigl(\sum_{k=1}^{c-1}e^{z_{i(k)}}+1\bigr)\bigr)
\end{array}$ &
$
\begin{array}{c}
\sum\limits_{i=1}^{m}\bigl(\sum\limits_{k=1}^{c-1}(y_{ik}-\alpha_{i(k)})\log(y_{ik}-\alpha_{i(k)})  \\
+(y_{ic}+\sum\limits_{k=1}^{c-1}\alpha_{i(k)})\log(y_{ic}+\sum\limits_{k=1}^{c-1}\alpha_{i(k)})\bigr)\\
(0\leq y_{ik}-\alpha_{i(k)}\leq 1\\
\quad(k=1,\ldots,c-1),\\
 0\leq y_{ic}+\sum_{k=1}^{c-1}\alpha_{i(k)}\leq 1)
\end{array}$ &
$
\begin{array}{l}
\bigl(-\log\frac{y_{ik}-\alpha_{i(k)}}{y_{ic}+\sum_{k=1}^{c-1}\alpha_{i(k)}}\bigr)_{i(k)=1}^{m(c-1)} \\
\quad (i=1,\ldots,m;\\
\quad\, k=1,\ldots,c-1)
\end{array}$ &
$
\begin{array}{l}
\Bigl(\frac{\delta_{i,j}\delta_{k,l}}{y_{ik}-\alpha_{i(k)}} \\
\, +\frac{\delta_{i,j}}{y_{ic}+\sum_{k'=1}^{c-1}\alpha_{i(k')}}\Bigr)_{i(k),j(l)=1}^{m(c-1)}\\
\quad(i,j=1,\ldots,m;\\
\quad\,k,l=1,\ldots,c-1)
\end{array}$ & 1
\end{tabular}}
\bigbreak
  \caption{List of regularizers and their corresponding proximity
  operators~\eqref{eq:proximation} and the Envelope
  function~\eqref{eq:envelope}. The operation $(\cdot)_+$ is defined as
  $(x)_+:=\max(0,x)$ and applies element-wise to a matrix.}
  \label{tab:regularizers}
  \begin{tabular}{p{3cm}|c|c|c}
Description &  Regularizer  & Proximity operator $\prox{\lambda}$ &
   Envelope function $\Phi_{\lambda}^{\ast}$ \\
\hline\hline
   $\ell_1$-regularizer {\small \citep{Tib96}} &
   $\phi_{\lambda}^{\ell_1}(\vw)=\lambda\sum_{j=1}^n|w_j|$ &
   $\prox{\lambda}^{\ell_1}(\vw)=\left((|w_j|-\lambda)_+\frac{w_j}{|w_j|}\right)_{j=1}^n$ &
$\Phi_{\lambda}^\ast(\vw)=\frac{1}{2}\sum_{j=1}^n(|w_j|-\lambda)_+^2$ \\
\hline
Group lasso {\small  \citep{YuaLin06}} &
$\phi_{\lambda}^{\Gf}(\vw)=\lambda\sum_{\gf\in\Gf}\|\vw_{\gf}\|$ &
$\prox{\lambda}^{\Gf}(\vw)=\left((\|\vw_{\gf}\|-\lambda)_+\frac{\vw_{\gf}}{\|\vw_{\gf}\|}\right)_{\gf\in\Gf}$ &
$\Phi_{\lambda}^{\ast}(\vw)=\frac{1}{2}\sum_{\gf\in\Gf}(\|\vw_{\gf}\|-\lambda)_+^2$\\
\hline
Trace norm {\small \citep{FazHinBoy01,SreRenJaa05}} &
 $\phi_{\lambda}^{\rm tr}(\vw)=\lambda\sum_{j=1}^n\sigma_j(\vw)$ &
 $\prox{\lambda}^{\rm tr}(\vw)={\rm vec}\left(\mU(\mS-\lambda)_+\mV\T\right)$ &
$\Phi_{\lambda}^\ast(\vw)=\frac{1}{2}\sum_{j=1}^{r}(\sigma_j(\vw)-\lambda)_+^2$\\
\hline
Elastic-net {\small \citep{ZouHas05,TomSuz10}} &
 $\phi_{\lambda}^{\rm
 en}(\vw)=\lambda\sum_{j=1}^n((1-\theta)|w_j|+\frac{\theta}{2}w_j^2)$ &
$\prox{\lambda}^{\rm en}(\vw)=\left(\frac{(|w_j|-\lambda(1-\theta))_+}{1+\lambda\theta}\frac{w_j}{|w_j|}\right)_{j=1}^n$ &
$\Phi_{\lambda}^{\ast}(\vw)=\frac{1+\lambda\theta}{2}\sum_{j=1}^n\left(\frac{|w_j|-\lambda(1-\theta)}{1+\lambda\theta}\right)_+^2$
  \end{tabular}
 \end{center}
\end{sidewaystable}

\end{document}